\documentclass[11pt]{article}
\usepackage[margin=1in]{geometry}
\usepackage{natbib}
\usepackage{authblk}

\usepackage[utf8]{inputenc}
\usepackage[T1]{fontenc}
\usepackage{microtype}
\usepackage{graphicx}
\usepackage{subcaption}
\usepackage{booktabs}
\usepackage{multirow}
\usepackage{nicefrac}

\usepackage{algorithm}
\usepackage{algorithmic}

\usepackage{hyperref}

\usepackage{url}
\usepackage{xcolor}

\usepackage{wrapfig}
\usepackage{caption}

\usepackage{amsmath}
\newcommand{\cA}{\mathcal{A}}
\usepackage{amssymb}
\usepackage{amsfonts}
\usepackage{mathtools}
\usepackage{amsthm}
\newcommand{\E}{\mathbb{E}}

\usepackage{tikz}
\usetikzlibrary{arrows.meta, positioning, calc, shapes.geometric, backgrounds, fit}

\usepackage[capitalize,noabbrev]{cleveref}

\usepackage{float}

\theoremstyle{plain}
\newtheorem{theorem}{Theorem}[section]
\newtheorem{proposition}[theorem]{Proposition}

\newtheorem{corollary}[theorem]{Corollary}
\theoremstyle{definition}

\newtheorem{assumption}[theorem]{Assumption}
\theoremstyle{remark}

\title{Flow Matching for Offline Reinforcement Learning \\with Discrete Actions}

\author[1]{Fairoz Nower Khan}
\author[1]{Nabuat Zaman Nahim}
\author[1]{Ruiquan Huang}
\author[2]{Haibo Yang}
\author[1]{Peizhong~Ju}
\affil[1]{Department of Computer Science, University of Kentucky}
\affil[2]{Department of Computing and Information Sciences, Rochester Institute of Technology}
\date{}

\begin{document}
\maketitle

\pagestyle{plain}

\begin{abstract}
Generative policies based on diffusion models and flow matching have shown strong promise for offline reinforcement learning (RL), but their applicability remains largely confined to continuous action spaces. To address a broader range of offline RL settings, we extend flow matching to a general framework that supports discrete action spaces with multiple objectives.
Specifically, we replace continuous flows with continuous-time Markov chains, trained using a Q-weighted flow matching objective. We then extend our design to multi-agent settings, mitigating the exponential growth of joint action spaces via a factorized conditional path. We theoretically show that, under idealized conditions, optimizing this objective recovers the optimal policy.
Extensive experiments further demonstrate that our method performs robustly across diverse settings and benchmarks, including high-dimensional control, multi-agent games, and dynamically changing preferences over multiple objectives, while outperforming traditional offline RL methods in practical multi-modal decision-making scenarios. Our discrete framework can also be applied to continuous-control problems through action quantization, providing a flexible trade-off between representational complexity and performance.
\end{abstract}

\section{Introduction}

Offline reinforcement learning (RL) has recently seen a shift toward \emph{generative policy modeling}
% where policies are represented as conditional distributions learned from static datasets rather than optimized through explicit policy gradients.
% This perspective has been 
driven by the success of diffusion models and related continuous-time generative frameworks
% , which offer expressive policy classes and improved stability in offline settings 
\citep{wang2022diffusion,chen2023offline,kang2023efficient}.
% By decoupling policy representation from direct maximization, generative approaches mitigate extrapolation error and enable principled value-based guidance.
Specifically, diffusion-based methods introduced value guidance either heuristically or via learned classifiers \citep{dhariwal2021diffusion,ho2021classifier}.
Subsequent work formalized this via energy-guided generative modeling, interpreting diffusion scores as gradients of a KL-regularized optimal policy~\citep{lu2023contrastive,zheng2023guided}

As a successor of diffusion models, flow matching models \citep{lipman2022flow,liu2022rectified} have recently been applied to offline RL in \citet{zhang2025energy}. By replacing stochastic diffusion dynamics with deterministic flow matching, \citet{zhang2025energy} showed that energy guidance can be combined directly into the training objective via Q-weighted regression.
This eliminated the need for auxiliary classifiers and yielded theoretically grounded policy improvement in continuous action spaces.

However, applying flow matching models to RL is far from complete, since the existing generative offline RL methods are mostly restricted to \emph{continuous} action spaces. Both diffusion models and flow matching rely on stochastic or deterministic differential equations defined on Euclidean domains \citep{chen2018neural,lipman2022flow}, making them ill-suited for discrete or combinatorial decision problems in RL. In such settings, good decisions often lie in several distinct and incompatible modes, and forcing continuous generative models to interpolate between them can produce invalid actions and poor generalization when the dataset does not fully cover all behaviors \citep{yuan2025moduli}. 
% Meanwhile, applying generative offline RL to real-world domains such as recommendation, token-based generation, and structured control requires policy representations and training objectives that are \emph{native to discrete spaces}. 
This raises a meaningful question: 
\begin{center}
    \textit{How to adapt flow matching to RL with discrete actions?}
\end{center}

% \paragraph{Why discrete action spaces fundamentally change flow-based offline RL.}

% At a conceptual level, 
The answer is not simple because the current restriction to continuous action spaces is not only
a modeling choice but also a structural requirement of existing flow-based RL methods.
Flow matching relies on ordinary differential equations
(ODEs) that transport probability mass via a vector field defined on a Euclidean domain.
RL policy improvement is achieved by coupling these dynamics with gradients of value or energy functions with respect to actions.
When the action space is discrete, this formulation breaks down: vector fields are undefined, action-space derivatives do not exist, and the continuity equation underlying flow matching no longer applies.
% A naive discretization of the action space does not resolve these issues.
% Even if actions are binned or tokenized, the resulting policy is no longer a smooth density, and ODE-based sampling cannot be applied.
% More importantly, 
The lack of action-gradient guidance and continuous change-of-variables arguments, prevents existing theoretical guarantees for continuous actions from carrying over directly.
% and associated theoretical guarantees do not automatically carry over to discrete domains.
As a result, extending energy-guided generative policy optimization to discrete offline RL requires rethinking the policy representation, training objective, and underlying theory.

% \paragraph{Why naive discretization fails.}
\paragraph{Motivating Example.}
Before introducing our solution, we illustrate why \emph{treating inherently discrete decisions with continuous generative models} can lead to failures.
Figure~\ref{fig:mofork_cartoon} presents a simple multi-objective grid-world example in which the offline dataset contains two disjoint optimal strategies, while the intermediate region is a catastrophic trap state that is never observed.
This setting highlights a key limitation of continuous relaxations and regression-style policies: interpolating between discrete modes can produce invalid behavior that is absent from the data and leads directly to failure. This example motivates the need for a generative policy that can handle discrete spaces well.

\begin{figure}[t]
  \centering
  \vspace{-4pt}
  \begin{minipage}[t]{0.43\linewidth}
    \vspace{0pt}
    \centering
    \includegraphics[
      width=\linewidth,
      trim=20 20 20 20,
      clip
    ]{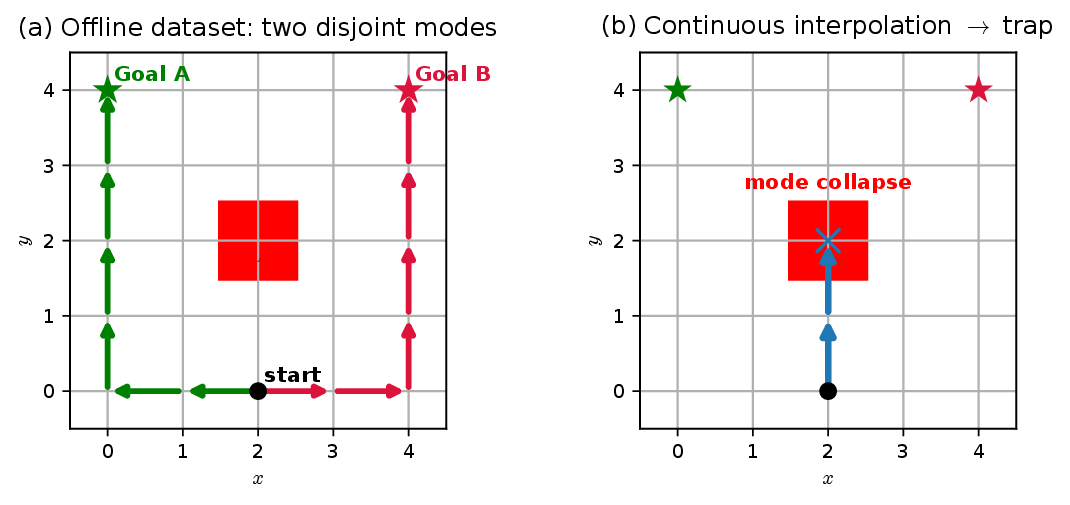}
  \end{minipage}
  \hfill
  \begin{minipage}[t]{0.53\linewidth}
    \vspace{0pt}
    \caption{\small
    Motivating example
    a) Offline dataset exhibits two disjoint modes corresponding to different objectives (Goal~A and Goal~B); the central region is a trap state never observed in the data.
    b) A continuous treatment of the decision space can induce interpolation between the modes and potential collapse toward the mean, causing trajectories to enter the trap.
    }
    \label{fig:mofork_cartoon}
  \end{minipage}
  \vspace{-8pt}
\end{figure}

% In this work, we address this gap by introducing \emph{Discrete Flow Matching on Offline RL}.
\paragraph{Our Method.}
To answer the aforementioned question, we propose a new flow matching algorithm, Q-weighted discrete flow matching (QDFM), to solve the offline RL problem with a discrete action space.
Our key idea is to replace continuous ODE-based RL policy representations with a \emph{Continuous-Time Markov Chain (CTMC)} defined over a discrete action space, leveraging the recent Discrete Flow Matching (DFM) techniques \citep{campbell2024generative, gat2024discrete,lipman2024flow}.
Instead of learning a continuous vector field that transports probability mass via a continuity equation, our proposed algorithm targets a time-dependent rate function whose induced dynamics follow the Kolmogorov forward equation.

We then integrate DFM with value-based policy improvement by deriving a \emph{Q-weighted conditional objective} that leads to an energy-weighted loss for discrete spaces.
Theoretical analysis shows that in terms of the gradient of parameters, optimizing the weighted conditional DFM objective is equivalent to optimizing a guided marginal objective for the exact KL-regularized optimal policy. This work sets up a theoretical guarantee for energy-weighted generative policies in discrete action spaces, showing that CTMC-based flows recover the KL-regularized optimal policy without auxiliary classifiers. To address a wider range of practical scenarios, we develop our algorithm within a general multi-objective offline RL framework where the classical single-objective setting is a special case. By conditioning the flow matching objective on randomly sampled preference vectors during training, the resulting model can generate actions for changing preferences at inference time without retraining.
% We further extend the framework to \emph{multi-objective offline reinforcement learning} by conditioning both the critic and the CTMC policy on a preference vector.
% This yields a single generative policy capable of zero-shot control across preferences and smooth traversal of the Pareto frontier, avoiding the mode collapse commonly observed in discrete baselines.
Through extensive experiments, we show that our algorithm is robust across discretized, intrinsically discrete, multi-objective, and multi-agent benchmarks highlighting the adaptability of our design.
To the best of our knowledge, this work presents the first principled framework for energy-guided generative policy optimization in discrete action spaces, extended to preference-conditioned multi-objective and multi-agent settings with theoretical guarantees.
% Overall, this work establishes Discrete Flow Matching as a principled foundation for discrete and preference-conditioned generative control. 

\section{Related Work}

% \paragraph{Generative Policies for Offline Reinforcement Learning.}
% Generative modeling has become a central tool in offline reinforcement learning, enabling expressive policy classes that better capture uncertainty and multimodality.
% \paragraph{Heuristic and Classifier-based Diffusion for RL.}
% Diffusion-based methods were used to model
% % among the first to adopt this perspective, modeling 
% policies as conditional generative processes trained via behavior cloning and guided toward high-value actions \citep{wang2022diffusion,chen2023offline,kang2023efficient}.
% Early approaches relied on heuristic guidance or classifier-based correction during sampling \citep{dhariwal2021diffusion,ho2021classifier}, which required additional models and
% training procedures where different components must be trained together in a coordinated, interdependent way which increased algorithmic complexity.

\paragraph{Energy-Guided Diffusion and Flow Matching for RL.}
Contrastive Energy Prediction (CEP) \citep{lu2023contrastive} formalized value guidance by learning a time-dependent energy model whose gradients steer diffusion sampling toward the KL-regularized optimal policy.
Guided flows further generalized this idea, interpreting guidance as a modification of probability flows in continuous time \citep{zheng2023guided}.
Energy-Weighted Flow Matching \citep{zhang2025energy} advanced this line of work by incorporating Q-based guidance directly into the flow matching objective, eliminating the need for auxiliary classifiers and providing theoretical guarantees for policy recovery in continuous action spaces.
Our work builds directly on this insight but departs fundamentally in representation by moving from continuous ODE flows to discrete CTMC dynamics.

% \paragraph{Discrete Flow Matching and CTMC-Based Generative Models.}
\paragraph{Discrete Flow Matching and CTMC.}
Discrete Flow Matching\citep{campbell2024generative,gat2024discrete,lipman2024flow} extends the flow matching framework to discrete state spaces by replacing ODE dynamics with Continuous-Time Markov Chains (CTMCs) governed by the Kolmogorov forward equation.
This framework has been successfully applied to structured generation problems, including graphs and biological sequences \citep{qin2024defog,chen2025multi}.
Recent theoretical analyses have established learnability and consistency properties of discrete flow matching models \citep{su2025theoretical,wan2025discrete}.
However, prior work has not explored the integration of discrete flow matching with reinforcement learning objectives or value-based policy improvement.

% Multi-objective reinforcement learning seeks to optimize vector-valued rewards and characterize trade-offs between competing objectives.
% Preference-conditioned policies provide a scalable approach by conditioning a single policy on a preference vector, enabling zero-shot control across objectives.
% \paragraph{Diffusion for Multi-Objective RL.}
% Recent work \citet{yuan2025moduli} has explored multi-objective RL using diffusion-based generative policies in continuous action spaces, where a preference vector across objectives is used as the condition of the diffusion RL model.
% In this work, we use a similar approach of treating the preference vector as the condition, but for flow matching with discrete actions instead of diffusion with continuous actions.

% \paragraph{Positioning of Our Work.}
% Our method unifies these research directions by introducing a discrete, CTMC-based generative policy trained via flow matching and guided by value functions.
% By combining Discrete Flow Matching with Q-weighted policy improvement and preference conditioning, we extend the benefits of generative offline RL to discrete and multi-objective settings where prior continuous approaches are ill-suited.
% To the best of our knowledge, this is the first framework to provide theoretical guarantees for energy-guided generative policy optimization in discrete action spaces.

\section{Preliminaries}
\label{sec:prelims}

\subsection{Problem Setup and Notations}

We consider a multi-objective Markov decision process (MOMDP) $\mathcal{M}$ which consists of $(\mathcal{S}, \mathcal{A}, \mathbf{r}, P, \gamma)$,
where $\mathcal{S}$ is the state space, $\mathcal{A}$ is the action space,
$\mathbf{r}:\mathcal{S}\times\mathcal{A}\mapsto \mathbb{R}^K$ is a vector-valued reward for $K$ objectives, $P$ is the transition kernel,
and $\gamma \in (0,1)$ is the discount factor. In this work, we focus on the situation of discrete actions (so the number of actions $\left|A\right|$ is finite). The state space can be either discrete or continuous.

The offline dataset is $\mathcal{D} = \{(s_i, a_i, \mathbf{r}_i, s'_i)\}_{i=1}^N$, 
where $s_i'$ denotes the state after taking an action $a_i$ on the state $s_i$, $\mathbf{r}_i\in \mathbb{R}^K$ denotes the reward vector, and the dataset consists of $N$ samples. 
% These samples are collected by an unknown behavior policy $\mu(a \mid s)$.

In our multi-objective setting, the notion of a state--action value function depends
explicitly on a user preference over objectives. We learn a vector-valued critic
$
\mathbf{Q}_\psi : \mathcal{S}\times\mathcal{A} \to \mathbb{R}^K,
$
which predicts one state-action value per objective.
For a given preference vector $\omega \in \Delta^{K-1}$, we define the scalarized reward
$r_\omega(s,a) := \langle \omega, \mathbf{r}(s,a) \rangle$.
The state--action value function (Q-function) is then defined as
\begin{equation}
\label{eq:q_omega_def}
Q_\omega(s,a)
\;:=\;
\mathbb{E}\!\left[
\sum_{t=0}^{\infty} \gamma^t r_\omega(s_t,a_t)
\;\middle|\;
s_0=s,\; a_0=a\;
\right]
\end{equation}
In our implementation, we scalarize the learned
vector critic to represent  $Q_\omega(s,a) \;=\; \langle \omega,\mathbf{Q}_\psi(s,a)\rangle$.

% {\color{red}In multi-objective setting, state-action function (Q-function) ...}

% We define a parameterized (vector-valued) state-action function $\mathbf{Q}_\psi(s,a)$ (i.e., our critic) for $K$ objectives:
% \[
% \mathbf{Q}_\psi(s,a) \coloneqq \big(Q_\psi^{(1)}(s,a), \dots, Q_\psi^{(K)}(s,a)\big) \in \mathbb{R}^K,
% \]
% where $\psi$ denotes the parameters of the critic.
% We further define the weighted-sum critic (which is a scalar) as
% \[
% Q_\omega(s,a) \coloneqq \sum\nolimits_{k=1}^K\omega_k Q_{\psi}(s,a),
% \]
% where $\omega=(w_1,w_2,\cdots,w_K) \in \mathbb{R}^K$ denotes a preference vector such that $\sum_{k=1}^K \omega_k = 1$ and $\omega_k \ge 0$ for all $k$.
% Given a preference vector $\omega \in \Delta^{K-1}$, where
% \[
% \Delta^{K-1} := \left\{ \omega \in \mathbb{R}^K : \omega_k \ge 0,\; \sum_{k=1}^K \omega_k = 1 \right\},
% \]
% we 

For each fixed state $s$ and preference vector $\omega$, the optimal policy under
KL-regularized policy optimization admits a closed-form solution. 
Specifically, when maximizing expected return while penalizing deviation from the
behavior policy $\mu(\cdot\mid s)$ via KL divergence,
the optimal policy is given by a Boltzmann distribution
\citep{todorov2006linearly,haarnoja2017reinforcement,lu2023contrastive}:
\begin{equation}
\label{eq:boltzmann_policy}
% \pi^\star_\omega(a \mid s)
% \;\propto\;
% \mu(a \mid s)\exp\!\big(\beta Q_\omega(s,a)\big),
\pi^\star_\omega(a \mid s)
=
\mu(a \mid s)\exp\!\big(\beta Q_\omega(s,a)\big)/C(s),
\end{equation}
where $C(s)$ is the normalization factor\footnote{When the context is clear, we omit the normalization factor by writing $\pi^\star_\omega(a | s)\propto \mu(a | s)\exp\!\big(\beta Q_\omega(s,a)\big)$.} $C(s)=\sum_a \mu(a|s)\exp\!\big(\beta Q_\omega(s,a)\big)$, and
$\beta > 0$ is a temperature parameter that controls the strength of value-based
guidance: larger $\beta$ emphasizes more on high-value actions, while smaller
$\beta$ keeps the policy closer to the behavior policy.

\subsection{Flow Matching}

Flow matching~\citep{lipman2022flow,liu2022rectified} is a generative modeling framework that learns a time-dependent velocity field that transports a simple source distribution $p_0$ to a target distribution $p_1$ through intermediate distributions $\{p_t\}_{t\in[0,1]}$. The induced dynamics satisfy the continuity equation
% \begin{equation}
% \label{eq:continuity_fm}
$\frac{\partial}{\partial t} p_t(x)
=
-\nabla_x \cdot \big(p_t(x)\,v_t(x)\big)$,
% \end{equation}
which describes the conservation of probability mass under deterministic transport. The sampling is performed by solving the ordinary differential equation (ODE)
% \begin{equation}
% \label{eq:fm_ode}
$\frac{d}{dt} X_t = v_t(X_t)$,
% \end{equation}
which deterministically maps samples from $p_0$ to $p_1$.
The velocity field $v_t$ is learned by matching it to a prescribed probability path,
rather than by maximizing likelihood or simulating stochastic diffusion dynamics.

Recent work further combines flow matching with value-based guidance for offline RL in continuous action spaces~\citep{zhang2025energy}.
However, both the continuity equation and the ODE rely on gradients with respect to the underlying space,
which makes this formulation incompatible with discrete action spaces.

\subsection{Discrete Flow Matching}
\label{sec:dfm_prelims}

Discrete Flow Matching (DFM) extends flow matching from continuous Euclidean spaces to
\emph{discrete} spaces by modeling probability flows with a Continuous-Time Markov Chain
(CTMC) \citep{campbell2024generative,lipman2024flow}.
Where continuous flow matching learns a vector field governed by a continuity equation, DFM learns
a \emph{rate field} governed by the Kolmogorov forward equation. 

\textbf{Kolmogorov forward equation.}
Let $p_t$ denote the marginal probability mass function (PMF) of $X_t\in \mathcal{X}$.
The CTMC dynamics evolve according to the Kolmogorov forward equation 
% \begin{equation}
% \label{eq:kolmogorov_prelim}
$\frac{d}{dt}p_t(y)=\sum\nolimits_{x\in \mathcal{X}} u_t(y,x)\,p_t(x)$.
% \end{equation}

\textbf{CTMC rate field.}
Let $\mathcal{X}$ be a finite discrete space.
In this work, $\mathcal{X}$ corresponds to the discrete action space $\mathcal{A}$.  A CTMC is specified by time-dependent transition rates
$u_t(y,x)$ for $x,y\in \mathcal{X}$ (also called a generator or velocity field), satisfying the standard
rate (generator) constraints 
% \begin{equation}
% \label{eq:dfm_rate_constraints_prelim}
$u_t(y,x)\ge 0 \text{ for all } y\neq x,
\text{ and }
\sum\nolimits_{y} u_t(y,x)=0$. In discrete spaces, the object that replaces a velocity field is a rate matrix and learning it lets us know ``flow'' probability mass by jumps rather than derivatives.

\begin{wrapfigure}[29]{r}{0.56\linewidth}
\vspace{-13pt}
\centering
% \scriptsize
% \footnotesize
\small
\refstepcounter{algorithm}
\noindent\textbf{Algorithm~\thealgorithm} Q-weighted discrete flow matching for offline RL (sketched version; full algorithm in Appendix~\ref{app.full_algo})
\label{alg:sketch}
\hrule
\vspace{3pt}

\begin{algorithmic}[1]
   \STATE {\bfseries Input:} Offline RL Dataset $\mathcal{D}$, rate function $u_{\theta,t}$, guidance scale $\hat{\beta}>0$, behavior policy $\hat{\mu}(\cdot|s)$ (learned from standard offline RL behavior cloning), support action set size M, number of epochs $K_1, K_2$ and $K_3$

    %\vspace{0.1cm}
    \STATE \textbf{Phase 1: flow matching model warm-up}
    \FOR{warm-up step $k=1,2, \dots, K_1$}

    \STATE Sample batch $(s, a)\subset \mathcal{D}$, preference $\omega \sim \text{Uniform}$,
    \STATE {Update $u_{\theta,t}$ by minimizing the unweighted DFM loss}
    \ENDFOR
    
   \STATE \textbf{Phase 2: critic learning}
   \FOR{critic learning step $k= 1, 2, \dots ,K_2$}
      \STATE Sample batch $(s, a, \mathbf{r}, s')\subset \mathcal{D}$ and preference $\omega \sim \text{Uniform}$.
      \STATE \textbf{Scalarize:} $Q_\omega(s,a) \gets \langle \omega, \mathbf{Q}_\psi(s,a) \rangle$.
      \STATE Update vector critic $\mathbf{Q}_\psi$ via Bellman regression on the scalarized values $Q_\omega$.
   \ENDFOR
   %\vspace{0.1cm}
   
   \STATE \textbf{Phase 3: iterative policy improvement}
   \FOR{policy improvement step $k=1,2, \dots, K_3$}

    \STATE Sample batch $(s, a)$, preference $\omega \sim \text{Uniform}$, and endpoints $\{A_1^{(j)}\}_{j=1}^M$ using $u_{\theta,t}(\cdot, \cdot \mid s,\omega)$ (by running the CTMC inference in Algorithm~\ref{alg:ctmc_inference})
    \STATE Compute guidance weights using the scalarized Q-value: $w_j \propto \exp\!\big(\hat{\beta} \langle \omega, \mathbf{Q}_\psi(s, A_1^{(j)}) \rangle\big)$ (Eq.~\eqref{eq:q_weight}) over the sampled endpoints.
    \STATE Start $A_0 \gets a$ (Dataset)
    \STATE Update $u_{\theta,t}$ by minimizing the Q-weighted conditional DFM loss (Eq.~\eqref{eq:qcdfm}) with weights $\{w_j\}$, biasing the flow toward high-value endpoints.

   \ENDFOR

   %\vspace{0.1cm}
   \STATE {\bfseries Output:} Trained rate function $u_{\theta,t}$
\end{algorithmic}

\vspace{1pt}
\hrule
\end{wrapfigure}
% \end{equation}
% Equivalently, $\sum_{y\in \mathcal{X}} u_t(y,x)=0$ for each $x$, and off-diagonal entries are nonnegative.

% In later sections, we build on this
% framework to design an energy-guided, preference-conditioned policy optimization
% method for offline reinforcement learning.

\section{Method: Discrete Preference-Conditioned Flow Matching on Offline RL}
\label{sec:method}

Our goal is to learn a flow matching model that can generate discrete actions following an optimized policy in an offline RL setting.
To that end, we need to replace continuous ODE dynamics and action-space derivatives in the existing flow-based offline RL methods.
Specifically, we consider a Markov jump process over discrete actions, and learn its transition rates from the corresponding flow.

% We now give our notations followed by a high-level sketch (Algorithm~\ref{alg:sketch}) and then detail each component in turn.

% \vspace{0.5em}
% \noindent\fbox{
% \begin{minipage}{0.97\linewidth}
% \textbf{Method overview.}
% Our goal is to learn a \emph{single} preference-conditioned policy $\pi_\theta(\cdot\mid s,\omega)$
% for \emph{discrete} offline RL from a fixed dataset $\mathcal D$.
% Because flow-based policy optimization is typically formulated as a continuous probability flow (ODEs),
% we instead need to model policy generation as a \emph{sequence of stochastic jumps} over actions:
% a Markov jump process $(A_t)_{t\in[0,1]}$ whose terminal distribution is $\pi_\theta(\cdot\mid s,\omega)$.
% Training combines (i) a tractable conditional flow-matching loss (Phase~1 and Phase~3),
% (ii) a preference-conditioned critic (Phase~2), and (iii) value-guided endpoint reweighting (Phase~3).
% \end{minipage}}
% \vspace{0.5em}

% Training proceeds in three complementary phases.
% First, we initialize a discrete generative model over actions by matching probability
% flows from a simple base distribution to the empirical behavior distribution, ensuring
% global coverage of the action space.
% Second, we learn a preference-conditioned critic using standard scalarized Bellman
% backups restricted to in-support actions from the offline dataset.
% Finally, policy improvement is performed by biasing discrete probability flows toward
% high-value actions through Q-weighted endpoint reweighting.

\subsection{Proposed Algorithm}
\label{sec:algo_sketch}

Algorithm~\ref{alg:sketch} sketched the structure of our algorithm at a high level.  Phase~1 performs an \emph{unweighted generative warm-up}, in which the rate network is trained
via discrete flow matching to map a simple base distribution over actions to the empirical
behavior distribution.
This initialization ensures that the learned CTMC places probability mass
only on actions supported by the offline dataset, providing a well-calibrated generative
model before value-based guidance is introduced.
Phase~2 learns a preference-conditioned vector critic from the
offline dataset, which is used to guide policy improvement in the next phase. Phase 1 and phase 2 can run in parallel without any problem as phase 2 does not depend on the running result of phase 1.
Finally, Phase~3 performs policy improvement by reweighting endpoint actions according to
$Q_\omega$ and refining the rate model via weighted conditional flow matching, biasing the
generative dynamics toward higher-value actions while remaining within the data support. In the following, we explain Algorithm~\ref{alg:sketch} in detail in terms of: i) the validity of the flow, ii) the rationality of the training objective, and iii) the inference algorithm.

\subsection{Discrete Flow in Offline RL}
\label{sec:main_method}

% \paragraph{Conditional probability path.}

% \paragraph{Other Notations.}
% We write $s\in\mathcal S$ for a state, $a\in\mathcal A$ for a discrete action, and $\omega\in\Delta^{K-1}$ for a preference vector.
We use $A_t$ for the (discrete-action-valued) random variable of the flow indexed by continuous time $0\leq t\leq 1$.
A conditional path is conditioned on $Z \coloneqq (s,\omega,A_0,A_1)$, where $A_0$ is the initialization action
and $A_1$ is the endpoint action.
The behavior distribution that generates the offline dataset is $\mu(\cdot\mid s)$, while
$\hat\mu$ denotes the estimated (learned) version of $\mu$.
% Let $\mathcal{G}\subseteq \mathcal{A}$ be the action support in the offline dataset, whose cardinality is $|\mathcal{G}|=M$.
We use $\delta(\cdot,\cdot)$ for the Kronecker delta (a function that outputs $1$ when two inputs are the same, while outputs $0$ otherwise),
and $\gamma\in(0,1)$ for the discount factor in RL.

% We define the conditioning variable $Z := (s,\omega,A_0,A_1)$,
The conditional flow path we use is
\begin{equation}
\label{eq:dfm-mixture-path}
p_{t \mid Z}(a)
=
(1-t)\delta(a,A_0) + t\delta(a,A_1),
\ 0\leq t \leq 1.   
\end{equation}

% \paragraph{CTMC policy representation.}
% For each fixed context $(s,\omega)$, we represent the policy
% $\pi_\theta(\cdot \mid s,\omega)$ as the terminal distribution of a CTMC
% $(A_t)_{t \in [0,1]}$ over the discrete action space $\mathcal{A}$.

The target of Algorithm~\ref{alg:sketch} is learning a time-dependent rate function $u_{\theta,t}(a',a \mid s,\omega)$.
% $u_{\theta,t}(a',a \mid s,\omega)$.
% \begin{align*}
%     u_{\theta,t}(a',a \mid s,\omega).
% \end{align*}
In order to make sure that the CTMC implied by this rate function is valid, the rate function $u_{\theta,t}$ needs to satisfy the rate constraints
$u_{\theta,t}(a',a\mid s,\omega) \ge 0 \text{ for all } a' \neq a$ and $ 
\sum_{a'\in \mathcal{A}} u_{\theta,t}(a',a \mid s, \omega)=0$. 
% We enforce the CTMC generator constraints by construction which we detail in Appendix~\ref{app:rate_parameterization}.

\paragraph{Source and endpoint distributions.}
The conditional path in Eq.~\eqref{eq:dfm-mixture-path} is specified by a start action $A_0$ and an endpoint action $A_1$, whose roles depend on the training phase.
During the generative warm-up (Phase~1), we sample $A_0\sim\mathrm{Uniform}$ and draw the endpoint
$A_1\sim\hat{\mu}(\cdot\mid s)$, encouraging the model to map a base distribution to the dataset-supported action distribution.
During policy improvement (Phase~3), we instead set $A_0\gets a$ for $(s,a)\sim\mathcal D$. For the target, we generate a support set $\{A_1^{(j)}\}_{j=1}^M$ of size $M$ by simulating the current rate model $u_{\theta,t}$ (using Algorithm~\ref{alg:ctmc_inference}). These generated candidates are then reweighted according to $Q_\omega$, effectively biasing the probability flow toward the highest-value regions of the model's current distribution.

% \paragraph{A valid conditional CTMC generator.}
% To realize the interpolation path in Eq.~\eqref{eq:dfm-mixture-path} using a {\color{red}valid CTMC,
% we construct a jump process whose transition rates vary with time
% (i.e., the jump intensity increases as $t \to 1$)
% and whose dynamics move probability mass directly toward the target action,
% after which no further transitions occur (endpoint absorbing).
% This design is important because it (i) induces the desired conditional path,
% and (ii) satisfies the CTMC generator constraints by construction.}

\paragraph{A valid conditional CTMC generator.}
To train the rate model $u_{\theta,t}$ via conditional discrete flow matching,
we require a \emph{ground-truth conditional rate field} $u_t(\cdot,\cdot \mid Z)$
against which the learned rate is compared in the flow-matching objective given in Eq.~\eqref{eq:qcdfm}.
This target rate must (i) define a valid CTMC, and (ii) induce the desired conditional interpolation path
$p_{t\mid Z}$ in Eq.~\eqref{eq:dfm-mixture-path}.
We therefore carefully construct a time-dependent jump process whose
transition rates move probability mass toward the target action and satisfy
the CTMC generator constraints by construction ensuring that the flow-matching objective compares
$u_{\theta,t}$ to a well-defined CTMC generator rather than an abstract
marginal rate.

% {\color{red}If we want to train $u$ in Eq.(8), then we need to compare $u_{theta}$ with the true value. Here we construct the groud truth of the rate function. Notice that the true rate function also needs to satify .... The true rate will be used in the flow matching loss disucssed in the next subsection.}

Concretely, for $Z=(s,\omega,A_0,A_1)$ we use the following jump-to-endpoint rate field:
\begin{equation}
\label{eq:jump_to_endpoint_main}
u_t(a',a \mid Z) =
\begin{cases}
\frac{1}{1-t}, & a \neq A_1,\;\; a' = A_1, \\
-\frac{1}{1-t}, & a \neq A_1,\;\; a' = a, \\
0, & \text{otherwise},
\end{cases}
% \begin{cases}
% 1/(1-t), & a \neq A_1,\;\; a' = A_1, \\
% -1/(1-t), & a \neq A_1,\;\; a' = a, \\
% 0, & \text{otherwise},
% \end{cases}
\end{equation}
so that from any non-target action $a \neq A_1$, the only possible transition is a
direct jump to the target action $A_1$, and once $A_1$ is reached, the process remains there.
Appendix~\ref{app:ctmc_cdfm_details} shows the derivation that Eq.~\eqref{eq:jump_to_endpoint_main} satisfies the Kolmogorov
forward equation and reproduces the mixture path in
Eq.~\eqref{eq:dfm-mixture-path} and provides the justification of our construction.

% \paragraph{Sampling the policy.}
% At test time, we sample an action by simulating the learned jump process from $t=0$ to $t=1$
% and returning the terminal state $A_1\sim\pi_\theta(\cdot\mid s,\omega)$.

% \paragraph{Why do we need a conditional loss?}
% The ideal flow-matching objective would regress the \emph{marginal} rate field along the marginal path,
% but this marginal rate is generally intractable because it requires integrating over latent endpoints.
% Discrete flow matching avoids this by choosing a simple conditional path $p_{t\mid Z}$ whose derivative
% is known, constructing a valid conditional generator $u_t(\cdot,\cdot\mid Z)$ that realizes it,
% and then regressing $u_{\theta,t}$ toward this conditional target.
% Phase~1 uses this unweighted conditional regression to learn a stable generative model,
% while Phase~3 uses the same structure but reweights endpoints to implement policy improvement.

% \paragraph{Training objective.}
\subsection{Training Objective and Loss}

The marginal flow-matching objective depends on an intractable marginal rate field, so we instead need to optimize a tractable \emph{conditional} objective and prove that it has the correct gradient after adding value-based weighting in the discrete setting. We defer the proof to Section~\ref{sec:theory_main}.

The conditional DFM loss we use is defined as:
% \begin{equation}
% \label{eq:qcdfm}
% \begin{aligned}
% \mathcal{L}_{t}^w(\theta)
% &=
% \E_{t,Z,A_t}\Big[
% w(s,\omega,A_1)\,
% D_{A_t}\big(
% u_t(\cdot,A_t \mid Z), \\
% &\qquad\qquad
% u_{\theta,t}(\cdot,A_t \mid s,\omega)
% \big)
% \Big].
% \end{aligned}
% \end{equation}

\begin{equation}
\label{eq:qcdfm}
\begin{aligned}
\mathcal{L}_{t}^w(\theta)
&=
\E_{t,Z,A_t}\Big[
w(s,\omega,A_1)
\cdot
\big\|
u_t(\cdot,A_t \mid Z) -
u_{\theta,t}(\cdot,A_t \mid s,\omega)
\big\|_2^2
\Big].
\end{aligned}
\end{equation}

where $t\sim\mathrm{Unif}[0,1]$, $Z=(s,\omega,A_0,A_1)$ is sampled according to the procedure in
Algorithm~\ref{alg:sketch} (i.e., from $\mathcal D$ and $\hat\mu$), $A_t\sim p_{t\mid Z}$, and $\|\cdot\|_2$ denotes the $\ell_2$ norm\footnote{
While we use a squared $\ell_2$ norm in Eq.~\eqref{eq:qcdfm},
our design and analysis apply more generally to any Bregman divergence (the squared $\ell_2$-norm is one of them).} over the
vector of outgoing transition rates from state $A_t$.
The weight $w(s,\omega,a)$ is the normalized Q-based guidance term: 

% We use $\hat{\beta}$ to denote the finite-sample guidance temperature used in the
% self-normalized Monte Carlo approximation. In the ideal infinite-support limit it coincides with the theoretical inverse temperature $\beta$.

% $D_{A_t}(\cdot,\cdot)$ denotes any {\color{red}Bregman divergence} defined on valid outgoing rate vectors from state $A_t$ (we use the standard DFM choice; see Appendix~\ref{app:ctmc_cdfm_details}), and $w(s,\omega,a)$ is the normalized Q-weight used for energy guidance:
\begin{equation}
\label{eq:q_weight}
w(s,\omega,a)
:=
\frac{\exp\!\big(\hat{\beta} Q_\omega(s,a)\big)}
{\sum_{j=1}^{M}
\!\left[\exp\!\big(\hat{\beta} Q_\omega(s,a'_j)\big)\right]}.
\end{equation}
% In practice we approximate the expectation with an empirical average over behavior-sampled support actions. 
% In our algorithm, Phase~1 uses the above conditional regression without guidance to learn a stable generative model over actions,
% while Phase~3 uses it with reweighted endpoints to incorporate value-based policy improvement.
where $\hat{\beta}$ denotes the empirical guidance scale used for finite-sample approximation, which converges to the theoretical inverse temperature $\beta$ in the infinite-sample limit. The denominator sums over $M$ independent samples $\{a'_j\}_{j=1}^M$ which is the support set of candidate actions generated by simulating the current rate model $u_{\theta,t}$ (using Algorithm~\ref{alg:ctmc_inference}). We use the above weight in Line 16 of Algorithm~\ref{alg:sketch}, where we normalize over actions sampled from the current model which gives us a Monte Carlo
approximation of the Boltzmann distribution induced by $Q_\omega$, and iteratively biases the
learned policy toward higher-value actions.
This construction resembles to Q-weighted iterative policy optimization
\citep{zhang2025energy}, where support actions drawn from the current policy are
reweighted by exponentiated Q-values to implement KL-regularized policy
improvement.

Since Eq.~\eqref{eq:qcdfm} trains a CTMC rather than a continuous vector field, existing continuous-action RL theory does not apply. In Section~\ref{sec:theory_main}, we show that the weighted loss has the same gradient as a guided marginal objective, giving a principled basis for discrete-action policy improvement.

% over valid CTMC rate vectors.
% This general formulation is used in our proofs and follows standard
% discrete flow matching theory \cite{lipman2024flow}; 
% Details are provided in
% Appendix~\ref{app:ctmc_cdfm_details}.

% Our objective needed to be carefully designed in contrast to guided flow-matching losses used in continuous settings,
% with the key major difference that we regress \emph{transition rates} of a CTMC rather than a continuous vector field.

% Since our training loss Eq.~\eqref{eq:qcdfm} is for a CTMC instead of a continuous vector field, its effectiveness cannot be explained by the existing theories for RL with continuous actions in the literature.
% In Section~\ref{sec:theory_main}, we prove that despite being defined through a conditional construction,
% this weighted loss is equivalent in gradient to a well-defined guided \emph{marginal} objective,
% providing a principled foundation for policy improvement in discrete action spaces.

% \paragraph{Inference via CTMC simulation.}
\begin{wrapfigure}{r}{0.48\linewidth}
\vspace{-12pt}
\centering
\footnotesize
\refstepcounter{algorithm}
\noindent\textbf{Algorithm~\thealgorithm} Inference via CTMC simulation
\label{alg:ctmc_inference}
\vspace{2pt}
\hrule
\vspace{3pt}

\begin{algorithmic}[1]
\STATE {\bfseries Input:} State $s$, preference $\omega$, trained rate model $u_{\theta,t}$,
step size $h>0$, number of steps $N$ with $Nh=1$

\STATE $A_0  \sim \hat\mu(\cdot\mid s)$

\FOR{$n = 0$ {\bfseries to} $N-1$}
  \STATE Set $t_n \gets nh$ and current action $a \gets A_{t_n}$
  \STATE Total leaving rate $\lambda_n \gets \sum_{a' \neq a} u_{\theta,t_n}(a',a\mid s,\omega)$
  \IF{$\lambda_n = 0$}
    \STATE $A_{t_{n+1}} \gets a$
  \ELSE
    \STATE With probability $1-h\lambda_n$, set $A_{t_{n+1}} \gets a$
    \STATE With probability $h\lambda_n$, sample
    \STATE $\displaystyle
    A_{t_{n+1}} \sim
    \frac{u_{\theta,t_n}(\cdot,a\mid s,\omega)}{\lambda_n}$
  \ENDIF
\ENDFOR

\STATE {\bfseries Output:} Action $A_1$
\end{algorithmic}

\vspace{3pt}
\hrule
\vspace{-10pt}
\end{wrapfigure}
\subsection{Inference Process}

After training, the learned rate model $u_{\theta,t}$ implicitly defines a discrete policy
$\pi_\theta(\cdot\mid s,\omega)$ as the terminal distribution of a CTMC over actions. Given a state $s$ and a preference vector $\omega$, actions are generated by
simulating this Markov jump process from $t=0$ to $t=1$.
% This procedure enables efficient test-time sampling for \emph{any} preference vector
% $\omega$ without retraining.
Note that 
since we have already sampled different $\omega$ during training (Line~15 of Algorithm~\ref{alg:sketch}), the learned model should fit for any $\omega$ in the inference stage.

% \paragraph{CTMC simulation.}
 Algorithm~\ref{alg:ctmc_inference} (more details in Appendix~\ref{app:ctmc_inference}) implements a discrete-time Euler approximation of the
underlying Continuous-Time Markov Chain defined by the rate model $u_{\theta,t}$.
At each time $t_n = nh$, the total leaving rate $\lambda_n \coloneqq \sum_{a' \neq A_{t_n}} u_{\theta,t_n}(a', A_{t_n} \mid s,\omega)$
% \[
% \lambda_n := \sum\nolimits_{a' \neq A_{t_n}} u_{\theta,t_n}(a', A_{t_n} \mid s,\omega)
% \]
determines the probability of a jump occurring within the next time step.
Specifically, the update
\[
\mathbb{P}(A_{t_{n+1}} = a' | A_{t_n})
=
\begin{cases}
1 - h\lambda_n, & a' = A_{t_n} \\
h\,u_{\theta,t_n}(a', A_{t_n}\mid s,\omega), & a' \neq A_{t_n}
\end{cases}
\]
corresponds to the standard (naive) Euler discretization of a CTMC
\citep[Eq.~(6.6)]{lipman2024flow}.
This scheme incurs $\mathcal{O}(h)$ local error in transition probabilities and is valid
provided $h\lambda_n \le 1$.
% In practice, choosing $h$ sufficiently small yields a stable and efficient approximation
% of continuous-time sampling.

% \begin{algorithm}[tb]
% \caption{Inference via CTMC simulation}
% \label{alg:ctmc_inference}
% \begin{algorithmic}[1]
% \STATE {\bfseries Input:} State $s$, preference $\omega$, trained rate model $u_{\theta,t}$,
% step size $h>0$, number of steps $N$ (so $Nh=1$)
% \vspace{0.05cm}

% \STATE $A_0  \sim \hat\mu(\cdot\mid s)$

% \FOR{$n = 0$ {\bfseries to} $N-1$}
%   \STATE Set $t_n \gets nh$ and current action $a \gets A_{t_n}$
%   \STATE Total leaving rate
%   $\lambda_n \gets \sum_{a' \neq a} u_{\theta,t_n}(a',a\mid s,\omega)$
%   \IF{$\lambda_n = 0$}
%     \STATE $A_{t_{n+1}} \gets a$
%   \ELSE
%     \STATE With probability $1 - h\lambda_n$, set $A_{t_{n+1}} \gets a$
%     \STATE With probability $h\lambda_n$, sample
%     \[A_{t_{n+1}} \sim
%     \frac{u_{\theta,t_n}(\cdot,a\mid s,\omega)}{\lambda_n}
%     \]
%   \ENDIF
% \ENDFOR

% \STATE {\bfseries Output:} Action $A_1$
% % , a sample from $\pi_\theta(\cdot\mid s,\omega)$
% \end{algorithmic}
% \end{algorithm}

% Together, these components 
The training Algorithm~\ref{alg:sketch} and the inference Algorithm~\ref{alg:ctmc_inference} together
define a generative policy that enables efficient CTMC-based sampling, value-guided improvement, and preference conditioning from offline data.

\subsection{Extension to Multi-Agent RL}
\label{sec:marl_extension}

Our framework can be extended to multi-agent reinforcement learning (MARL), where the main concern of MARL is the exponentially large joint action space when the number of agents $G$ is large.
% by a certain careful exploitation of the structure of Continuous-Time Markov Chains.
% Directly modeling a generative joint policy
% $\pi(a^1,\dots,a^G)$ is intractable, as the joint action space grows exponentially
% with the number of agents $G$.
To address this, we 
% have to represent the joint policy as the terminal distribution of a CTMC
% with 
use a factorized technique \cite{campbell2022continuous,campbell2024generative} in discrete flow matching, in which transitions correspond to asynchronous single-agent updates.

\paragraph{Factorized CTMC rate.}
We define the joint transition rate as
\begin{equation}
u_{\theta,t}(a',a \mid s,\omega)
=
\sum_{i=1}^G
\delta(a'^{(-i)},a^{(-i)})\,
u^{(i)}_{\theta,t}(a'^{(i)},a \mid s,\omega),
\label{eq:marl_factorized_ctmc}
\end{equation}
where $u_{\theta,t}^{(i)}$ is the rate for agent $i$ to change its action conditioned on the current joint configuration $a$, $a^{(i)}$ denotes the action of agent $i$, and $a^{(-i)}$ denotes the actions of all agents except agent $i$.
This construction ensures that only one agent updates at a time and that the number of modeled rates
scales as $\sum_i |\mathcal{A}_i|$, rather than $\prod_i |\mathcal{A}_i|$. 

\paragraph{Factorized conditional path.}
To train the factorized generator, we define the conditional probability path
% \begin{equation}
$p_{t\mid Z}(a)
=
\prod_{i=1}^G
\Big[(1-t)\delta(a^{(i)},a_0^{(i)}) + t\delta(a^{(i)},a_1^{(i)})\Big]$,
% \label{eq:marl_factorized_path}
% \end{equation}
This construction implies that at any intermediate time $t$, the joint state is a mix of agents who have transitioned to their target actions $a_1^{(i)}$ and agents who remain at their start actions $a_0^{(i)}$.
This factorized path is perfectly compatible with the generator in Eq.~\eqref{eq:marl_factorized_ctmc}, allowing us to decompose the training objective into a sum of per-agent flow matching losses. For centralized multi-objective guidance while preserving decentralized execution, we use the QMIX framework \citep{rashid2020monotonic}, which represents the centralized joint action-value as a monotonic mixing of per-agent values. A detailed construction of the factorized conditional path and the corresponding decomposed training objective is provided in Appendix~\ref{sec:marl_appendix}.

% {\color{red}
% \paragraph{Centralized Multi-Objective Guidance.}
% Policy learning is guided by a centralized critic $\mathbf{Q}_\psi(s, a^{(1)}, \dots, a^{(G)})$. Individual agent flows are coordinated by the global preference-conditioned weight. This ensures that while the generative process is factorized, the agents are explicitly coordinated to maximize the joint multi-objective return. Since the objective decomposes as a sum of per-agent Bregman divergences,
% the marginalization and gradient-equivalence results from the single-agent
% setting apply unchanged in the multi-agent case.
% }

% \subsection{Theory: Guided DFM and Optimal Boltzmann Recovery}
\subsection{Theory: Gradient Equivalence and Optimal Policy Recovery}
\label{sec:theory_main}

% In flow matching, the ideal objective is a \emph{marginal} regression loss that matches the (typically intractable) marginal rate field along the true probability path.
% Our training objective is defined through a \emph{weighted conditional} flow-matching loss.
% We must show that minimizing our loss is equivalent to
% optimizing a well-defined \emph{guided marginal} objective, despite the additional complication of our setting that uses endpoint reweighting to bias learning toward high-value actions.
Flow matching ideally matches an intractable marginal rate field along the true probability path. Our objective instead uses a weighted conditional loss with endpoint reweighting. We show that this is gradient-equivalent to optimizing a well-defined guided marginal objective, giving a principled justification for our training loss. 

% Let $x$  and $z$ denote realizations of the random variables $X_t$ and $Z$, respectively.

\paragraph{Guided marginal objective.}
Fix $(s,\omega)$ and consider the conditional path $p_{t\mid Z}$ given in Eq.~\eqref{eq:dfm-mixture-path}, indexed by $Z=(s,\omega,A_0,A_1)$.
Let $w(Z)=w(s,\omega,A_1)$ be a nonnegative endpoint weight independent of $\theta$.
Define the tilted joint distribution at time $t$ as
\begin{equation}
\label{eq:tilted_joint_main}
\tilde p_t(x,z)
:=
\frac{w(z)\,p_t(x,z)}{C_t},
\ 
C_t := \mathbb{E}_{(X_t,Z)\sim p_t}[w(Z)].
\end{equation}
The corresponding guided marginal target rate is
\begin{equation}
\label{eq:utilde_main}
\tilde u_t(\cdot,x)
:=
\mathbb{E}_{Z\sim \tilde p_t(\cdot\mid X_t=x)}
\big[u_t(\cdot,x\mid Z)\big],
\end{equation}
which induces the guided marginal loss
\begin{equation}
\label{eq:marg_loss}
\mathcal{L}_{t,\mathrm{Marg}}^{w}(\theta)
:=
C_t\,\mathbb{E}_{X_t \sim \tilde{p}_t}\!\left[
\big\|\!\big(\tilde{u}_t(\cdot, X_t) -  u_{\theta,t}(\cdot, X_t)\big)\big\|_2^2
\right].
\end{equation}

\begin{theorem}[Gradient Equivalence]
\label{thm:guided_grad_equiv_main}
Assume $w(Z)$ is independent of $\theta$. Then the gradients of the guided conditional objective and
the guided marginal objective coincide:
\[
\nabla_\theta \mathcal{L}_{t}^w(\theta) = \nabla_\theta \mathcal{L}_{t,\mathrm{Marg}}^w(\theta).
\]

and therefore $\nabla_\theta \mathcal{L}_{t}^w(\theta)$ equals the gradient of a
guided marginal DFM objective that targets $\tilde u_t$.
\end{theorem}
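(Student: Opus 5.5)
We prove this in the standard way for conditional flow matching, done in the tilted measure. Throughout, $t$, $s$, and $\omega$ are fixed, and all expectations are finite sums, because $\mathcal{A}$ is finite and $Z$ ranges over finitely many endpoint pairs for fixed $(s,\omega)$. The starting point is to rewrite the conditional loss $\mathcal{L}_t^w(\theta)$ in Eq.~\eqref{eq:qcdfm} as an expectation over the joint law $p_t(x,z)$ of $(A_t,Z)$:
\[
\mathcal{L}_t^w(\theta)=\sum_{x,z} w(z)\,p_t(x,z)\,\big\|u_t(\cdot,x\mid z)-u_{\theta,t}(\cdot,x)\big\|_2^2 .
\]
By the definition of the tilted joint in Eq.~\eqref{eq:tilted_joint_main}, this equals $C_t\,\mathbb{E}_{(X_t,Z)\sim\tilde p_t}\|u_t(\cdot,X_t\mid Z)-u_{\theta,t}(\cdot,X_t)\|_2^2$. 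The constant $C_t$ does not depend on $\theta$, because $w$ and $p_t$ do not. If $C_t=0$, both losses vanish identically and there is nothing to prove, so we assume $C_t>0$.

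Next, expand each squared norm as $\|a\|^2-2\langle a,b\rangle+\|b\|^2$, where $a$ is the target rate and $b=u_{\theta,t}(\cdot,x)$. Do this for both losses. The terms $\|u_t(\cdot,x\mid z)\|^2$ and $\|\tilde u_t(\cdot,x)\|^2$ do not involve $\theta$, so their gradients vanish.

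The term $\|u_{\theta,t}(\cdot,X_t)\|^2$ depends only on $X_t$. Its expectation under $\tilde p_t(x,z)$ equals its expectation under the $x$-marginal $\tilde p_t(x)$, so this term is identical in the two losses.

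The cross term is the key step. Condition on $X_t=x$ and apply the tower property under $\tilde p_t$:
\[
\mathbb{E}_{\tilde p_t}\big\langle u_t(\cdot,X_t\mid Z),u_{\theta,t}(\cdot,X_t)\big\rangle
=\mathbb{E}_{X_t\sim\tilde p_t}\big\langle \mathbb{E}_{Z\sim\tilde p_t(\cdot\mid X_t)}[u_t(\cdot,X_t\mid Z)],u_{\theta,t}(\cdot,X_t)\big\rangle .
\]
By Eq.~\eqref{eq:utilde_main}, the inner expectation is exactly $\tilde u_t(\cdot,X_t)$, so this matches the marginal cross term. This uses linearity of the inner product in its first argument, and it uses that $u_{\theta,t}$ depends on $X_t$ (and on $s,\omega$) but not on $A_0$ or $A_1$. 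Combining the three pieces, $\mathcal{L}_t^w(\theta)-\mathcal{L}^w_{t,\mathrm{Marg}}(\theta)$ is a constant in $\theta$, so the gradients coincide. Since the objects are finite sums, differentiation commutes with expectation as long as $u_{\theta,t}$ is differentiable in $\theta$.

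The one point that needs care is the conditional $\tilde p_t(\cdot\mid x)$. It is defined only for $x$ with $\tilde p_t(x)>0$. States with $\tilde p_t(x)=0$ carry no mass in either loss, so $\tilde u_t$ can be set arbitrarily there. For the Bregman-divergence generalization mentioned in the footnote, the same argument goes through with one change: a Bregman divergence $D(a,b)=\phi(a)-\phi(b)-\langle\nabla\phi(b),a-b\rangle$ is affine in $a$ for fixed $b$, so the cross term is again linear in the target and the tower step applies verbatim.
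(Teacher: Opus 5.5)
Your proof is correct. It uses the same three ingredients as the paper: the change of measure to $\tilde p_t$, the tower property conditional on $X_t$, and linearity in the target. The difference is that you apply them to the losses, while the paper applies them to the gradients.

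The paper differentiates first via the chain rule. It then pushes the conditional expectation over $Z\mid X_t$ inside $\nabla_v D_{X_t}(\cdot,v)$ using its affine-invariance lemma, which rests on $\nabla_v D(y,v)=\nabla^2\phi(v)(v-y)$ being affine in $y$, and reassembles with the reverse chain rule. You instead expand $\|a-b\|_2^2$ and show directly that $\mathcal{L}^w_t-\mathcal{L}^w_{t,\mathrm{Marg}}$ does not depend on $\theta$.

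Your route is more elementary and proves more. The two objectives agree up to an additive constant, so they have the same minimizers and landscape. That identity needs neither a Hessian of $\phi$ nor differentiability of $u_{\theta,t}$.

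The gradient-level route has one advantage: it never needs the losses themselves to be finite, and here they are not. With the jump-to-endpoint target, the term you discard is $C_t\,\mathbb{E}_{\tilde p_t}\|u_t(\cdot,X_t\mid Z)\|_2^2=\tfrac{2}{1-t}\,\mathbb{E}_Z[w(Z)\mathbf{1}\{A_0\neq A_1\}]$. This is not integrable on $[0,1)$. So if $\mathcal{L}^w_t$ is read as including the average over $t\sim\mathrm{Unif}[0,1]$, as in Eq.~\eqref{eq:qcdfm}, then ``the losses differ by a constant'' becomes $\infty-\infty$.

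The fix is to keep your identity at fixed $t$ and average the per-$t$ gradient identities, which is what the paper does. Those gradients stay bounded in $t$ for a bounded rate network, because the $1/(1-t)$ rate is offset by the mass $1-t$ sitting at $A_0$.

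A small correction to your Bregman remark: $D(a,b)$ is not affine in $a$; already $\|a-b\|_2^2$ is not. What is affine in $a$ for fixed $b$ is $D(a,b)-\phi(a)$. Since $\phi(a)$ does not depend on $\theta$, your tower step still goes through once that term is discarded.
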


% {\color{red}
% \paragraph{Remark (finite-sample implementation).}
% In Algorithm~\ref{alg:sketch}, the weights $w(Z)$ are computed using a self-normalized Monte Carlo estimate with endpoints sampled from the current policy. While this introduces dependence on previous iterates, the sampled endpoints are treated as fixed within each optimization step, and gradients are not taken through the sampling process. Thus, $w(Z)$ is effectively independent of $\theta$ during each update, matching the assumption of Theorem~\ref{thm:guided_grad_equiv_main}.
% }

\begin{corollary}[Recovery of the KL-Regularized Optimal Policy]
\label{cor:optimal_policy}
Under standard coverage and realizability assumptions, Algorithm~\ref{alg:sketch}
recovers the KL-regularized (Boltzmann) optimal policy as the terminal distribution
of the learned CTMC.
\end{corollary}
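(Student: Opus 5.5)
The corollary follows by chaining Theorem~\ref{thm:guided_grad_equiv_main} with a computation of the terminal marginal of the guided path. Throughout we fix $(s,\omega)$ and work under idealized assumptions, which we make precise below.

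\emph{Assumptions.} The rate class $u_{\theta,t}$ is rich enough to represent any valid generator (realizability). The learned quantities are exact: $\hat\mu=\mu$ and $Q_\omega$ is the true scalarized $Q$-function. The support set size satisfies $M\to\infty$ with $\hat\beta\to\beta$, so that the self-normalized weight in Eq.~\eqref{eq:q_weight} converges to $\exp(\beta Q_\omega(s,a))/C(s)$ for endpoints drawn from the current model. Finally, every action in the support of $\mu(\cdot\mid s)$ is reached (coverage).

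\emph{Step 1: exact minimization.} By Theorem~\ref{thm:guided_grad_equiv_main}, minimizing $\int_0^1\mathcal{L}_t^w\,dt$ has the same stationary points as the guided marginal loss in Eq.~\eqref{eq:marg_loss}. Because that loss is a squared-$\ell_2$ regression weighted by $C_t\tilde p_t$, realizability makes its minimizer equal to $u_{\theta,t}(\cdot,x)=\tilde u_t(\cdot,x)$ for all $x$ in the support of $\tilde p_t$. This holds since, for a fixed $\theta$-independent target, the minimizer of an expected squared loss is the conditional mean.

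\emph{Step 2: the guided rate generates the tilted mixture.} Define $\tilde p_t(x)=\sum_z \tilde p_t(x,z)$. Using that each conditional rate $u_t(\cdot,\cdot\mid Z)$ satisfies the Kolmogorov equation for $p_{t\mid Z}$ (Appendix~\ref{app:ctmc_cdfm_details}), we check the standard marginalization identity
\[
\frac{d}{dt}\tilde p_t(y)=\sum_x \tilde u_t(y,x)\tilde p_t(x).
\]
The key observation is that $w$ depends only on $Z$. Hence $\tilde p_t(x,z)=\tilde q(z)\,p_{t\mid z}(x)$ with $\tilde q(z)\propto w(z)q(z)$ independent of $t$, and therefore $C_t$ is constant. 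Differentiating then gives
\[
\frac{d}{dt}\tilde p_t(y)=\sum_z\tilde q(z)\sum_x u_t(y,x\mid z)p_{t\mid z}(x)=\sum_x \tilde u_t(y,x)\tilde p_t(x),
\]
where the last equality is the definition in Eq.~\eqref{eq:utilde_main}. Uniqueness of solutions to this finite linear ODE on $[0,1)$ implies that the CTMC driven by $u_{\theta,t}=\tilde u_t$ and started from $\tilde p_0$ has marginals $\tilde p_t$. The $1/(1-t)$ singularity is handled by taking the limit $t\to1$; the jump-to-endpoint process is absorbed at $A_1$, so the limit exists.

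\emph{Step 3: identify the terminal law.} At $t=1$, $p_{1\mid Z}=\delta(\cdot,A_1)$, so
\[
\tilde p_1(a)\propto\sum_z w(z)q(z)\delta(a,A_1)=w(s,\omega,a)\,q_1(a),
\]
where $q_1$ is the endpoint marginal. There are two cases. For the idealized weight applied with endpoints distributed as $\mu(\cdot\mid s)$ (the Phase~1 fixed point), we obtain $\tilde p_1(a)\propto\mu(a\mid s)\exp(\beta Q_\omega(s,a))$, which is exactly $\pi^\star_\omega$ in Eq.~\eqref{eq:boltzmann_policy}. For the iterative Phase~3 version, endpoints come from the current policy $\pi_k$ and the self-normalized weights implement a Boltzmann tilt of $\pi_k$; with $\hat\beta$ scaled appropriately, the fixed point is again the Boltzmann tilt of $\mu$.

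\emph{Main obstacle.} Step 3 in the iterative Phase~3 regime is the hardest part. The endpoints there are sampled from the model itself, so the tilt compounds across iterations. Recovering $\pi^\star$ rather than $\mu e^{k\beta Q}$ requires either interpreting $\hat\beta$ as an incremental temperature whose accumulated value is $\beta$, or arguing that the procedure converges to the stated fixed point. I would first try the cleanest route: state the ideal version with endpoints drawn from $\mu$, and treat Phase~3 as its Monte Carlo approximation, with the self-normalized estimator consistent as $M\to\infty$.
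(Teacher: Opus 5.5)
Your proposal is correct and follows essentially the same route as the paper's proof. The shared chain is: the learned rate equals $\tilde u_t$, which generates the tilted path $\tilde p_t$, whose terminal law is the $w$-reweighted endpoint marginal and hence $\pi^\star_\omega$ when endpoints are $\mu(\cdot\mid s)$-distributed. The paper adopts the same idealization without addressing Phase~3 self-sampling, but it merely asserts that $\tilde u_t$ generates $\tilde p_t$ and states realizability as $\mathcal{L}_t^w(\theta)=0$, which the conditional loss cannot attain in general. Your constant-$C_t$ marginalization argument and conditional-mean characterization handle both of these points correctly, so your version is more careful than the paper's.

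Your fallback plan is the right one, but drop the alternative of arguing convergence to a fixed point. Iterating the normalized reweighting by $e^{\hat\beta Q_\omega(s,\cdot)}$ from $\mu(\cdot\mid s)$ converges to $\mu(\cdot\mid s)$ conditioned on the maximizers of $Q_\omega(s,\cdot)$ over its support, not to $\pi^\star_\omega$.
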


% \paragraph{Discussion.}
Theorem~\ref{thm:guided_grad_equiv_main} ensures that value-guided conditional training
optimizes a valid guided marginal objective, providing a principled foundation for
policy improvement in discrete action spaces.
Corollary~\ref{cor:optimal_policy} shows that this objective recovers the desired
optimal policy.
Formal statements and proofs are provided in Appendix~\ref{sec:proof_thm1} and~\ref{sec:proof_cor1} respectively.

% \paragraph{{\color{red}Recovery of the KL-regularized optimal policy (informal).}}
% Under standard dataset coverage and realizability assumptions, we further show that matching the induced guided marginal dynamics implies that the terminal CTMC distribution coincides with the KL-regularized (Boltzmann) policy.
% A formal statement and proof are provided in Appendix~\ref{sec:proof_thm2}.

% \paragraph{Proof Sketch.}
% The proof relies on the fact that the guided marginal rate $\tilde{u}_t$ generates the tilted probability path $\tilde{p}_t$. Since the conditional path $p_{t|Z}$ is constructed to be a Dirac delta at $t=1$, the terminal distribution $\tilde{p}_1$ becomes a weighted empirical distribution of the dataset endpoints. The weights $w(Z)$ are chosen specifically to match the Boltzmann exponent $\exp(\beta Q)$, ensuring that the aggregated flow converges to the optimal policy.
% Detailed derivations, including the application of Bregman gradient affine invariance, are provided in Appendix~\ref{sec:appendix_proofs}.

\section{Experiments}
\label{sec:experiments}

% Since our method is flexible and our contribution is multi-faceted, 
We conduct experiments on discretized MuJoCo benchmark tasks, discrete multi-objective environments and multi-agent benchmarks to evaluate the performance of our design from different aspects.

% Our experiments provide a central insight: \textbf{discrete generative policies provide a flexible and broadly applicable interface for action generation in offline RL}, encompassing both discretized continuous-control settings and inherently discrete decision problems.
% Because any continuous action space can be discretized into a finite set of actions,
% our CTMC-DFM pipeline can be applied \emph{without modification} to continuous-control benchmarks,
% while also natively handling intrinsically discrete and combinatorial action spaces.
% We therefore evaluate in three stages, each highlighting a different aspect of the framework:
% (i) \textbf{continuous-control via discretization} (comparing with standard offline RL baselines on MuJoCo \cite{todorov2012mujoco} benchmark tasks),
% (ii) \textbf{sampler stability and interpretability} in purely discrete control (isolating CTMC simulation effects), and
% (iii) \textbf{full preference-conditioned offline MORL} (showing that a single policy can smoothly adjust to different objective trade-offs without retraining).

\begin{figure}[ht]
  \centering
  \vspace{-4pt}
  \begin{minipage}[t]{0.75\linewidth}
    \vspace{0pt}
    \centering
    \includegraphics[
      width=\linewidth,
    ]{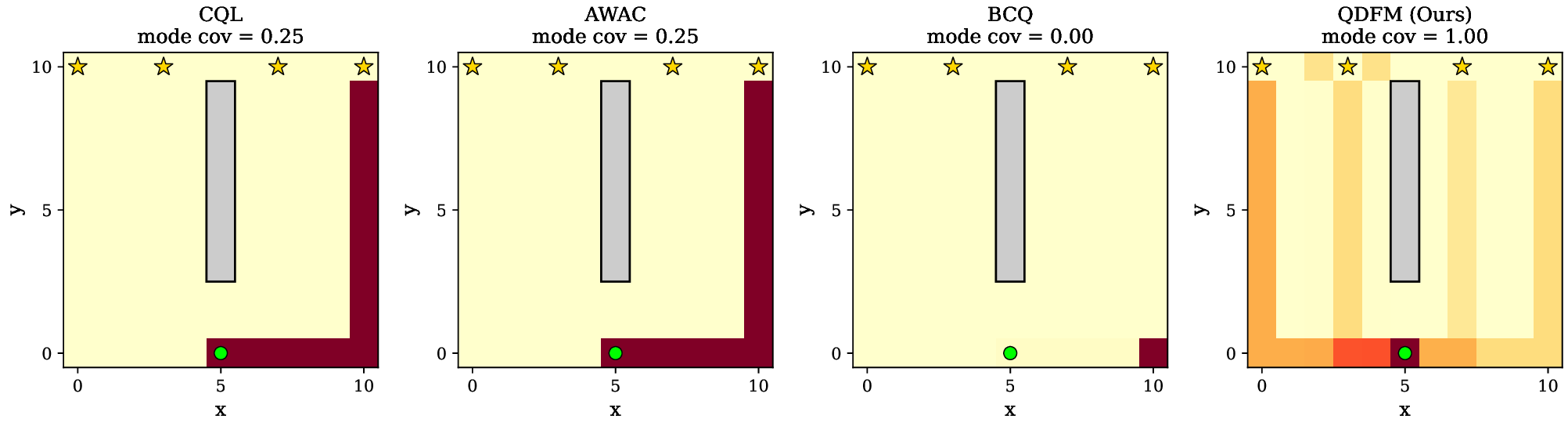}
  \end{minipage}
  \hfill
  \begin{minipage}[t]{0.21\linewidth}
    \vspace{0pt}
    \caption{\small
    Multimodal decision-making on a 4-goal gridworld. All baselines converge to one goal;
    QDFM recovers all four from the same offline data.}
    \label{fig:multigoal}
  \end{minipage}
  \vspace{-7pt}
\end{figure}

\paragraph{Why use flow matching policies?}
When offline data contains multiple successful strategies, standard methods learn only one of
them because they select actions deterministically. Figure~\ref{fig:multigoal} shows this on
a gridworld with four goals: every baseline reaches exactly one goal (mode coverage $= 0.25$),
while QDFM reaches all four (mode coverage $= 1.00$). This ability to preserve diverse
behaviors from data is useful in practical settings like multi-route planning or personalized decision-making,
where a single strategy is insufficient. We provide a full analysis
in Appendix~\ref{app:multigoal}.

\subsection{Discretized MuJoCo Benchmark}
\label{sec:mujoco_discrete_bench}

Here we investigate whether a \emph{inherently discrete} policy can produce
effective behavior on continuous-control tasks when actions are discretized to show our discrete generative action generator can be used
as an interface for continuous control. We evaluate on MuJoCo benchmark from Minari~\citep{todorov2012mujoco, farama2023gymnasium} comparing our method to 7 baselines: BRAC~\citep{luo2023action}, AWAC~\citep{nair2020awac}, advantage-weighted behavioral cloning (AWBC)~\citep{peng2019advantage}, BCQ~\citep{fujimoto2019off},
CQL~\citep{kumar2020conservative}, a greedy value-selection baseline (GreedyQ), 
% which
% selects $a=\arg\max_{a\in\mathcal{A}} Q(s,a)$ under the learned critic 
and a categorical policy-improvement baseline (BoltzQ). We adapt these baselines to the same discretized action space (action divisions $K_{\text{act}}=256$) for fair comparison.

As shown in Table~\ref{tab:main_baselines_256}, our method is competitive enjoying better performance than all baselines in 4 out of 6 tasks.  
Interestingly, we find high variance for all methods due to discretized-action setting. When continuous actions are discretized into a finite set, small changes in learned policy can lead to selecting entirely different discrete actions, resulting in abrupt changes in behavior and returns in contrast to continuous policies, which vary smoothly with parameter updates (as illustrated in Figure~\ref{fig:mofork_cartoon}). Notably, our method QDFM achieves an order-of-magnitude reduction in action generation time (5.87ms) compared to reported values in \citet{zhang2025energy} for continuous action space diffusion-based (75.05ms, 55.86ms) and optimal-transport-based (27.26ms) methods. We provide same-hardware runtime comparisons on training and inference in Appendix~\ref{app:runtime}.

% {\color{red}All methods use comparable network sizes; QDFM does not rely on larger models for its performance gains. We provide further implementation details in Appendix~\ref{ap:imp_details}.

% Across all tasks, QDFM outperforms the categorical (Boltzmann) policy-improvement baseline (BoltzQ) that uses the same learned critic demonstrating that our performance gains are not solely due to the learned critic, but arise from the generative modeling of action distributions enabled by discrete flow matching.

% showing that QDFM maintains competitive training cost while incurring higher inference time due to CTMC-based sampling.

We also validate CTMC simulation behavior on intrinsically discrete single-objective control tasks and results are reported in Appendix~\ref{app:ctmc_correctness}. Further ablations on different number of action divisions $K_{\text{act}}$, stability, the number of Euler integration steps, the number of sampled actions $M$, and the number of CTMC integration steps $N$, including their runtime trade-offs, are provided in the Appendix~\ref{app:action_divisions},~\ref{app:comparisons},~\ref{app:ablations} and~\ref{app:mn_runtime_ablations}. 
% Our results are consistent with the message that our value-guided discrete flow remains competitive compared to baselines across multiple discretizations and show that discrete CTMC-based action generation can recover continuous-control behavior through discretization, while offering computational trade-offs that are difficult to achieve vice-versa with purely continuous-flow samplers.

% \subsection{Single-Objective Discrete Control Tasks}
% \label{sec:ctmc_correctness_main}

\begin{table*}[h]
\centering
\caption{Performance on discretized MuJoCo benchmark. We report mean $\pm$ standard deviation over 5 seeds. Evaluation numbers are normalized as suggested by \citet{fu2020d4rl}.}
\label{tab:main_baselines_256}
\small
\setlength{\tabcolsep}{0.4pt}
\renewcommand{\arraystretch}{1.15}
\resizebox{\textwidth}{!}{%
\begin{tabular}{llcccccccc}
\toprule
Dataset & Env 
%& IQL 
& AWAC 
& AWBC 
& BCQ 
& CQL 
& GreedyQ 
& BRAC 
& BoltzQ 
& \textbf{QDFM (Ours)} \\
\midrule

Expert & HalfCheetah
%& \textbf{6.11$\pm$3.85}
& 2.49$\pm$2.47
& -0.71$\pm$0.45
& \textbf{10.33}$\pm$5.49
& -1.19$\pm$0.01
& -0.31$\pm$0.00
& -0.03$\pm$0.01
& 4.84$\pm$2.15
& 1.32$\pm$1.09 \\

Medium & HalfCheetah
%& \textbf{22.83$\pm$19.32}
& 19.69$\pm$16.28
& 2.22$\pm$1.23
& \textbf{20.23}$\pm$8.21
& -0.58$\pm$0.81
& -0.23$\pm$0.06
& 0.42$\pm$1.65
& 4.62$\pm$0.58
& 6.02$\pm$2.00 \\

Expert & Hopper
%& 16.94$\pm$13.94
& 0.95$\pm$0.00
& 1.03$\pm$0.03
& 4.95$\pm$4.75
& 0.90$\pm$0.00
& 0.90$\pm$0.00
& 0.81$\pm$0.01
& 0.69$\pm$0.01
& \textbf{17.74$\pm$7.61} \\

Medium & Hopper
%& \textbf{45.42$\pm$4.77}
& 1.54$\pm$0.03
& 7.12$\pm$0.16
& 36.18$\pm$26.22
& 1.13$\pm$0.02
& 1.11$\pm$0.02
& 2.82$\pm$0.04
& 1.84$\pm$0.03
& \textbf{37.79$\pm$16.11} \\

Expert & Walker2d
%& 2.89$\pm$2.18
& 1.53$\pm$0.70
& -0.20$\pm$0.02
& 3.26$\pm$2.92
& -0.31$\pm$0.02
& -0.30$\pm$0.03
& 1.97$\pm$0.37
& 0.06$\pm$0.20
& \textbf{4.85$\pm$2.34} \\

Medium & Walker2d
%& 13.11$\pm$9.42
& -0.18$\pm$0.02
& 2.48$\pm$2.42
& 9.30$\pm$5.93
& 6.65$\pm$0.25
& 6.59$\pm$0.25
& 3.15$\pm$1.98
& -0.09$\pm$0.01
& \textbf{14.01$\pm$9.88} \\

\bottomrule
\end{tabular}
}
\end{table*}

\subsection{Multi-Objective Discrete Benchmarks}
We evaluate QDFM on DST (2 objectives) and RG (3 objectives)~\cite{vamplew2011empirical}, comparing against preference-conditioned scalarized CQL using HV ratio, spacing (SP), and number of non-dominated solutions (ND), where higher ND indicates better Pareto-front coverage. Our goal is to show that one preference-conditioned QDFM model can generate high-quality actions across preferences without training separate policies for each scalarization. As shown in Table~\ref{tab:morl_results}, QDFM recovers many Pareto-optimal solutions, while scalarized CQL often collapses to fewer trade-offs because each scalarization targets one solution. We also compare with MODULI~\citep{yuan2025moduli} on D4MORL MO-Hopper under the same offline protocol; Table~\ref{tab:moduli_comparison} shows that QDFM achieves higher hypervolume and more closely approaches the dataset Pareto upper bound. Additional multi-objective results are in Appendix~\ref{cartpole_morl_appendix},~\ref{app:mofork_heatmaps},~\ref{app:resource_gathering}.

% We evaluate QDFM on two standard multi-objective offline discrete-action benchmarks: Deep-Sea-Treasure (DST, 2 objectives) and Resource-Gathering (RG, 3 objectives)~\cite{vamplew2011empirical}. We compare against preference-conditioned scalarized CQL and report hypervolume ratio (HV ratio), spacing (SP), and the number of non-dominated solutions (ND), where higher ND indicates better Pareto-front coverage.

% Our goal is to show that a single preference-conditioned QDFM model can generate high-quality actions across many preferences without training a separate policy for each scalarization. As shown in Table~\ref{tab:morl_results}, QDFM recovers many non-dominated Pareto solutions, while scalarized CQL often collapses to fewer solutions because each scalarization targets one trade-off. A policy is Pareto-optimal if no other policy achieves higher return in all objectives, and the Pareto front is the set of all such non-dominated (ND) solutions. 

% We found it interesting that scalarized CQL achieved a slightly higher hypervolume (HV) in one case due to sensitivity of HV to extreme Pareto points. Scalarized methods like CQL training separate policies for each preference might occasionally better optimize individual extreme solutions. In contrast, QDFM learns a single preference-conditioned policy that models the full trade-off distribution, leading to broader and more uniform coverage of the Pareto front (higher ND, low SP).

\begin{table}[ht]
\centering

\begin{minipage}[t]{0.58\linewidth}
\centering
\caption{
Multi-objective performance on discrete benchmarks.
We report hypervolume ratio (HV Ratio) with respect to dataset Pareto front, spacing metric (SP), and number of distinct
non-dominated solutions (ND).
Scalarized CQL collapses to single solution (ND=1), while QDFM recovers diverse
approximation of Pareto front.
}
\label{tab:morl_results}
\resizebox{\linewidth}{!}{%
\begin{tabular}{llccc}
\toprule
Environment & Algorithm
& \textbf{HV Ratio} ($\uparrow$)
& \textbf{SP} ($\downarrow$)
& \textbf{ND} ($\uparrow$) \\
\midrule
\multirow{2}{*}{Deep Sea Treasure}
& CQL (Scalar)
& \textbf{0.92 $\pm$ 0.11}
& -- 
& 1 \\
& QDFM (Ours)
& 0.90 $\pm$ 0.03
& 0.18 $\pm$ 0.03
& \textbf{15} \\
\midrule
\multirow{2}{*}{Resource Gathering}
& CQL (Scalar)
& 0.67 $\pm$ 0.10
& -- 
& 1 \\
& QDFM (Ours)
& \textbf{0.82 $\pm$ 0.05}
& 0.07 $\pm$ 0.02
& \textbf{4} \\
\bottomrule
\end{tabular}%
}
\end{minipage}
\hfill
\begin{minipage}[t]{0.39\linewidth}

\centering
\caption{Comparison with MODULI on MO-Hopper. We report hypervolume (HV) and percentage of the dataset Pareto upper bound (UB).}
\label{tab:moduli_comparison}
\begingroup
\setlength{\tabcolsep}{1.5pt}
\renewcommand{\arraystretch}{1.35}
\small
\begin{tabular}{lcc}
\toprule
Method & HV ($\times 10^7 \uparrow$) & \% of UB ($\uparrow$) \\
\midrule
MODULI & 2.025 & 96.02\% \\
QDFM (Ours) & \textbf{2.085} & \textbf{98.88\%} \\
Upper Bound & 2.109 & 100\% \\
\bottomrule
\end{tabular}
\endgroup
\end{minipage}

\end{table}

% We also compare with MODULI \citep{yuan2025moduli} on D4MORL MO-Hopper benchmark under same offline protocol.
% As shown in Table~\ref{tab:moduli_comparison}, QDFM achieves higher hypervolume than MODULI and more closely approaches the dataset Pareto upper bound, demonstrating its effectiveness in modeling diverse trade-offs in multi-objective settings. Further experimental results on various multi-objective environments are reported in Appendix~\ref{cartpole_morl_appendix},~\ref{app:mofork_heatmaps},~\ref{app:resource_gathering}.

\subsection{Offline Multi-Agent RL (SMAC)}

We evaluate our method on StarCraft Multi-Agent Challenge (SMAC)~\citep{samvelyan2019starcraft}, focusing on decentralized control with combinatorial joint action spaces and compare against SOTA offline MARL baselines including CFCQL~\citep{shao2023counterfactual}, OMAR~\citep{pan2022plan}, OMIGA~\citep{wang2023offline}, MACQL~\citep{kumar2020conservative} and BC. QDFM achieves strong and consistent performance outperforming all baselines on 2s3z and 5m\_vs\_6m medium-replay dataset, and competitive results on others showing our proposed discrete flow-based policy scales to combinatorial multi-agent action spaces.
Additional win-rate results and dataset details are provided in Appendix~\ref{app:smac_winrates} and Appendix~\ref{app:smac_datasets}.

\begin{table*}[ht]
\centering
\caption{Offline MARL performance on SMAC benchmarks (Return $\uparrow$). Results are averaged over 5 seeds.}
\label{tab:smac_returns}
\small
\resizebox{\textwidth}{!}{%
\begin{tabular}{llcccccc}
\toprule
Map & Dataset & BC & MACQL & CFCQL & OMAR & OMIGA & QDFM \\
\midrule

\multirow{1}{*}{2s3z} 
& medium-replay 
& 16.71 $\pm$ 0.09 
& 15.30 $\pm$ 1.27 
& 16.79 $\pm$ 0.27 
& 15.10 $\pm$ 1.31 
& 16.27 $\pm$ 0.73 
& \textbf{17.06 $\pm$ 0.39} \\

% \midrule

% \multirow{1}{*}{3m} 
% & medium 
% & 9.60 $\pm$ 0.16 
% & 11.69 $\pm$ 1.15 
% & 11.75 $\pm$ 0.32 
% & 8.84 $\pm$ 0.07 
% & \textbf{11.92 $\pm$ 0.24} 
% & 11.02 $\pm$ 1.10 \\

\midrule

\multirow{4}{*}{5m\_vs\_6m}
& expert 
& 12.13 $\pm$ 0.23 
& 12.17 $\pm$ 1.47 
& \textbf{15.05 $\pm$ 0.67} 
& 11.32 $\pm$ 0.43 
& 14.95 $\pm$ 1.52 
& 14.99 $\pm$ 0.19 \\

& medium 
& 11.21 $\pm$ 0.08 
& 11.26 $\pm$ 0.20 
& \textbf{11.59 $\pm$ 1.23} 
& 11.01 $\pm$ 1.84 
& 11.48 $\pm$ 0.73 
& 11.58 $\pm$ 1.55 \\

& medium-replay
& 6.89 $\pm$ 0.11
& 6.66 $\pm$ 0.13
& 6.76 $\pm$ 0.11
& 6.83 $\pm$ 0.42
& 6.74 $\pm$ 0.13
& \textbf{6.94 $\pm$ 0.25} \\

& poor 
& 1.30 $\pm$ 0.25 
& 1.54 $\pm$ 0.14 
& 1.41 $\pm$ 0.05 
& \textbf{3.43 $\pm$ 0.15} 
& 1.24 $\pm$ 0.02 
& 1.67 $\pm$ 0.29 \\

\bottomrule
\end{tabular}
}
\end{table*}

\section{Conclusion}
\label{conclusion}

We presented QDFM, a generative policy framework for offline RL that replaces continuous
probability flows with a CTMC over discrete actions. By learning transition rates
via discrete flow matching and guiding them with Q-weighted endpoint reweighting,
our method can generate actions for varying preferences at inference time without
retraining. Beyond matching or exceeding baselines on standard benchmarks showing strong performance in multi-objective and multi-agent settings, QDFM
recovers multiple behavioral modes from multimodal offline data, a capability that
deterministic baselines structurally lack.

\paragraph{Limitations and future work.}
QDFM requires simulating a CTMC at inference time, which is comparatively slower than
single-pass action selection used by traditional methods and reducing this cost through single-step generation is a promising direction.
The multi-goal gridworld proves our multimodality advantage on a smaller scale and validating this on more complex domains and scaling
to larger discrete action spaces such as language token selection are natural next
steps.

\bibliography{references}
\bibliographystyle{plainnat}

\appendix

% APPENDIX

\newpage
\appendix
\onecolumn

\section{Theory and Proofs}
\label{sec:appendix_proofs}

\subsection{Conditional generator derivation and CTMC validity}
\label{app:ctmc_cdfm_details}

\paragraph{Time derivative of the path.}
For any $a'\in\mathcal A$, differentiating
\eqref{eq:dfm-mixture-path} yields
\begin{equation}
\label{eq:dfm-path-derivative}
\frac{d}{dt}p_{t\mid Z}(a')
=
\delta(a',A_1)-\delta(a',A_0).
\end{equation}
This derivative is fully determined by the chosen path and involves
no modeling choices.

\paragraph{Kolmogorov forward equation.}
Let $u_t(a',a\mid Z)$ denote time-dependent CTMC rates.
The Kolmogorov equation requires
\begin{equation}
\label{eq:dfm-kolmogorov}
\frac{d}{dt}p_{t\mid Z}(a')
=
\sum_{a\in\mathcal A}
u_t(a',a\mid Z)\,p_{t\mid Z}(a).
\end{equation}
Since $p_{t\mid Z}$ has support only on $\{A_0,A_1\}$,
the sum reduces to
\begin{equation}
\label{eq:dfm-reduced-sum}
\frac{d}{dt}p_{t\mid Z}(a')
=
(1-t)\,u_t(a',A_0\mid Z)
+
t\,u_t(a',A_1\mid Z).
\end{equation}

\paragraph{Endpoint-absorbing design choice.}
We impose the standard DFM constraint that the endpoint is absorbing (since no jumping to another point as we are at endpoint):
\[
u_t(a',A_1\mid Z)=0
\qquad \forall\,a'.
\]
Under this constraint, \eqref{eq:dfm-reduced-sum} becomes
\begin{equation}
\label{eq:dfm-reduced}
\frac{d}{dt}p_{t\mid Z}(a')
=
(1-t)\,u_t(a',A_0\mid Z).
\end{equation}

\paragraph{Solving for the conditional rates.}
Equating \eqref{eq:dfm-reduced} with the true derivative
\eqref{eq:dfm-path-derivative} gives
\[
(1-t)\,u_t(a',A_0\mid Z)
=
\delta(a',A_1)-\delta(a',A_0),
\]
which admits the following valid CTMC generator (jump-to-endpoint) choice
\begin{equation}
\label{eq:dfm-conditional-rate-A0}
u_t(a',A_0\mid Z)
=
\frac{1}{1-t}
\Big(
\delta(a',A_1)-\delta(a',A_0)
\Big).
\end{equation}
Combining this with the absorbing condition at $A_1$, the conditional
rate kernel can be written compactly as
\begin{equation}
\label{eq:dfm-conditional-rate}
u_t(a',a\mid Z)
=
\frac{1}{1-t}
\Big(
\delta(a',A_1)-\delta(a',a)
\Big).
\end{equation}

If the chain is currently at $a$, the only positive off-diagonal rate
is a jump toward the endpoint $A_1$.
The diagonal term is negative to ensure each column sums to zero,
as required for valid CTMC rates.
Once $A_1$ is reached, the process remains there.

\paragraph{Validity of the conditional rate field.}
A central requirement for discrete flow matching is that the conditional rate field
$u_t(\cdot,\cdot\mid Z)$ defines a valid Continuous-Time Markov Chain (CTMC) generator.
In particular, for every conditioning variable $Z$ and every current action $a\in\mathcal{A}$,
the rates must satisfy the generator constraints:
(i) non-negativity of off-diagonal entries, $u_t(a',a\mid Z)\ge 0$ for all $a'\neq a$,
and (ii) conservation of probability mass, $\sum_{a'\in\mathcal{A}} u_t(a',a\mid Z)=0$.
While compact expressions for conditional generators are often convenient, they do not make these constraints explicit and may yield negative off-diagonal entries for some $(a',a)$,
in which case the resulting operator does not correspond to a valid CTMC generator

Therefore, to ensure correctness by construction, we explicitly define the conditional rate field in a
piecewise form in Eq.~\eqref{eq:jump_to_endpoint_main} that enforces the CTMC constraints for all $t\in[0,1]$ and all realizations of $Z$.
To the best of our knowledge, this is the first energy-guided, preference-conditioned discrete
policy optimization framework to explicitly enforce CTMC generator validity at the level of the
conditional rate field, rather than relying on implicit or relaxed formulations.

\paragraph{Conditional CTMC rates.}
To construct a valid conditional probability flow that exactly interpolates between
the source action $A_0$ and the target endpoint $A_1$, we define a time-dependent
CTMC generator whose dynamics concentrate all probability mass at $A_1$ as
$t \to 1$ while satisfying the generator constraints by construction.
Concretely, we use the following jump-to-endpoint rate field:
\[
u_t(a',a \mid Z) =
\begin{cases}
\frac{1}{1-t}, & a \neq A_1,\; a' = A_1, \\
-\frac{1}{1-t}, & a \neq A_1,\; a' = a, \\
0, & \text{otherwise},
\end{cases}
\]
with $A_1$ absorbing.
This generator induces the desired conditional path and guarantees nonnegative
off-diagonal rates and column-wise mass conservation, as required for a valid
CTMC \citep{gat2024discrete,lipman2024flow}.

\paragraph{Parameterization of the rate function.}
We enforce the CTMC generator constraints by construction. For each current action $a$, the network outputs unconstrained scores $g_{\theta,t}(a',a \mid s,\omega)$ for all $a' \neq a$. These are mapped through a nonnegative function (softplus) to obtain valid off-diagonal rates:
\[
u_{\theta,t}(a',a \mid s,\omega) = \mathrm{softplus}\big(g_{\theta,t}(a',a \mid s,\omega)\big), \quad a' \neq a.
\]
The diagonal entries are then defined to ensure zero column sum:
\[
u_{\theta,t}(a,a \mid s,\omega) = -\sum_{a' \neq a} u_{\theta,t}(a',a \mid s,\omega).
\]
This guarantees nonnegative off-diagonal rates and that $\sum_{a'} u_{\theta,t}(a',a \mid s,\omega)=0$, so $u_{\theta,t}$ is a valid CTMC generator by construction. The only approximation arises during inference (Algorithm~\ref{alg:ctmc_inference}), where we simulate the continuous-time process using an Euler discretization.

% =====================================================================
%  PREAMBLE — add these packages if you don't already have them:
%
%  \usepackage{amsmath, amssymb}
%  \usepackage{tikz}
%  \usetikzlibrary{arrows.meta, positioning, calc}
%  \usepackage{xcolor}
%  \usepackage{graphicx}   % for \resizebox
%
% =====================================================================
%  FIGURE — paste everything below into your document body.
%  All colour definitions are self-contained; nothing else in preamble.
% =====================================================================

\begin{figure}[t]
\centering

% ── colours defined locally so nothing extra is needed in preamble ──
\definecolor{accent}{HTML}{2D6A9F}
\definecolor{accenttwo}{HTML}{D4563A}
\definecolor{nodeblue}{HTML}{E8F0FA}
\definecolor{nodegray}{HTML}{F0F0F0}
\definecolor{bordergray}{HTML}{888888}
\definecolor{darktext}{HTML}{2B2B2B}
\definecolor{softblack}{HTML}{333333}
\definecolor{greenfill}{HTML}{E6F4EA}
\definecolor{greenbd}{HTML}{3A8A5C}
\definecolor{purplefill}{HTML}{EDE7F6}
\definecolor{purplebd}{HTML}{6A4C9C}

\resizebox{\textwidth}{!}{%
\begin{tikzpicture}[
    >=Stealth,
    every node/.style={font=\small, text=darktext},
    state/.style={
        circle, draw=softblack, line width=0.8pt,
        fill=nodeblue, minimum size=22pt,
        inner sep=0pt, font=\normalsize\bfseries
    },
    ratelabel/.style={font=\footnotesize, text=accent, fill=white,
        inner sep=1.5pt, rounded corners=1pt},
    algobox/.style={
        rectangle, rounded corners=3pt, draw=bordergray,
        line width=0.7pt, fill=nodegray, inner sep=5pt,
        font=\footnotesize, align=center, text=darktext
    },
    panellabel/.style={font=\large\bfseries, text=softblack,
        align=center},
    subcap/.style={font=\footnotesize, text=bordergray,
        align=center, text width=4cm},
]

% ==================================================================
%  (a) CTMC rate field
% ==================================================================
\begin{scope}[local bounding box=panelA]
    \node[state] (a)   at (0,0)      {$a$};
    \node[state] (ap)  at (2.6,0)    {$a'$};
    \node[state] (app) at (0,-2.6)   {$a''$};

    \draw[->, line width=0.7pt, color=accent, bend left=18]
        (a) to node[ratelabel, above, yshift=1pt]
            {$u_t(a',a)$} (ap);
    \draw[->, line width=0.7pt, color=accenttwo, bend left=18]
        (ap) to node[ratelabel, below, yshift=-1pt]
            {$u_t(a,a')$} (a);

    \draw[->, line width=0.7pt, color=accent, bend left=15]
        (app) to node[ratelabel, left, xshift=-1pt]
            {$u_t(a''\!,a)$} (a);
    \draw[->, line width=0.7pt, color=accenttwo, bend left=15]
        (a) to node[ratelabel, right, xshift=1pt]
            {$u_t(a,a'')$} (app);

    \node[subcap, anchor=north] at (1.3,-3.5)
        {Total outflow $=$ total inflow\\[1pt](mass conservation)};
\end{scope}

% ==================================================================
%  (b) Jump-to-endpoint field
% ==================================================================
\begin{scope}[xshift=6cm, local bounding box=panelB]
    \node[state, fill=nodegray] (A0) at (0,0) {$A_0$};
    \node[state, draw=greenbd, fill=greenfill] (A1)
        at (2.0,-2.0) {$A_1$};

    % Arrow A0 -> A1 (draw first, label placed separately)
    \draw[->, line width=0.9pt, color=accent] (A0) -- (A1);
    % Label to the right of the arrow, offset so it doesn't touch the line
    \node[ratelabel, anchor=west] at (1.15,-0.7)
        {$\dfrac{1}{1-t}$};

    % Self-loop on A1
    \draw[->, dashed, line width=0.6pt, color=bordergray]
        (A1) to [out=30, in=-30, looseness=4]
        node[right, font=\footnotesize, text=bordergray,
             xshift=2pt] {stay} (A1);

    % "a != A1" text and arrow
    \node[font=\footnotesize, text=accenttwo] (neq)
        at (-0.5,-3.6) {$a \neq A_1$};
    \draw[->, line width=0.7pt, color=accenttwo] (neq) -- (A1);
    % Label placed clearly above the arrow line
    \node[ratelabel, fill=none, anchor=south] at (0.35,-2.65)
        {$\dfrac{1}{1-t}$};

    \node[subcap, anchor=north] at (0.9,-4.4)
        {Eq.\,(9): \mbox{probability} mass\\jumps
         directly toward the\\target endpoint $A_1$};
\end{scope}

% ==================================================================
%  (c) Inference step
% ==================================================================
\begin{scope}[xshift=12.5cm, local bounding box=panelC]
    \node[algobox, draw=purplebd, fill=purplefill] (cur) at (0,0)
        {current action $a = A_{t_n}$};

    \node[algobox, below=0.5cm of cur] (lam)
        {$\lambda_n = \displaystyle\sum_{a'\neq a}
          u_{\theta,t_n}(a',a)$};
    \draw[->, line width=0.7pt, color=softblack] (cur) -- (lam);

    \node[algobox, draw=greenbd, fill=greenfill,
          below=1.0cm of lam, xshift=-1.8cm] (stay)
        {stay at $a$\\[1pt]prob.\ $1 - h\lambda_n$};

    \node[algobox, draw=accent, fill=nodeblue,
          below=1.0cm of lam, xshift=1.8cm] (jump)
        {jump to $a'$\\[1pt]prob.\ $h\lambda_n$\\[2pt]
         sample $a' \!\sim\!
           \dfrac{u_{\theta,t_n}(\cdot,\,a)}{\lambda_n}$};

    \draw[->, line width=0.7pt, color=greenbd]
        (lam.south) -- ++(0,-0.3) -| (stay.north);
    \draw[->, line width=0.7pt, color=accent]
        (lam.south) -- ++(0,-0.3) -| (jump.north);

    \node[subcap, text width=5.5cm, anchor=north]
        at ($(stay.south)!0.5!(jump.south)+(0,-0.45)$)
        {Algorithm 2: \mbox{Euler} approximation\\
         of the CTMC dynamics};
\end{scope}

% ==================================================================
%  Panel labels — horizontally aligned
% ==================================================================
\coordinate (labely) at (0,-6.6);
\node[panellabel] at (panelA.center |- labely)
    {(a) CTMC rate field};
\node[panellabel] at (panelB.center |- labely)
    {(b) Jump-to-endpoint field};
\node[panellabel] at (panelC.center |- labely)
    {(c) Inference step};

\end{tikzpicture}%
}% end resizebox

\caption{Intuition for the CTMC formulation.
\textbf{(a)}~The rate field specifies how probability mass flows
between actions.
\textbf{(b)}~The jump-to-endpoint target field moves mass directly
toward the endpoint.
\textbf{(c)}~Inference simulates the learned CTMC in discrete time
via stay-or-jump updates.}
\label{fig:ctmc-intuition}
\end{figure}

\paragraph{Why use the jump-to-endpoint rate field?}
Although many rate fields can satisfy the Kolmogorov forward equation and the CTMC generator constraints, our jump-to-endpoint construction provides a simple and stable parameterization with a closed-form conditional target. Instead of learning arbitrary pairwise transitions between all actions, the conditional process moves probability mass directly from the current action toward the endpoint $A_1$. This avoids unnecessary transitions, reduces the complexity of the target rate field, and makes the flow-matching regression problem easier to learn in practice.

\paragraph{Meaning of the CTMC rate field and generator constraints.}
The CTMC rate field $u_t(a',a)$ defines the instantaneous rate of transitioning from the current action $a$ to another action $a'$. In other words, it governs how probability mass moves over the discrete action space as time evolves. The Kolmogorov forward equation in Eq.~\eqref{eq:dfm-kolmogorov} describes this evolution of probability mass under the rate field. The constraints in Eq.~\eqref{eq:dfm-conditional-rate} ensure that the dynamics define a valid CTMC: off-diagonal entries must be nonnegative transition rates, while the diagonal entry balances the total outgoing mass so that each column sums to zero. This zero-column-sum condition enforces probability conservation, ensuring that total probability is preserved over time.

\paragraph{Connection between the rate field, CTMC validity, and inference.}
Eq.~\eqref{eq:dfm-kolmogorov} defines the probability flow induced by the CTMC rate field, while the generator constraints ensure that this flow corresponds to valid probability dynamics. Algorithm~\ref{alg:ctmc_inference} then simulates these dynamics in discrete time using stay-or-jump updates. At each step, the total outgoing rate determines the probability of leaving the current action, and the normalized off-diagonal rates determine which action is selected after a jump. Thus, the theory and inference procedure are directly connected: the Kolmogorov equation defines the continuous-time flow, the generator constraints make it a valid CTMC, and Algorithm~\ref{alg:ctmc_inference} provides a practical Euler simulation of the resulting action evolution. Figure~\ref{fig:ctmc-intuition} illustrates this relationship.

\paragraph{Alternative valid rate fields.}
Other valid rate-field constructions are possible. One option is a fully parameterized generator, where a neural network outputs a transition rate for every pair of actions $(a,a')$. The scores are mapped to nonnegative off-diagonal rates, and the diagonal entries are set to enforce the zero-column-sum constraint. While this is general, it requires learning all pairwise transitions and scales poorly with the action-space size, since the number of transition rates grows as $O(|\mathcal A|^2)$.

\paragraph{Energy-based rate fields.}
Another option is an energy-based construction, where transitions are defined indirectly through a learned energy function rather than by modeling each pairwise transition explicitly. For example, transition rates can be induced using a Boltzmann form over candidate next actions. However, valid transition probabilities require normalization over possible actions, which can be expensive for large action spaces and can introduce additional variance compared to directly specified conditional rate fields.

\paragraph{Mathematical advantage of the chosen field.}
The main advantage of our construction is simplicity, which makes the algorithm faster and more stable. Under the Kolmogorov forward dynamics, our generator preserves probability mass by construction: all mass is transported directly toward $A_1$ at rate $1/(1-t)$, and the diagonal term balances this outgoing flow. When the process reaches $A_1$, all transition rates become zero, so $A_1$ is absorbing.

\paragraph{Closed-form target and guaranteed endpoint arrival.}
The jump-to-endpoint construction provides a closed-form conditional target given $Z=(s,\omega,A_0,A_1)$, so the model regresses directly to known transition rates rather than learning the rate-field structure from data. It also avoids the need to learn arbitrary pairwise transitions, reducing the conditional target complexity from $O(|\mathcal A|^2)$ to $O(|\mathcal A|)$. With jump rate $1/(1-t)$, the survival probability of not jumping by time $t$ is
\[
\exp\left(-\int_0^t \frac{1}{1-\tau}\,d\tau\right)
= 1-t.
\]
Therefore, the probability of reaching $A_1$ by time $t$ is $t$, and as $t\to 1$, the conditional process reaches $A_1$ almost surely. Thus, the conditional process reaches the endpoint at terminal time, while the marginal process recovers the endpoint distribution.

\paragraph{Empirical comparison with alternative rate fields.}
To further justify our choice of rate field, we compare it with two alternative valid constructions: a diffuse generator and an energy-based Boltzmann generator. Both alternatives underperform in practice. Energy-based methods require normalization and show higher inference cost, while diffuse dynamics introduce unnecessary transitions and large training overhead. In contrast, our jump-to-endpoint construction directly transports mass to the target, enabling more stable and efficient learning.

\begin{table}[h]
\centering
\small
\setlength{\tabcolsep}{6pt}
\begin{tabular}{lcc}
\toprule
Method & Train Time (s) & Act Time (ms) \\
\midrule
QDFM & 52.41 & 7.27 \\
Energy-based & 52.93 & 14.87 \\
Diffuse-based & 6913.83 & 7.50 \\
\bottomrule
\end{tabular}
\caption{Runtime comparison between the jump-to-endpoint rate field used by QDFM and alternative valid CTMC rate-field constructions.}
\label{tab:rate_field_runtime}
\end{table}

These results show that the chosen jump-to-endpoint rate field achieves a good balance of simplicity, stability, and computational cost. It is mathematically valid by construction, has a closed-form conditional target, guarantees endpoint arrival as $t\to 1$, and avoids the unnecessary complexity of fully parameterized or normalized energy-based rate fields.

\subsection{Critic Backup and In-Support Approximation Details}
\label{app:critic_details}

This section provides additional details on the critic backup used in our
multi-objective offline reinforcement learning setup.
The material here is included for completeness and to clarify implementation
choices; the main text only relies on the resulting update rule.

\paragraph{Soft Bellman backup under KL regularization.}
For a fixed state $s$ and preference vector $\omega$, the KL-regularized optimal
policy has the Boltzmann form
\[
\pi^\star_\omega(a \mid s)
\propto
\mu(a \mid s)\exp\!\big(\beta Q_\omega(s,a)\big),
\]
where $\mu(\cdot \mid s)$ is the behavior policy and $\beta > 0$ controls the
strength of value guidance.
Under this policy, the corresponding soft value function is given by the
expectation
\[
V_\omega(s)
=
\mathbb{E}_{a \sim \pi^\star_\omega(\cdot \mid s)}
\big[ Q_\omega(s,a) \big].
\]

Substituting the Boltzmann form of $\pi^\star_\omega$ yields
\begin{equation}
\label{eq:soft_value_exact}
V_\omega(s)
=
\frac{
\sum_{a \in \mathcal{A}} \mu(a \mid s)\exp\!\big(\beta Q_\omega(s,a)\big)\,Q_\omega(s,a)
}{
\sum_{a \in \mathcal{A}} \mu(a \mid s)\exp\!\big(\beta Q_\omega(s,a)\big)
}.
\end{equation}
This expression corresponds to a soft (log-sum-exp–weighted) backup and is
standard in KL-regularized reinforcement learning
(e.g., \citet{todorov2006linearly,haarnoja2017reinforcement}).

\paragraph{In-support approximation.}
In offline reinforcement learning with large or discrete action spaces,
computing the full sums in Eq.~\eqref{eq:soft_value_exact} is typically infeasible.
Moreover, evaluating $Q_\omega(s,a)$ for actions outside the support of the
dataset can lead to severe extrapolation error.

To address this, we adopt an \emph{in-support approximation} in which the
expectations are estimated using a finite set of actions sampled from the
behavior policy.
Specifically, for a given next state $s'$, we sample a support set
\[
\mathcal{A}_{\mathrm{supp}}(s')
=
\{ a'_1, \dots, a'_M \}
\sim \mu(\cdot \mid s'),
\]
and approximate the soft value as
\begin{equation}
\label{eq:soft_value_support}
V_\omega(s')
\;\approx\;
\frac{
\sum_{j=1}^M \exp\!\big(\hat{\beta} Q_\omega(s',a'_j)\big)\,Q_\omega(s',a'_j)
}{
\sum_{j=1}^M \exp\!\big(\hat{\beta} Q_\omega(s',a'_j)\big)
}.
\end{equation}

This approximation preserves the KL-regularized structure of the backup while
ensuring that all evaluated actions lie within the empirical support of the
offline dataset. Here $\hat{\beta}$ denotes the finite-sample guidance temperature used in the
self-normalized Monte Carlo approximation in Eq.~\eqref{eq:q_weight}.
In the ideal infinite-support limit it coincides with the theoretical inverse temperature $\beta$.

\paragraph{Bellman regression target.}
Using the approximation above, the scalarized Bellman target used to train the
critic is
\[
y
=
\langle \omega, \mathbf{r} \rangle
+
\gamma V_\omega(s'),
\]
where $\mathbf{r}$ is the observed reward vector.
This target is used in a standard squared regression loss for the scalarized
critic values, while the underlying vector-valued critic
$\mathbf{Q}_\psi$ is shared across preferences.

\subsection{CTMC Inference: Test-Time Sampling for Arbitrary Preferences}
\label{app:ctmc_inference}

After training, the learned CTMC induces a discrete policy
$\pi_\theta(\cdot\mid s,\omega)$ as its terminal distribution.
This section describes the Euler-type simulation procedure used to sample actions
at test time for arbitrary preference vectors.

Given a state $s$ and a user preference $\omega$, we generate one action
by simulating the CTMC from $t=0$ to $t=1$.

\paragraph{Inputs.}
A trained rate model $u_{\theta,t}(a',a\mid s,\omega)$ satisfying the CTMC
rate constraints, behavior policy
$\mu(\cdot\mid s)$, and a time step $h>0$.

\paragraph{Euler-type CTMC simulation.}
We discretize time $n=0,1,\dots,N \quad\text{with } Nh=1
$ and
simulate one trajectory $(A_{t_n})_{n=0}^N$ as follows:
\begin{enumerate}
  \item \textbf{Initialize.} Sample an initial action
  \[
  A_{0}\sim \mu(\cdot\mid s)
  \]
  \item \textbf{Iterate.} For $n=0,1,\dots,N-1$, given the current action
  $A_{t_n}=a$, compute the \emph{total leaving rate}
  \[
  \lambda_n \;:=\; \sum_{a'\neq a} u_{\theta,t_n}(a',a\mid s,\omega),
  \]
  and form the corresponding \emph{jump distribution}
  \[
  \rho_n(a'\mid a)
  \;:=\;
  \frac{u_{\theta,t_n}(a',a\mid s,\omega)}{\lambda_n},
  \qquad a'\neq a,
  \]
  (if $\lambda_n=0$, define $\rho_n$ arbitrarily and the chain stays).
  Then perform one Euler step:
  \begin{enumerate}
    \item With probability $1-h\lambda_n$, set $A_{t_{n+1}}=a$
    (no jump).
    \item With probability $h\lambda_n$, draw $A_{t_{n+1}}\sim \rho_n(\cdot\mid a)$
    (jump to a new action).
  \end{enumerate}
  \item \textbf{Output.} Return $A_{1}:=A_{t_N}$ as the sampled action.
\end{enumerate}
The resulting $A_1$ is a sample from the learned policy
\(
A_1\sim \pi_\theta(\cdot\mid s,\omega).
\)
\paragraph{Step-size condition.}
The Euler simulation is valid provided the step size $h>0$ satisfies
$h\,\lambda_n \le 1$ for all $n$, ensuring that the jump probability
$h\lambda_n$ lies in $[0,1]$. In practice, this can be enforced by choosing
$h$ sufficiently small (we use this) or by clamping the total leaving rate.

\paragraph{Remarks.}
This simulation is the discrete analogue of integrating an ODE in continuous
flow-based policies. The probability of a jump in each step is proportional
to the total leaving rate $\lambda_n$; the destination of the jump is chosen
proportionally to the off-diagonal rates $u_{\theta,t_n}(a',a\mid s,\omega)$.
Smaller $h$ yields a more accurate approximation of the continuous-time chain.

\subsection{Proof of Proposition~\ref{prop:bregman_affine}}
\label{sec:proof_bregman_affine}

\textbf{Bregman divergences:} Let $\phi_x$ be a differentiable and strictly convex function. The Bregman divergence $D_x(y, v)$ is defined as the difference between the value of $\phi_x$ at $y$ and its first-order Taylor approximation at $v$:
\begin{equation}
\label{eq:bregman}
    D_x(y, v) := \phi_x(y) - \Big( \phi_x(v) + \langle \nabla \phi_x(v), y - v \rangle \Big).
\end{equation}
Geometrically, this measures the vertical distance between the convex function $\phi_x$ and its tangent hyperplane at $v$, evaluated at the point $y$.

\begin{proposition}[Affine invariance of the Bregman gradient]
\label{prop:bregman_affine}
For any integrable random variable $Y$ and any fixed vector $v$, the expected gradient of the Bregman divergence with respect to $v$ is equal to the gradient of the divergence at the expected value of $Y$:
\begin{equation}
    \E_Y [\nabla_v D_x(Y, v)] = \nabla_v D_x(\E[Y], v).
\end{equation}
\end{proposition}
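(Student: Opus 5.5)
The plan is to compute $\nabla_v D_x(y,v)$ explicitly from the definition \eqref{eq:bregman}, observe that it is affine in $y$, and then pass the expectation through by linearity. Throughout, assume $\phi_x$ is twice differentiable, so that the Hessian $\nabla^2\phi_x(v)$ exists. This is the standard setting for Bregman divergences used in flow matching, and it holds for the squared $\ell_2$ norm, where $\phi_x(y)=\|y\|_2^2$.

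First, I differentiate \eqref{eq:bregman} in $v$ with $y$ held fixed. The term $\phi_x(y)$ does not depend on $v$. The term $-\phi_x(v)$ contributes $-\nabla\phi_x(v)$. For the term $-\langle\nabla\phi_x(v),y-v\rangle$, the product rule gives $-\nabla^2\phi_x(v)(y-v)+\nabla\phi_x(v)$. The two gradient terms cancel, which leaves
\[
\nabla_v D_x(y,v) = -\nabla^2\phi_x(v)\,(y-v).
\]
For fixed $v$ this is an affine function of $y$.

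Second, I substitute $y=Y$ and take expectations. Since $Y$ is integrable and $\nabla^2\phi_x(v)$ is a deterministic matrix, linearity of expectation gives
\[
\E_Y[\nabla_v D_x(Y,v)] = -\nabla^2\phi_x(v)\,(\E[Y]-v).
\]
By the formula from the first step, the right-hand side equals $\nabla_v D_x(\E[Y],v)$. Note that $\E[Y]$ lies in the domain of $\phi_x$ when that domain is convex; this is automatic in our setting, where $\phi_x$ is defined on all rate vectors.

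The only delicate point is the regularity of $\phi_x$. If one only assumes that $\phi_x$ is differentiable, the Hessian expression is not available. In that case I would instead argue that the $v$-dependent part of $D_x(y,v)$ is $-\phi_x(v)+\langle\nabla\phi_x(v),v\rangle-\langle\nabla\phi_x(v),y\rangle$. This expression is affine in $y$ for each fixed $v$, so its expectation over $Y$ equals its value at $\E[Y]$, and the gradient identity holds wherever the gradient exists. The interchange of gradient and expectation is justified here because the $v$-dependence factors through the finite-dimensional quantities $\phi_x(v)$ and $\nabla\phi_x(v)$, which are multiplied by $y$ only linearly. I regard this as the main step that needs care; the rest is routine calculus.
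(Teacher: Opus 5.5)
Your proposal is correct and follows the paper's proof step for step. You differentiate the definition in $v$ with the product rule to get $\nabla_v D_x(y,v)=\nabla^2\phi_x(v)(v-y)$, observe that this is affine in $y$ for fixed $v$, and pass $\E_Y$ through by linearity.

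Stating the twice-differentiability of $\phi_x$ explicitly is a useful clarification, since the paper uses the Hessian without saying so. Your fallback paragraph is unnecessary: the statement already takes the expectation of the gradient, so no interchange of $\nabla_v$ and $\E_Y$ is ever needed.
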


\begin{proof}
\textbf{Step 1: Compute the gradient $\nabla_v D_x(y, v)$.} \\
We differentiate the definition of $D_x(y, v)$ with respect to $v$, treating $y$ as a constant.
\begin{align*}
    \nabla_v D_x(y, v) &= \nabla_v \Big( \phi_x(y) - \phi_x(v) - \langle \nabla \phi_x(v), y - v \rangle \Big) \\
    &= 0 - \nabla \phi_x(v) - \nabla_v \Big( \langle \nabla \phi_x(v), y - v \rangle \Big).
\end{align*}
Using the product rule for the inner product term (or simply expanding the gradient of $\langle f(v), g(v) \rangle$):
\[
\nabla_v \langle \nabla \phi_x(v), y - v \rangle
=
\nabla^2 \phi_x(v)(y - v) - \nabla \phi_x(v).
\]
Substituting this back:
\begin{align*}
    \nabla_v D_x(y, v) &= -\nabla \phi_x(v) - \Big( \nabla^2 \phi_x(v)(y - v) - \nabla \phi_x(v) \Big) \\
    &= -\nabla \phi_x(v) - \nabla^2 \phi_x(v)(y - v) + \nabla \phi_x(v) \\
    &= \nabla^2 \phi_x(v)(v - y).
\end{align*}

\textbf{Step 2: Take the expectation over $Y$.} \\
Since $v$ is fixed, the Hessian $\nabla^2 \phi_x(v)$ is constant with respect to the expectation. By the linearity of expectation:
\begin{align*}
    \E_Y [\nabla_v D_x(Y, v)] &= \E_Y \Big[ \nabla^2 \phi_x(v)(v - Y) \Big] \\
    &= \nabla^2 \phi_x(v) \Big( v - \E[Y] \Big).
\end{align*}

\textbf{Step 3: Recognize the form.} \\
The result $\nabla^2 \phi_x(v)(v - \E[Y])$ is exactly the formula derived in Step 1, but with $\E[Y]$ replacing $y$. Therefore:
\[
    \nabla^2 \phi_x(v)(v - \E[Y]) = \nabla_v D_x(\E[Y], v).
\]
\end{proof}

\subsection{Chain Rule for the Loss (used in Theorem~\ref{thm:guided_grad_equiv_main})}
\label{sec:prop_chain_rule}

\begin{proposition}[Chain rule]
\label{prop:chain_rule}
Let the loss function be $J(\theta) := D_x(y, u_{\theta,t}(\cdot, x))$, where $y$ is a fixed target independent of $\theta$. Then:
\begin{equation}
    \nabla_\theta J(\theta) = \nabla_v D_x(y, v) \Big|_{v=u_{\theta,t}(\cdot,x)} \cdot \nabla_\theta u_{\theta,t}(\cdot, x).
\end{equation}
\end{proposition}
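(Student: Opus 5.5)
This is the multivariate chain rule applied to the composition $\theta \mapsto v=u_{\theta,t}(\cdot,x) \mapsto D_x(y,v)$. The target $y$ is a fixed vector in $\mathbb{R}^{|\mathcal A|}$ that does not depend on $\theta$. So $J(\theta)=F(G(\theta))$ with
\[
G(\theta):=u_{\theta,t}(\cdot,x)\in\mathbb{R}^{|\mathcal A|}, \qquad F(v):=D_x(y,v).
\]

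\textbf{Differentiability of the two maps.} First, $F$ is differentiable in $v$. From the definition \eqref{eq:bregman}, $F(v)=\phi_x(y)-\phi_x(v)-\langle\nabla\phi_x(v),y-v\rangle$. This is differentiable provided $\phi_x$ is twice differentiable. That is the same regularity already used in Step~1 of the proof of Proposition~\ref{prop:bregman_affine}, where $\nabla_v F(v)=\nabla^2\phi_x(v)(v-y)$. For the squared $\ell_2$ loss actually used in \eqref{eq:qcdfm}, $\phi_x=\|\cdot\|_2^2$, and this gradient reduces to $2(v-y)$. Second, $G$ is differentiable in $\theta$ because the rate network is a composition of differentiable layers. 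The off-diagonal entries pass through a softplus, and the diagonal entry is minus a finite sum of those, so every coordinate of $G$ is differentiable.

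\textbf{Applying the chain rule.} With both maps differentiable, the chain rule gives
\[
\nabla_\theta J(\theta)=\big(\nabla_v F(v)\big|_{v=G(\theta)}\big)^{\top}\, \partial_\theta G(\theta),
\]
where $\partial_\theta G$ is the $|\mathcal A|\times\dim\theta$ Jacobian of $u_{\theta,t}(\cdot,x)$. Reading the product ``$\cdot$'' in the statement as this vector--Jacobian contraction, this is exactly the claimed identity. Fixedness of $y$ matters here: it means no term arises from differentiating $\phi_x(y)$ or from $y$ varying with $\theta$.

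\textbf{Main obstacle.} There is no real difficulty here. The only points to settle are the notation (reading the product as a transpose-Jacobian contraction) and the regularity needed for $\nabla_v D_x$ to exist. Of these, the one to make explicit is that $\phi_x\in C^2$; for the squared $\ell_2$ case this is automatic.
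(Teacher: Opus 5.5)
Your proposal is correct and follows the paper's proof essentially verbatim. Both write $J(\theta)=g(v(\theta))$ with $v(\theta)=u_{\theta,t}(\cdot,x)$ and $g(v)=D_x(y,v)$, apply the multivariate chain rule, and note that no term arises from $y$ because it is independent of $\theta$; your explicit regularity remarks (requiring $\phi_x\in C^2$ and differentiability of the rate network) are a harmless addition that the paper leaves implicit.
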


\begin{proof}
Let $v(\theta) := u_{\theta,t}(\cdot, x)$. We define $g(v) := D_x(y, v)$ so that $J(\theta) = g(v(\theta))$.
Applying the standard multivariate chain rule:
\[
    \nabla_\theta J(\theta) = \frac{\partial g}{\partial v} \cdot \frac{\partial v}{\partial \theta} = \nabla_v D_x(y, v) \Big|_{v=v(\theta)} \cdot \nabla_\theta u_{\theta,t}(\cdot, x).
\]
Even though $D_x$ takes two arguments, the derivative with respect to the first argument $y$ is multiplied by $\frac{\partial y}{\partial \theta}$. Since the target $y$ comes from the dataset and is independent of $\theta$, $\frac{\partial y}{\partial \theta} = 0$, so that term vanishes.
\end{proof}

\subsection{Proof of Theorem~\ref{thm:guided_grad_equiv_main}: Gradient Equivalence}
\label{sec:proof_thm1}

Here we prove that the gradient of the \textbf{Guided Conditional Loss} is equivalent to the gradient of the \textbf{Guided Marginal Loss}.

Assume the guidance weight $w(Z)$ is independent of $\theta$. Then:
\[
\nabla_\theta \mathcal{L}_{t}^w(\theta) = \nabla_\theta \mathcal{L}_{t,\mathrm{Marg}}^w(\theta).
\]

\begin{proof}
We prove the equality for a fixed time $t$. The full result follows by linearity of expectation over $t$.
We use the $\ell_2$ loss for the experiments and when defining the conditional (Eq.~\eqref{eq:qcdfm}) and marginal (Eq.~\eqref{eq:marg_loss}) losses, which is a special case of a Bregman divergence (Eq.~\eqref{eq:bregman}).
Therefore, all theoretical results stated for general Bregman divergences apply directly.

\subsection*{Step 1: Change of Measure}
We start with the definition of the guided conditional loss represented as a Bregman divergence adapted from \cite{lipman2024flow}.
\[
\mathcal{L}_t^w(\theta)
=
\E_{(X_t, Z) \sim p_t}
\Big[
w(Z)\,
D_{X_t}\big(u_t(\cdot, X_t \mid Z),\, u_{\theta,t}(\cdot, X_t)\big)
\Big].
\]
Using the definition of the tilted distribution $\tilde{p}_t(x,z) = \frac{w(z)p_t(x,z)}{C_t}$, we can substitute $w(z)p_t(x,z) = C_t \tilde{p}_t(x,z)$. This allows us to rewrite the expectation under $\tilde{p}_t$:
\begin{equation}
    \mathcal{L}_t^w(\theta)
    =
    C_t \cdot
    \E_{(X_t, Z) \sim \tilde{p}_t}
    \Big[
    D_{X_t}\big(u_t(\cdot, X_t \mid Z),\, u_{\theta,t}(\cdot, X_t)\big)
    \Big]. \tag{26}
\end{equation}

\subsection*{Step 2: Differentiate}
We apply the gradient operator $\nabla_\theta$. Note that $C_t$, the distribution $\tilde{p}_t$, and the conditional target $u_t(\cdot, X_t \mid Z)$ do not depend on $\theta$. Thus, we can move the gradient inside the expectation and apply the Chain Rule (Proposition~\ref{prop:chain_rule}):
\begin{align}
    \nabla_\theta \mathcal{L}_t^w(\theta)
    &=
    C_t \cdot
    \E_{(X_t, Z) \sim \tilde{p}_t}
    \Big[
    \nabla_\theta D_{X_t}\big(u_t(\cdot, X_t \mid Z),\, u_{\theta,t}(\cdot, X_t)\big)
    \Big] \nonumber \\
    &=
    C_t \cdot
    \E_{(X_t, Z) \sim \tilde{p}_t}
    \Big[
    \underbrace{\nabla_v D_{X_t}\big(u_t(\cdot, X_t \mid Z), v\big)\Big|_{v=u_{\theta,t}(\cdot, X_t)}}_{\text{Gradient w.r.t output}}
    \cdot
    \underbrace{\nabla_\theta u_{\theta,t}(\cdot, X_t)}_{\text{Jacobian}}
    \Big]. \tag{27}
\end{align}

\subsection*{Step 3: Marginalization (Tower Property)}
We use the law of iterated expectations (Tower Property) to split the joint expectation over $(X_t, Z)$ into a marginal expectation over $X_t$ and a conditional expectation over $Z|X_t$.
Note that the Jacobian term $\nabla_\theta u_{\theta,t}(\cdot, X_t)$ depends only on $X_t$, so it can be taken out of the inner expectation.
\begin{equation}
    \nabla_\theta \mathcal{L}_t^w(\theta)
    =
    C_t \cdot
    \E_{X_t \sim \tilde{p}_t}
    \Bigg[
    \Bigg(
    \E_{Z \sim \tilde{p}_t(\cdot|X_t)}
    \Big[
    \nabla_v D_{X_t}\big(u_t(\cdot, X_t \mid Z),\, u_{\theta,t}(\cdot, X_t)\big)
    \Big]
    \Bigg)
    \cdot
    \nabla_\theta u_{\theta,t}(\cdot, X_t)
    \Bigg]. \tag{28}
\end{equation}

\subsection*{Step 4: Affine Invariance of the Bregman Gradient}
We focus on the inner expectation term:
\[
    \E_{Z \sim \tilde{p}_t(\cdot|X_t)}
    \Big[
    \nabla_v D_{X_t}\big(u_t(\cdot, X_t \mid Z),\, u_{\theta,t}(\cdot, X_t)\big)
    \Big].
\]
Let the random variable $Y = u_t(\cdot, X_t \mid Z)$ and the fixed vector $v = u_{\theta,t}(\cdot, X_t)$. By Proposition~\ref{prop:bregman_affine} (Affine Invariance), we can push the expectation inside the divergence gradient:
\begin{align}
    \E_{Z}
    \Big[
    \nabla_v D_{X_t}\big(u_t(\cdot, X_t \mid Z),\, u_{\theta,t}(\cdot, X_t)\big)
    \Big]
    &=
    \nabla_v D_{X_t}
    \Big(
    \underbrace{\E_{Z \sim \tilde{p}_t(\cdot|X_t)} [u_t(\cdot, X_t \mid Z)]}_{\text{Definition of } \tilde{u}_t(\cdot, X_t)},
    \,u_{\theta,t}(\cdot, X_t)
    \Big) \nonumber \\
    &=
    \nabla_v D_{X_t}
    \big(
    \tilde{u}_t(\cdot, X_t),
    \,u_{\theta,t}(\cdot, X_t)
    \big). \tag{29}
\end{align}
Substituting this back into Eq.~(28):
\begin{equation}
    \nabla_\theta \mathcal{L}_t^w(\theta)
    =
    C_t \cdot
    \E_{X_t \sim \tilde{p}_t}
    \Big[
    \nabla_v D_{X_t}\big(\tilde{u}_t(\cdot, X_t),\, u_{\theta,t}(\cdot, X_t)\big)
    \cdot
    \nabla_\theta u_{\theta,t}(\cdot, X_t)
    \Big]. \tag{30}
\end{equation}

\subsection*{Step 5: Reconstruction (Reverse Chain Rule)}
The term inside the brackets is exactly the expansion of the gradient of the divergence with respect to $\theta$ (using Proposition~\ref{prop:chain_rule} in reverse):
\[
    \nabla_v D_{X_t}\big(\tilde{u}_t(\cdot, X_t),\, u_{\theta,t}(\cdot, X_t)\big)
    \cdot
    \nabla_\theta u_{\theta,t}(\cdot, X_t)
    =
    \nabla_\theta D_{X_t}\big(\tilde{u}_t(\cdot, X_t),\, u_{\theta,t}(\cdot, X_t)\big).
\]
Therefore, we can rewrite the full expression as:
\begin{align}
    \nabla_\theta \mathcal{L}_t^w(\theta)
    &=
    C_t \cdot
    \E_{X_t \sim \tilde{p}_t}
    \Big[
    \nabla_\theta D_{X_t}\big(\tilde{u}_t(\cdot, X_t),\, u_{\theta,t}(\cdot, X_t)\big)
    \Big] \nonumber \\
    &=
    \nabla_\theta
    \Bigg(
    C_t \cdot
    \E_{X_t \sim \tilde{p}_t}
    \Big[
    D_{X_t}\big(\tilde{u}_t(\cdot, X_t),\, u_{\theta,t}(\cdot, X_t)\big)
    \Big]
    \Bigg) \nonumber \\
    &=
    \nabla_\theta \mathcal{L}_{t,\text{Marg}}^w(\theta). \tag{31}
\end{align}
This holds because $C_t$ and $\tilde{p}_t$ are independent of $\theta$, allowing us to pull the gradient operator outside the entire expectation.

Since the gradients match for every time $t$, taking the expectation over $t \sim U[0,1]$ yields:
\[
    \nabla_\theta \mathcal{L}_{t}^w(\theta)
    =
    \E_t [\nabla_\theta \mathcal{L}_t^w(\theta)]
    =
    \E_t [\nabla_\theta \mathcal{L}_{t,\text{Marg}}^w(\theta)]
    =
    \nabla_\theta \mathcal{L}_{t,\mathrm{Marg}}^w(\theta).
\]
\end{proof}

\subsection{Proof of Corollary~\ref{cor:optimal_policy}: Recovery of Boltzmann Policy}
\label{sec:proof_cor1}

\paragraph{Theoretical Guarantees.}
To establish the consistency of our method, we rely on the following standard assumptions regarding the dataset coverage and model capacity.

\begin{assumption}[Full Support]
\label{ass:support}
The behavior policy $\mu(\cdot \mid s)$ has full support on the action space $\mathcal{A}$, or at minimum covers the support of the optimal policy $\pi^\star_\omega$. That is, $\mu(a \mid s) > 0$ for all $a$ such that $\pi^\star_\omega(a \mid s) > 0$.
\end{assumption}

\begin{assumption}[Realizability and Global Optimality]
\label{ass:realizability}
The parameterized rate model class $\{u_\theta\}_{\theta \in \Theta}$ is sufficiently expressive to contain the true guided marginal rate field $\tilde{u}_t$. Furthermore, the optimization procedure succeeds in finding the global minimum of the objective, such that $\mathcal{L}_{t}^w(\theta) = 0$ for all $t \in [0,1]$.
\end{assumption}

A discussion on the practical implications of these assumptions is provided in Appendix~\ref{app:theory_practice_gap}.

\begin{theorem}[Recovery of the Boltzmann Policy]
\label{thm:boltzmann_recovery_main}
Let $\pi_\theta(\cdot \mid s, \omega)$ be the terminal distribution generated by the optimal rate field $u_{\theta,t} = \tilde{u}_t$. Under Assumptions~\ref{ass:support} and~\ref{ass:realizability}, if the rate model $u_{\theta,t}$ minimizes the weighted conditional flow matching objective (Eq.~\ref{eq:qcdfm}), and assuming the support of the behavior policy $\mu$ covers $\cA$, then the terminal distribution of the induced CTMC satisfies:
\[
\pi_\theta(a \mid s,\omega) = \pi^\star_\omega(a \mid s)
\propto
\mu(a \mid s)\exp\!\big(\beta Q_\omega(s,a)\big).
\]
\end{theorem}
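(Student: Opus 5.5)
The plan is to show three things in turn: that $\tilde u_t$ generates the tilted marginal path $\tilde p_t$, that minimizing the weighted conditional loss forces $u_{\theta,t}=\tilde u_t$, and that the terminal tilted marginal $\tilde p_1$ equals the Boltzmann policy.

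\textbf{Step 1: $\tilde u_t$ generates $\tilde p_t$.} Write the tilted marginal path as $\tilde p_t(x)=\sum_z \tilde p_t(x,z)$. Since $p_t(x,z)=p_{t\mid Z}(x\mid z)\,q(z)$ for the endpoint-sampling law $q$, we get $\tilde p_t(x)=\sum_z p_{t\mid Z}(x\mid z)\,\tilde q(z)$ with $\tilde q(z)=w(z)q(z)/C$. Here $C_t$ does not depend on $t$, because $\sum_x p_{t\mid Z}(x\mid z)=1$ gives $C_t=\E_{Z}[w(Z)]$. Differentiate in $t$ and use that each $u_t(\cdot,\cdot\mid z)$ of Eq.~\eqref{eq:jump_to_endpoint_main} satisfies the Kolmogorov equation for $p_{t\mid Z}$ (Appendix~\ref{app:ctmc_cdfm_details}). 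Then exchange sums and rewrite $\tilde q(z)p_{t\mid Z}(x\mid z)=\tilde p_t(z\mid x)\tilde p_t(x)$. This gives $\frac{d}{dt}\tilde p_t(y)=\sum_x \tilde u_t(y,x)\tilde p_t(x)$, with $\tilde u_t$ as in Eq.~\eqref{eq:utilde_main}. This is the standard marginalization trick, with the tilt absorbed into the mixing measure. Validity of $\tilde u_t$ as a generator is inherited from convexity of the generator constraints.

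\textbf{Step 2: the minimizer is $\tilde u_t$.} By Theorem~\ref{thm:guided_grad_equiv_main}, together with the fact that the two losses differ by a $\theta$-independent constant (the Bregman/variance decomposition behind Proposition~\ref{prop:bregman_affine}), a global minimizer of $\mathcal L_t^w$ is a global minimizer of $\mathcal L^w_{t,\mathrm{Marg}}$. Under Assumption~\ref{ass:realizability} that minimum is $0$, so $u_{\theta,t}(\cdot,x)=\tilde u_t(\cdot,x)$ for every $x$ with $\tilde p_t(x)>0$. Assumption~\ref{ass:support} ensures this set contains every relevant action, and for $t<1$ it is all of $\mathcal A$ because of the source distribution. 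The learned CTMC, started from the matching source $\tilde p_0$, then solves the same linear ODE as $\tilde p_t$. Uniqueness of solutions of a linear ODE on a finite space gives that its time-$t$ law is $\tilde p_t$ for every $t<1$, and by continuity the terminal law is $\pi_\theta=\tilde p_1$.

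\textbf{Step 3: identify $\tilde p_1$.} Since $p_{1\mid Z}=\delta(\cdot,A_1)$, we get $\tilde p_1(a)\propto \E[w(Z)\,\mathbf 1\{A_1=a\}]=w(s,\omega,a)\,q_1(a\mid s)$, where $q_1$ is the endpoint law. We then take the idealized setting: $\hat\beta=\beta$, an infinite support set so that the self-normalizing denominator in Eq.~\eqref{eq:q_weight} is a constant in $a$, and endpoints distributed as $\mu(\cdot\mid s)$. In that setting $\tilde p_1(a)\propto\mu(a\mid s)\exp(\beta Q_\omega(s,a))=\pi^\star_\omega(a\mid s)$ by Eq.~\eqref{eq:boltzmann_policy}. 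Assumption~\ref{ass:support} makes the normalizer positive and guarantees no mass of $\pi^\star_\omega$ is lost.

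\textbf{Main obstacle.} I expect the hardest part to be Step 3's identification of the endpoint law. In Phase~3 the endpoints come from the current model rather than from $\mu$, so iterating gives $\pi_{k+1}\propto\pi_k e^{\beta Q}$ rather than $\mu\,e^{\beta Q}$. I would handle this by stating the idealized one-step version explicitly: endpoints from $\hat\mu=\mu$, as in the warm-up initialization, with the model equal to $\mu$ after Phase~1. I would also remark that under repeated iteration the effective temperature accumulates, so $\beta$ should be read as the total guidance applied. A secondary technical point is the singularity of $1/(1-t)$ at $t=1$. I would handle it by working on $[0,1-\epsilon]$ and letting $\epsilon\to0$, using $\tilde p_t\to\tilde p_1$.
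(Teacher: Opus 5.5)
Your proposal is correct and follows the same route as the paper's proof. Both arguments show that the tilted path $\tilde p_t$ is generated by $\tilde u_t$, that the optimal model reproduces this path, and that its terminal marginal is the $w$-reweighted endpoint law $\propto \mu(a\mid s)\exp(\beta Q_\omega(s,a))$.

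Where you go further is in rigor and candor. You prove the tilted marginalization instead of asserting it. You identify the minimizer via the constant-offset (variance) decomposition plus ODE uniqueness, rather than reading the realizability assumption literally as $\mathcal{L}_t^w=0$, which generally cannot hold. And you state explicitly the idealization ($\hat\beta=\beta$, $M\to\infty$, endpoints drawn from $\mu$) that the paper's Step~2 uses silently even though Phase~3 draws endpoints from the current model.
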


\begin{proof}
\textbf{Step 1: The Guided Marginal Endpoint.}
The learned rate field $\tilde{u}_t$ generates the marginal probability path of the tilted distribution $\tilde{p}_t$. At $t=1$, the conditional path is a delta function at the target endpoint $A_1$, i.e., $p_{1|Z}(a|z) = \delta(a, A_1)$. Also, any joint distribution \( p_t(a,z) \) is defined as the product of the context prior \( p(z) \) and the conditional path \( p_{t|Z}(a|z) \):
$$
p_t(a,z) = p(z) \cdot p_{t|Z}(a|z)
$$
Thus, the terminal marginal distribution is the marginal of $A_1$ under the tilted measure (from Eq.~\eqref{eq:tilted_joint_main} and ~\eqref{eq:dfm-mixture-path}):
\[
\tilde{p}_1(a) = \sum_z \tilde{p}_1(a, z) = \sum_{z} \frac{w(z) p(z)}{C_1} \delta(a, A_1).
\]

\textbf{Step 2: Expanding the Weights.}
Recall that $Z$ includes endpoints sampled from $\mu(\cdot|s)$ and $w(Z) \propto \exp(\beta Q_\omega(s, A_1))$. Marginalizing over the latent variables $A_0$ and $s$ (conditioned on a specific state $s$ for inference), the distribution of $A_1$ in the dataset is $\mu(a|s)$. The guidance reweights this by $\exp(\beta Q_\omega)$. Therefore:
\[
\tilde{p}_1(a \mid s) \propto \mu(a \mid s) \exp(\beta Q_\omega(s, a)).
\]

This is the exact definition of the KL-regularized (Boltzmann) policy \( \pi_{\omega}^{\star} \).

Since Theorem~\ref{thm:guided_grad_equiv_main} proved that our neural network \( u_{\theta,t} \) minimizes the divergence to this system, simulating our network from \( t=0 \) to \( t=1 \) will generate samples from \( \tilde{p}_1 \).

\textbf{Step 3: Conclusion.}
Since the globally optimal model $u_{\theta,t}$ generates exactly the path $\tilde{p}_t$, the terminal distribution of the chain is exactly $\tilde{p}_1$. Thus, $\pi_\theta = \pi^\star_\omega$.
\end{proof}

\subsection{Practical Implications of Theoretical Assumptions}
\label{app:theory_practice_gap}

Corollary~\ref{cor:optimal_policy} relies on standard assumptions in offline reinforcement learning, including full support of the behavior policy and realizability of the function class. While these assumptions are idealized, they are commonly used to characterize the target solution and connect the objective to KL-regularized (Boltzmann) policy improvement \citep{nair2020awac}.

In practice, these conditions are not required to hold exactly. The critic is learned from finite data, the action support is approximated using sampled candidate sets, and optimization is performed using stochastic gradient methods. Under these approximations, the resulting objective can be interpreted as a soft, Q-weighted policy improvement step that biases the policy toward higher-value actions while remaining grounded in the dataset distribution.

This form of approximate policy improvement is known to be robust in offline RL, as it avoids extrapolation outside the data support while still enabling meaningful improvement. Our empirical results support this, showing that the method remains effective despite these idealized assumptions.

\subsection{Multi-Objective Contrastive Energy Prediction}

To gain theoretical insights, we first formulated a preference-conditioned contrastive energy objective and present our model here.
Contrastive Energy Prediction (CEP) \citep{lu2023contrastive} provides a principled way to learn an intermediate energy model whose
induced distribution matches the KL-regularized optimal policy.

Given a state $s$, preference $\omega$, and critic $\mathbf{Q}_\psi$,
we define the target energy
\[
E_{\text{target}}(s,a,\omega) := -\beta Q_\omega(s,a).
\]

To avoid computing the partition function over $\mathcal{A}$, we adopt
the standard \emph{in-support approximation}.
For each state $s$, we sample a support set
\[
\mathcal{A}_{\text{supp}}(s) = \{ \hat a^{(1)}, \dots, \hat a^{(M)} \}
\sim \mu(\cdot \mid s).
\]

We train a preference-conditioned energy model
$f_\phi(s,a,t,\omega)$ by minimizing the cross-entropy
between the target Boltzmann distribution and the model distribution
restricted to the support set:
\begin{align}
\mathcal{L}_{\text{MO-CEP}}(\phi)
=
\E_{s,\omega,t}
\Bigg[
-
\sum_{i=1}^M
\tilde{w}_i
\log
\frac{\exp(f_\phi(s,\hat a^{(i)},t,\omega))}
{\sum_{j=1}^M \exp(f_\phi(s,\hat a^{(j)},t,\omega))}
\Bigg],
\end{align}
where
\[
\tilde{w}_i :=
\frac{\exp(\hat{\beta} Q_\omega(s,\hat a^{(i)}))}
{\sum_{j=1}^M \exp(\hat{\beta} Q_\omega(s,\hat a^{(j)}))}.
\]

This objective can train an energy model whose induced distribution
matches the preference-conditioned Boltzmann policy within the dataset support.

\subsection{Extension to Discrete Multi-Agent Reinforcement Learning}
\label{sec:marl_appendix}

Our framework also supports discrete multi-agent reinforcement learning (MARL) by
exploiting the factorized structure of Continuous-Time Markov Chains.
We describe this extension to illustrate the generality of the proposed formulation.

\paragraph{Joint action space.}
Consider a multi-agent setting with $G$ agents, where each agent $i$ selects a discrete
action $a^{(i)} \in \mathcal{A}_i$.
The joint action is
\[
a = (a^1,\dots,a^G) \in \mathcal{A}_1 \times \cdots \times \mathcal{A}_G =: \mathcal{A}.
\]
Directly modeling a generative policy over $\mathcal{A}$ is intractable when the number of agents
grows, as $|\mathcal{A}|$ scales exponentially with $G$.

\paragraph{Factorized CTMC policy.}
We model the joint policy $\pi_\theta(\cdot\mid s,\omega)$ as the terminal distribution of a
CTMC over joint actions, using a \emph{factorized generator} \cite{campbell2022continuous,campbell2024generative}
\begin{equation}
\label{eq:app_marl_factorized_ctmc}
u_{\theta,t}(a',a \mid s,\omega)
=
\sum_{i=1}^G
\delta(a'^{(-i)},a^{(-i)})\,
u_{\theta,t}^{(i)}(a'^{(i)},a \mid s,\omega),
\end{equation}
where $a^{(-i)}$ denotes the actions of all agents except agent $i$.
The term $u^{(i)}_{\theta,t}(a'^{(i)},a \mid s,\omega)$ represents the transition rate for agent $i$
to change its action from $a^{(i)}$ to $a'^{(i)}$, conditioned on the full joint action $a$.

For each agent $i$ where $b^{(i)}$ denotes a candidate action for agent $i$, we assume
$u^{(i)}_{\theta,t}(b^{(i)},a\mid s,\omega)\ge 0$ for all $b^{(i)}\neq a^{(i)}$ and define the diagonal entry
\[
u^{(i)}_{\theta,t}(a^{(i)},a\mid s,\omega)
:=
-\sum_{b^{(i)}\neq a^{(i)}} u^{(i)}_{\theta,t}(b^{(i)},a\mid s,\omega).
\]
Under this construction, the global diagonal is given by
\[
u_{\theta,t}(a,a\mid s,\omega)
:=
-\sum_{a'\neq a} u_{\theta,t}(a',a\mid s,\omega),
\]
ensuring that $u_{\theta,t}$ defines a valid time-inhomogeneous Markov jump process.

This factorization permits transitions that modify the action of a single agent at a time while
keeping the remaining agents fixed.
Consequently, the number of modeled rates scales as $\sum_i |\mathcal{A}_i|$, rather than
$\prod_i |\mathcal{A}_i|$.

\paragraph{Interpretation.}
The induced CTMC corresponds to an asynchronous multi-agent update process in which agents revise
their actions sequentially according to learned transition rates.
This preserves the discrete semantics of joint actions and avoids the need for continuous
relaxations or synchronized multi-agent jumps.

\paragraph{Training objective.}
Let $Z=(s,\omega,a_0,a_1)$ denote the conditioning variable, where $a_0$ and $a_1$ are joint actions
sampled from the behavior dataset.
We consider a factorized conditional probability path
\[
p_{t\mid Z}(a)
=
\prod_{i=1}^G
\big[(1-t)\delta(a^{(i)},a_0^{(i)}) + t\delta(a^{(i)},a_1^{(i)}\big],
\]
which allows intermediate joint actions in which some agents have transitioned to their terminal
actions $a_1^{(i)}$ while others remain at their initial actions $a_0^{(i)}$.
Such partially transitioned joint actions are reachable under the factorized CTMC dynamics,
which permit single-agent updates at each jump.

For this factorized conditional path, we adopt the standard factorized-velocity conditional
discrete flow matching objective, in which per-agent conditional generators
$u_t^{(i)}(\cdot,\cdot\mid Z)$ are chosen to generate the corresponding per-agent marginals.
The resulting training loss is
\begin{equation}
\label{eq:marl_cdfm}
\mathcal{L}_{t}^w(\theta)
=
\mathbb{E}_{t,Z,A_t}
\Bigg[
w(s,\omega,a_1)
\sum_{i=1}^G
D_{A_t^i}\Big(
u_t^{(i)}(\cdot,A_t \mid Z),
u_{\theta,t}^{(i)}(\cdot,A_t \mid s,\omega)
\Big)
\Bigg],
\end{equation}
where $t \sim \mathrm{Unif}[0,1]$, the intermediate action $A_t$ is sampled from $p_{t|Z}$ and $D_{A_t^i}$ denotes a Bregman divergence over valid rate vectors for agent $i$, and
$w(s,\omega,a_1)\propto\exp(\beta Q_\omega(s,a_1))$ is the same Boltzmann guidance weight
used in the single-agent case.

Under the standard regularity assumptions of discrete flow matching,
the factorized construction preserves the marginalization and gradient-equivalence properties.

\paragraph{Centralized value guidance.}
Policy learning in this extension is guided by a centralized critic
$Q_\omega(s,a^1,\dots,a^G)$, while the policy itself factorizes through the CTMC dynamics.
This corresponds to centralized training with a structured generative policy and does not require
decentralized execution assumptions.
% We leave empirical evaluation in multi-agent domains to future work.

This factorized CTMC formulation demonstrates that the proposed discrete,
preference-conditioned flow matching framework naturally generalizes to multi-agent action spaces.
The extension requires no modification to the underlying theory and highlights the flexibility of our discrete flow-based generative policy in structured decision-making problems.

\subsection{Additional Theory}

\paragraph{Conditional paths and marginalization trick.}
DFM defines a \emph{conditional} probability path $p_{t\mid Z}(\cdot\mid z)$ between endpoints
(often delta endpoints) and then marginalizes:
\begin{equation}
\label{eq:marginalize_path_prelim}
p_t(x)=\mathbb{E}_{Z\sim p_Z}\big[p_{t\mid Z}(x\mid Z)\big].
\end{equation}
If a conditional rate field $u_t(\cdot,\cdot\mid z)$ generates $p_{t\mid Z}(\cdot\mid z)$, 
% via \eqref{eq:kolmogorov_prelim}
 then the \emph{marginal} rate field
\begin{equation}
\label{eq:marginal_rate_prelim}
u_t(y,x) = \mathbb{E}\big[u_t(y,X_t\mid Z)\,\big|\,X_t=x\big]
\end{equation}
generates the marginal path $p_t$ \citep{lipman2024flow}.

\paragraph{Learning objective (DFM and conditional DFM).}
DFM learns a parametric rate model $u_{\theta,t}$ by regressing rates using a Bregman divergence.
Define the convex ``rate simplex'' at state $x$:
\[
\Omega_x := \left\{v\in \mathbb{R}^{S}:\ v(y)\ge 0\ \forall y\neq x,\ \ v(x)=-\sum_{y\neq x}v(y)\right\}.
\]
DFM minimizes the (intractable) marginal loss
\begin{equation}
\label{eq:dfm_loss_prelim}
\mathcal{L}_{\mathrm{DFM}}(\theta)
=
\mathbb{E}_{t}\,\mathbb{E}_{X_t\sim p_t}
\Big[
D_{X_t}\big(u_t(\cdot,X_t),\,u_{\theta,t}(\cdot,X_t)\big)
\Big],
\end{equation}
where $D_x(\cdot,\cdot)$ is a Bregman divergence on $\Omega_x$.
Since the marginal rate $u_t$ is often intractable, DFM uses the tractable \emph{conditional} loss
\begin{equation}
\label{eq:cdfm_loss_prelim}
\mathcal{L}_{\mathrm{CDFM}}(\theta)
=
\mathbb{E}_{t}\,\mathbb{E}_{Z}\,\mathbb{E}_{X_t\sim p_{t\mid Z}}
\Big[
D_{X_t}\big(u_t(\cdot,X_t\mid Z),\,u_{\theta,t}(\cdot,X_t)\big)
\Big].
\end{equation}
A key result in DFM is that optimizing the conditional loss yields the correct marginal gradient
\citep{lipman2024flow}.
\paragraph{ Theorem~\ref{thm:guided_grad_equiv_main} and Algorithm Implementation}
\label{app:theory_algo_gap}

Theorem~\ref{thm:guided_grad_equiv_main} assumes that the guidance weights $w(Z)$ are independent of the model parameters $\theta$. This assumption holds at the level of the population objective, where $w(Z)$ defines a fixed reweighting of the joint distribution.

In Algorithm~\ref{alg:sketch}, however, the weights are implemented using a self-normalized Monte Carlo estimate:
\[
w_j \propto \exp\!\big(\hat{\beta} Q_\omega(s, A_1^{(j)})\big),
\]
where the endpoints $\{A_1^{(j)}\}$ are sampled from the current policy induced by $u_{\theta,t}$.

This introduces an apparent dependence of $w(Z)$ on $\theta$. However, in practice, the sampled endpoints are treated as fixed within each optimization step, and gradients are not propagated through the sampling process. As a result, the weights are effectively constant with respect to $\theta$ during each update.

This treatment corresponds to a standard stochastic optimization approximation of the population objective, where expectations are replaced by Monte Carlo samples drawn from the current iterate, and gradients are computed with respect to the parameters only through the explicit objective. Similar approximations are widely used in policy iteration and actor-critic methods.

Therefore, Algorithm~\ref{alg:sketch} can be viewed as a finite-sample approximation of the population objective analyzed in Theorem~\ref{thm:guided_grad_equiv_main}, preserving the same policy improvement interpretation in expectation.

\section{Additional Experimental Details \& Results}

\subsection{Implementation Details}
\label{ap:imp_details}

\paragraph{Model architecture.}
The transition-rate model $u_{\theta}$ is parameterized as a lightweight MLP with two hidden layers of size 256, comparable to critic networks used in standard offline RL. Our method does not rely on large model capacity; performance gains primarily arise from the flow-based policy improvement mechanism rather than increased model size.

\paragraph{Generative policy vs. static policy.}
Our method does not learn a static categorical policy or an action-to-action value table. Instead, it parameterizes a time-dependent CTMC generator $u_{\theta,t}$, which defines a stochastic process over actions. Starting from an initial action distribution, the model gradually shifts probability mass through stochastic transitions toward higher-value actions. This trajectory-based representation can capture structured multimodal action distributions without explicitly normalizing over all actions as in categorical Boltzmann policies. As shown in Table~\ref{tab:main_baselines_256}, QDFM substantially improves over Boltzmann-Q even when using the same critic, suggesting that the generative CTMC policy provides a stronger action representation than a static one-step policy.

\begin{table}[H]
\centering
\caption{$\hat{\mu}$ ablation on MuJoCo datasets.}
\label{tab:mu_ablation}
\small
\setlength{\tabcolsep}{8pt}
\begin{tabular}{lccc}
\toprule
Dataset & QDFM & single-a & uniform \\
\midrule
Hopper-Medium
& $\mathbf{35.81 \pm 18.80}$
& $0.75 \pm 0.01$
& $1.82 \pm 0.03$ \\
Walker2d-Medium
& $\mathbf{7.90 \pm 5.65}$
& $0.94 \pm 0.81$
& $3.90 \pm 2.98$ \\
\bottomrule
\end{tabular}
\end{table}

\paragraph{Why use $\hat{\mu}$ in Algorithm~\ref{alg:ctmc_inference}?}
We use $\hat{\mu}(\cdot\mid s)$ in Algorithm~\ref{alg:ctmc_inference} so that the initialization reflects the support of the offline dataset while still providing a diverse set of plausible actions. Using only the dataset action collapses the initialization support to a single point, which limits diversity and leads to weaker policy improvement. To validate this design choice, we replace $\hat{\mu}$ with two alternatives: initializing from the single dataset action, denoted single-a, and initializing uniformly over actions. As shown in Table~\ref{tab:mu_ablation}, both alternatives perform substantially worse than QDFM, showing that $\hat{\mu}$ is important for maintaining in-support diversity and improving the policy.

\subsection{MuJoCo Results: Varying the number of action divisions.}
\label{app:action_divisions}
We check how the number of action divisions $K_{\text{act}}$ affects performance.
Table~\ref{tab:ablation_N} reports performance as we vary $K_{\text{act}} \in \{8,16,32,64,256\}$.
Overall, although moderate discretization appears sufficient to capture dominant
modes of the offline action distribution, performance is strongest at $K_{\text{act}}$ = 256 as we can also see from Table~\ref{tab:ablation_N_aggregate} where we report aggregated performance across tasks and algorithms as a function of $K_{\text{act}}$.

\begin{table}[h]
\centering
\begin{minipage}[t]{0.73\textwidth}
  \centering
  \caption{Effect of the number of action divisions $K_{\text{act}}$ on performance.
  Results are mean $\pm$ standard deviation across seeds.
  Best results per row (highest return) are shown in \textbf{bold}.}
  \label{tab:ablation_N}
  \resizebox{\linewidth}{!}{%
  \begin{tabular}{lccccc}
  \toprule
  Env/Dataset 
  & $K_{\text{act}}{=}8$ 
  & $K_{\text{act}}{=}16$ 
  & $K_{\text{act}}{=}32$ 
  & $K_{\text{act}}{=}64$ 
  & $K_{\text{act}}{=}256$ \\
  \midrule
  halfcheetah/expert
  & -142.49$\pm$53.51
  & -224.44$\pm$70.61
  & -202.31$\pm$73.72
  & -157.69$\pm$84.00
  & \textbf{-71.28$\pm$441.07} \\
  halfcheetah/medium
  & -4.49$\pm$125.64
  & 37.69$\pm$312.38
  & 98.24$\pm$207.09
  & 187.53$\pm$237.86
  & \textbf{608.02$\pm$1021.61} \\
  hopper/expert
  & 118.94$\pm$73.35
  & 193.74$\pm$163.84
  & \textbf{261.94$\pm$245.81}
  & 189.08$\pm$131.07
  & 177.25$\pm$284.58 \\
  hopper/medium
  & 31.25$\pm$11.20
  & 349.19$\pm$167.19
  & 8.28$\pm$0.99
  & 7.18$\pm$1.14
  & \textbf{572.16$\pm$746.78} \\
  walker2d/expert
  & -24.43$\pm$17.33
  & \textbf{383.12$\pm$192.18}
  & 149.77$\pm$165.94
  & 162.74$\pm$172.15
  & 86.38$\pm$170.28 \\
  walker2d/medium
  & \textbf{332.03$\pm$441.95}
  & 98.96$\pm$132.03
  & 203.75$\pm$244.01
  & 322.46$\pm$301.45
  & 291.60$\pm$289.76 \\
  \bottomrule
  \end{tabular}
  }
\end{minipage}
\hfill
\begin{minipage}[t]{0.24\columnwidth}
  \centering
  \caption{Aggregated performance vs.\ discretization $K_{\text{act}}$.
  We average episodic return across the six MuJoCo benchmarks, and report mean $\pm$ std across seeds.}
  \label{tab:ablation_N_aggregate}
  \small
  \setlength{\tabcolsep}{3pt}
  \renewcommand{\arraystretch}{1.1}
  \begin{tabular}{lc}
  \toprule
  $K_{\text{act}}$ & Avg.\ return \\
  \midrule
  $8$   & $72.26 \pm 75.03$ \\
  $16$  & $156.16 \pm 92.24$ \\
  $32$  & $86.61 \pm 99.29$ \\
  $64$  & $118.55 \pm 93.95$ \\
  $256$ & $\mathbf{277.35 \pm 18.82}$ \\
  \bottomrule
  \end{tabular}
\end{minipage}
\end{table}

For our comparisons we fix a single discretization and select $K_{\text{act}}=256$ because it performs strongly on multiple benchmarks giving the highest returns.
To verify that this choice does not obscure trends, we next provide a full baseline comparison at $K_{\text{act}}=16$ in the next subsection.

\subsection{Performance on discretized MuJoCo tasks}
\label{app:comparisons}

Here we report actual mean values of the algorithms on the 6 benchmark problems for $K_{\text{act}}$ = 16 and $K_{\text{act}}$ = 256. We report episodic return (sum of environment rewards per episode; higher is better)
as mean $\pm$ standard deviation over three training seeds. All methods use a batch size of 256 and are evaluated over 10 episodes. For baseline methods, we train for 1M gradient
steps.
Our method uses a three-phase training schedule with $(K_1, K_2, K_3) = (150k, 500k, 350k)$,
support size $M=64$, guidance scale $\hat{\beta}=20$, and CTMC step size $h=0.05$.

It is worth noting that because actions are discretized into a finite set, absolute normalized returns are not directly comparable to continuous-action policies. Our goal is to evaluate relative performance among discrete offline methods under a fixed discretization.

\begin{table*}[h]
\centering
\caption{Performance comparison at $K_{\text{act}}=256$.
Mean $\pm$ standard deviation across seeds.
Best results per row (highest return) are in \textbf{bold}.}
\label{tab:main_baselines_N256}
\small
\resizebox{\textwidth}{!}{%
\begin{tabular}{llcccccc}
\toprule
Dataset & Environment & AWAC & AWBC & BCQ & CQL & GreedyQ & Ours \\
\midrule
Expert & HalfCheetah
& 29.14$\pm$306.94
& -367.81$\pm$55.60
& \textbf{1002.09$\pm$682.03}
& -428.26$\pm$1.14
& -319.23$\pm$0.40
& -123.18$\pm$93.52 \\

Medium & HalfCheetah
& 2164.79$\pm$2021.35
& -5.00$\pm$152.87
& \textbf{2231.27$\pm$1019.49}
& -351.66$\pm$100.36
& -308.52$\pm$6.93
& 330.95$\pm$319.08 \\

Expert & Hopper
& 10.63$\pm$0.13
& 13.30$\pm$0.87
& 140.93$\pm$154.72
& 8.88$\pm$0.08
& 8.88$\pm$0.10
& \textbf{467.86$\pm$277.28} \\

Medium & Hopper
& 29.92$\pm$0.97
& 211.61$\pm$5.18
& 1157.34$\pm$853.44
& 16.48$\pm$0.68
& 15.96$\pm$0.57
& \textbf{1193.29$\pm$711.09} \\

Expert & Walker2d
& 71.89$\pm$32.31
& -7.50$\pm$0.93
& 151.19$\pm$133.83
& -12.77$\pm$0.89
& -12.20$\pm$1.21
& \textbf{211.31$\pm$101.13} \\

Medium & Walker2d
& -6.85$\pm$0.72
& 115.42$\pm$111.01
& 428.60$\pm$272.39
& 307.01$\pm$11.26
& 303.94$\pm$11.29
& \textbf{574.62$\pm$586.73} \\
\bottomrule
\end{tabular}
}
\end{table*}

\begin{table*}[h]
\centering
\caption{Performance comparison at $K_{\text{act}}=16$.
Mean $\pm$ standard deviation across seeds.
Best results per row (highest return) are in \textbf{bold}.}
\label{tab:main_baselines}
\small
\resizebox{\textwidth}{!}{%
\begin{tabular}{llcccccc}
\toprule
Dataset & Environment & AWAC & AWBC & BCQ & CQL & GreedyQ & Ours \\
\midrule
Expert & HalfCheetah
& -232.53$\pm$74.42
& -337.04$\pm$37.28
& -384.38$\pm$21.48
& -263.23$\pm$95.31
& -415.24$\pm$44.37
& \textbf{-191.31$\pm$97.18} \\

Medium & HalfCheetah
& \textbf{318.29$\pm$458.39}
& -5.80$\pm$388.09
& -198.59$\pm$82.80
& -442.33$\pm$69.31
& -143.53$\pm$148.09
& 309.99$\pm$495.70 \\

Expert & Hopper
& 7.06$\pm$0.41
& 8.42$\pm$0.19
& -0.77$\pm$0.34
& 105.71$\pm$31.11
& 1.04$\pm$0.19
& \textbf{173.45$\pm$91.35} \\

Medium & Hopper
& 37.42$\pm$1.32
& 198.77$\pm$76.68
& 43.95$\pm$4.19
& 29.17$\pm$0.80
& 28.52$\pm$0.58
& \textbf{286.67$\pm$101.14} \\

Expert & Walker2d
& 79.51$\pm$78.24
& 12.27$\pm$10.29
& \textbf{444.78$\pm$221.17}
& 68.31$\pm$81.92
& 5.67$\pm$20.73
& 254.65$\pm$191.24 \\

Medium & Walker2d
& \textbf{319.66$\pm$218.78}
& 46.59$\pm$102.14
& 293.99$\pm$72.79
& 31.02$\pm$107.75
& 90.36$\pm$2.39
& 124.95$\pm$202.83 \\
\bottomrule
\end{tabular}
}
\end{table*}

\subsection{Euler steps and rate-scale ablations}
\label{app:ablations}

\paragraph{CTMC simulation details and hyperparameters}

We simulate CTMC sampling over $t\in[0,1]$ with Euler discretization using $K_{\text{steps}}$ steps
(step size $h=1/K_{\text{steps}}$).
A global rate scale $\alpha$ multiplies all transition rates, controlling the expected number of jumps within the unit-time horizon.
Unless otherwise stated, we use the same $K_{\text{steps}}$ and $\alpha$ reported in the corresponding figure/table captions.
We also report the source initialization used for sampling, since it affects mixing speed in short horizons.

\paragraph{Stability of discrete sampling.}
Figure~\ref{fig:alpha_sweep} illustrates how performance varies with the CTMC rate
scale $\alpha$.
For $K_{\text{act}}=16$, performance remains high across a broad range of values, indicating that
the discrete sampler is robust to the amount of stochasticity introduced during
inference.
For $K_{\text{act}}=32$, performance is more sensitive and degrades as $\alpha$ increases, reflecting
the interaction between finer discretization and higher effective jump rates.
In both cases, very large rate scales lead to unstable trajectories.
Overall, these trends show that CTMC-based inference exhibits predictable and
interpretable behavior, with clear stability regimes governed by discretization and
sampling intensity.

% \begin{figure}[h]
%   \centering
%   \includegraphics[width=0.55\linewidth]{figures/walker2d_ctmc_alpha_sweep.eps}
%   \caption{
%     \texttt{Walker2d-v5}: sensitivity to CTMC rate scaling $\alpha$ for discretizations $N\in\{16,32\}$.
%     We report mean episodic return across $5$ seeds (100 episodes/seed); error bars show standard deviation.
%     All runs use the same Euler simulation horizon with $K_{\text{steps}}=15$.
%   }
%   \label{fig:alpha_sweep}
% \end{figure}

\paragraph{Effect of the number of integration steps.}
As can be seen from Figure~\ref{fig:steps_sweep}, varying the number of Euler integration steps $K_{\text{steps}}$ with the rate scale fixed yields stable performance across a wide range of values. Returns saturate beyond approximately $15$ steps, indicating that accurate inference can be achieved with modest simulation resolution.

\begin{figure}[H]
  \centering
  \begin{minipage}[H]{0.48\linewidth}
    \centering
    \includegraphics[width=\linewidth]{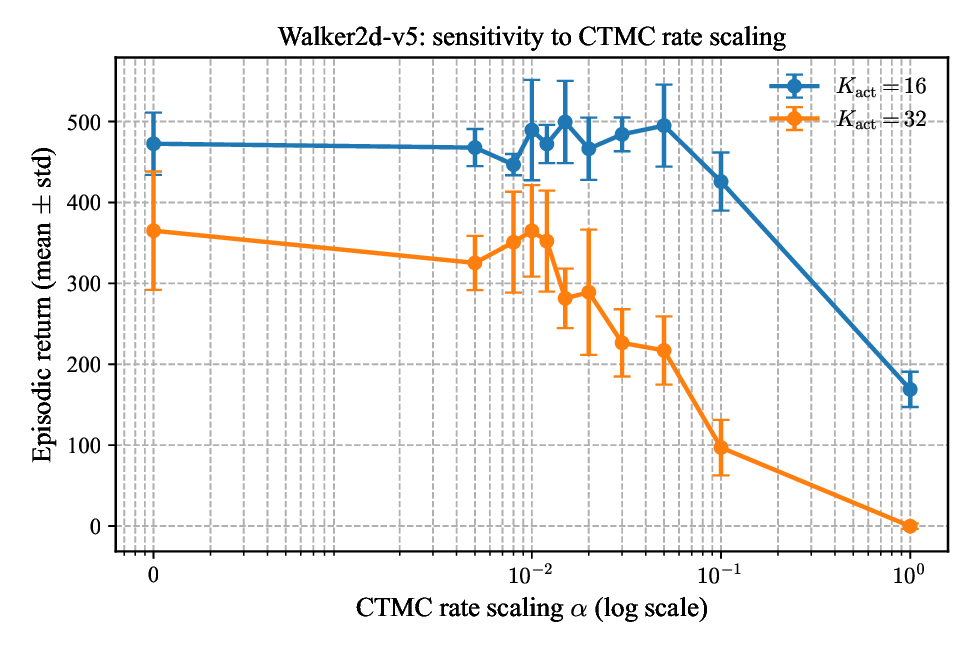}
    \captionof{figure}{
      \texttt{Walker2d-v5}: sensitivity to CTMC rate scaling $\alpha$ for
      discretizations $K_{\mathrm{act}}\in\{16,32\}$.
      We report mean episodic return across $5$ seeds
      (100 episodes/seed); error bars indicate standard deviation.
      All runs use the same Euler simulation horizon with
      $K_{\mathrm{steps}}=15$.
    }
    \label{fig:alpha_sweep}
  \end{minipage}
  \hfill
  \begin{minipage}[H]{0.48\linewidth}
    \centering
    \includegraphics[width=\linewidth]{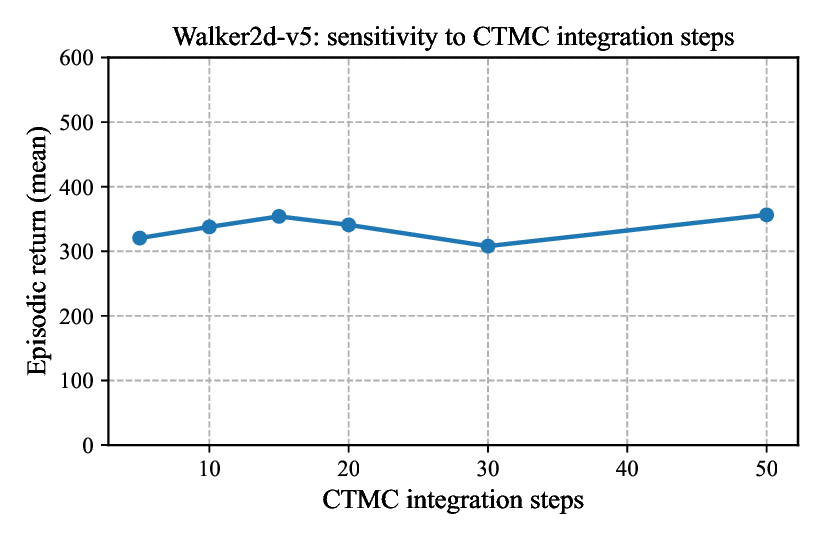}
    \captionof{figure}{
      Effect of the number of Euler steps used to simulate the CTMC at
      inference time on \texttt{Walker2d-v5}.
      Performance remains stable across a wide range of discretization
      steps.
    }
    \label{fig:steps_sweep}
  \end{minipage}
\end{figure}

% \begin{figure}[h]
% \centering
% \includegraphics[width=0.55\linewidth]{figures/walker2d_ctmc_steps.eps}
% \caption{Effect of the number of Euler steps used to simulate the CTMC at
% inference time on \texttt{Walker2d-v5}. Performance is stable over a wide
% range of discretizations.}
% \label{fig:steps_sweep}
% \end{figure}

% \subsection{Discretization and baseline extensions}
% \label{app:ablation}

\subsection{Ablations on $M$, $N$, and Runtime Trade-offs}
\label{app:mn_runtime_ablations}

We study the effect of the number of sampled actions $M$ and the number of CTMC integration steps $N$ on both training and inference cost. Although the per-iteration complexity is $O(MN)$, in practice the observed scaling is moderate due to batching and GPU parallelization. Empirically, training time is more sensitive to $M$, while inference time is more sensitive to $N$.

\begin{table}[H]
\centering
\small
\begin{tabular}{lcccc}
\toprule
Dataset & $M=16$ & $M=32$ & $M=64$ & $M=128$ \\
\midrule
Hopper-Medium   & 3158.7 $\pm$ 637.2 & 3583.4 $\pm$ 359.6 & 4719.3 $\pm$ 104.4 & 7247.1 $\pm$ 118.8 \\
Walker2d-Medium & 3152.7 $\pm$ 607.6 & 3588.1 $\pm$ 361.6 & 4664.4 $\pm$ 9.5   & 7346.0 $\pm$ 12.4 \\
\bottomrule
\end{tabular}
\caption{Ablation on $M$: training time (seconds).}
\label{tab:m_train_ablation}
\end{table}

\begin{table}[H]
\centering
\small
\begin{tabular}{lcccc}
\toprule
Dataset & $M=16$ & $M=32$ & $M=64$ & $M=128$ \\
\midrule
Hopper-Medium   & 8.86 $\pm$ 2.06 & 8.86 $\pm$ 1.96 & 8.93 $\pm$ 2.12 & 8.86 $\pm$ 2.01 \\
Walker2d-Medium & 8.91 $\pm$ 2.22 & 8.91 $\pm$ 2.18 & 10.02 $\pm$ 0.09 & 10.08 $\pm$ 0.10 \\
\bottomrule
\end{tabular}
\caption{Ablation on $M$: action generation time (ms).}
\label{tab:m_action_ablation}
\end{table}

\begin{table}[H]
\centering
\small
\begin{tabular}{lcccc}
\toprule
Dataset & $N=5$ & $N=10$ & $N=20$ & $N=40$ \\
\midrule
Hopper-Medium   & 2286.7 $\pm$ 13.3 & 2381.3 $\pm$ 21.6 & 2429.6 $\pm$ 43.9 & 2637.3 $\pm$ 7.7 \\
Walker2d-Medium & 2314.8 $\pm$ 28.9 & 2382.0 $\pm$ 77.5 & 2480.2 $\pm$ 62.5 & 2683.2 $\pm$ 84.8 \\
\bottomrule
\end{tabular}
\caption{Ablation on $N$: training time (seconds).}
\label{tab:n_train_ablation}
\end{table}

\begin{table}[H]
\centering
\small
\begin{tabular}{lcccc}
\toprule
Dataset & $N=5$ & $N=10$ & $N=20$ & $N=40$ \\
\midrule
Hopper-Medium   & 1.89 $\pm$ 0.02 & 3.60 $\pm$ 0.23 & 6.29 $\pm$ 0.20 & 12.69 $\pm$ 0.84 \\
Walker2d-Medium & 1.92 $\pm$ 0.03 & 3.48 $\pm$ 0.18 & 6.57 $\pm$ 0.48 & 12.59 $\pm$ 0.75 \\
\bottomrule
\end{tabular}
\caption{Ablation on $N$: action generation time (ms).}
\label{tab:n_action_ablation}
\end{table}

Overall, the results are consistent with the expected computational trade-off: increasing $M$ primarily increases training cost, while increasing $N$ primarily increases inference cost. Despite the nominal $O(MN)$ complexity, the practical scaling remains moderate in the tested regime due to vectorization and hardware-level parallelism.

\subsection{Training Time Scaling with $|\mathcal{A}|$ and $M$}
\label{app:runtime_scaling}

We study how the total training wall-clock time scales with the action space size $|\mathcal{A}|$ and the number of sampled actions $M$. Empirically, we observe that the training time remains relatively stable across the tested ranges of $M$ and $|\mathcal{A}|$. This is because the implementation is fully vectorized and leverages GPU batching, so increasing $M$ or $|\mathcal{A}|$ does not significantly increase runtime until hardware limits are reached.

To further validate this, we report training time (in seconds) on the Hopper-Medium dataset while varying $|\mathcal{A}|$ and $M$.

\begin{table}[h]
\centering
\small
\begin{tabular}{c|ccc}
\toprule
$|\mathcal{A}|$ & $M=32$ & $M=64$ & $M=128$ \\
\midrule
64  & 1527 & 1469 & 1481 \\
256 & 1411 & 1448 & 1578 \\
512 & 1420 & 1510 & 1869 \\
\bottomrule
\end{tabular}
\caption{Training time (seconds) while varying $|\mathcal{A}|$ and $M$ on Hopper-Medium.}
\label{tab:runtime_scaling}
\end{table}

The runtime varies only modestly across settings, indicating that the method scales well in practice within this regime. These results are consistent with the theoretical scaling, up to hardware-dependent parallelization effects.

\subsection{Runtime Analysis}
\label{app:runtime}

We report same-hardware runtime comparisons for both training and inference across methods.

\begin{table}[h]
\centering
\resizebox{\textwidth}{!}{%
\small
\begin{tabular}{lcccccc}
\toprule
Dataset & GreedyQ & CQL & BCQ & AWAC & IQL & QDFM \\
\midrule
HalfCheetah-M & 1380.1 $\pm$ 13.2 & 1729.0 $\pm$ 24.8 & 2364.8 $\pm$ 85.3 & 2606.1 $\pm$ 16.6 & 3620.3 $\pm$ 11.2 & 2209.4 $\pm$ 19.7 \\
HalfCheetah-E & 1351.0 $\pm$ 10.5 & 1751.5 $\pm$ 42.2 & 2324.4 $\pm$ 9.8 & 2606.7 $\pm$ 32.6 & 3611.9 $\pm$ 38.1 & 2232.0 $\pm$ 27.9 \\
Hopper-M & 1403.7 $\pm$ 38.0 & 1761.5 $\pm$ 26.5 & 2364.6 $\pm$ 69.1 & 2602.9 $\pm$ 27.8 & 3633.2 $\pm$ 24.5 & 2208.3 $\pm$ 16.6 \\
Hopper-E & 1378.8 $\pm$ 30.3 & 1722.0 $\pm$ 26.1 & 2416.9 $\pm$ 68.9 & 2607.3 $\pm$ 61.2 & 3674.4 $\pm$ 134.1 & 2214.6 $\pm$ 31.4 \\
Walker2d-M & 1374.8 $\pm$ 10.2 & 1742.9 $\pm$ 36.0 & 2389.5 $\pm$ 74.3 & 2598.9 $\pm$ 13.9 & 3683.5 $\pm$ 102.1 & 2233.5 $\pm$ 17.3 \\
Walker2d-E & 1354.9 $\pm$ 8.1 & 1720.9 $\pm$ 14.2 & 2401.2 $\pm$ 61.1 & 2619.7 $\pm$ 5.2 & 3617.7 $\pm$ 33.7 & 2249.1 $\pm$ 26.9 \\
\bottomrule
\end{tabular}
}
\caption{Same-hardware training time (seconds).}
\label{tab:runtime_train}
\end{table}

QDFM is faster in training than BCQ, IQL, and AWAC, and remains comparable to CQL across environments.

\begin{table}[h]
\resizebox{\textwidth}{!}{%
\centering
\small
\begin{tabular}{lcccccc}
\toprule
Dataset & GreedyQ & CQL & BCQ & AWAC & IQL & QDFM \\
\midrule
HalfCheetah-M & 0.22 $\pm$ 0.00 & 0.21 $\pm$ 0.01 & 0.40 $\pm$ 0.03 & 0.22 $\pm$ 0.00 & 0.21 $\pm$ 0.00 & 6.28 $\pm$ 0.04 \\
HalfCheetah-E & 0.21 $\pm$ 0.00 & 0.23 $\pm$ 0.03 & 0.39 $\pm$ 0.00 & 0.22 $\pm$ 0.02 & 0.22 $\pm$ 0.00 & 6.38 $\pm$ 0.02 \\
Hopper-M & 0.29 $\pm$ 0.00 & 0.30 $\pm$ 0.02 & 0.45 $\pm$ 0.05 & 0.28 $\pm$ 0.01 & 0.23 $\pm$ 0.00 & 6.26 $\pm$ 0.02 \\
Hopper-E & 0.33 $\pm$ 0.01 & 0.35 $\pm$ 0.04 & 0.48 $\pm$ 0.09 & 0.34 $\pm$ 0.03 & 0.24 $\pm$ 0.03 & 6.28 $\pm$ 0.09 \\
Walker2d-M & 0.26 $\pm$ 0.02 & 0.24 $\pm$ 0.01 & 0.44 $\pm$ 0.01 & 0.24 $\pm$ 0.01 & 0.23 $\pm$ 0.01 & 6.39 $\pm$ 0.08 \\
Walker2d-E & 0.23 $\pm$ 0.00 & 0.23 $\pm$ 0.00 & 0.46 $\pm$ 0.05 & 0.24 $\pm$ 0.00 & 0.24 $\pm$ 0.00 & 6.35 $\pm$ 0.12 \\
\bottomrule
\end{tabular}
}
\caption{Same-hardware action generation time (milliseconds).}
\label{tab:runtime_infer}
\end{table}

The higher inference time of QDFM reflects the use of CTMC-based sampling, which enables a more expressive generative policy compared to single-pass baselines.

\subsection{Role of Initialization Distribution $\hat{\mu}$}

We study the effect of the initialization distribution $\hat{\mu}$ used in Algorithm~\ref{alg:ctmc_inference}. The role of $\hat{\mu}$ is to define the initial distribution over actions before the CTMC evolution.

We compare three choices:
(i) $\hat{\mu}$ constructed from the dataset,
(ii) a single-action initialization using the dataset action,
(iii) a uniform distribution over actions.

Using a single action collapses the initial support, preventing the model from exploring alternative high-value actions. In contrast, uniform initialization introduces many out-of-support actions, which can degrade performance in offline RL.

As shown in Table~\ref{tab:muhat_ablation}, both alternatives perform significantly worse than $\hat{\mu}$, demonstrating that a data-informed initialization is critical for balancing support and diversity in offline RL.

\begin{table}[h]
\centering
\small
\begin{tabular}{lccc}
\toprule
Dataset & QDFM ($\hat{\mu}$) & single-a & uniform \\
\midrule
Hopper-Medium   & \textbf{35.81 $\pm$ 18.80} & 0.75 $\pm$ 0.01 & 1.82 $\pm$ 0.03 \\
Walker2d-Medium & \textbf{7.90 $\pm$ 5.65}  & 0.94 $\pm$ 0.81 & 3.90 $\pm$ 2.98 \\
\bottomrule
\end{tabular}
\caption{Ablation on the initialization distribution $\hat{\mu}$. Using a data-informed initialization significantly improves performance compared to single-action or uniform initialization.}
\label{tab:muhat_ablation}
\end{table}

\subsection{CTMC Sampler Correctness in Intrinsically Discrete Control}
\label{app:ctmc_correctness}
This section isolates the sampling mechanism to prove that the CTMC converges to the target distribution, independent of potential offline RL training errors. We validate the CTMC sampler independently of continuous-control artifacts
by evaluating it in environments with inherently discrete action spaces.

\paragraph{Setup.}
We consider \texttt{CartPole-v1} and \texttt{Acrobot-v1}.
Given a reference categorical policy $\pi(\cdot\mid s)$, we construct a CTMC with
rates
\[
u(a',a\mid s)=\alpha\,\pi(a'\mid s), \qquad a'\neq a,
\]
and simulate the chain over $t\in[0,1]$ using Euler discretization.
% We study the effects of jump intensity $\alpha$, integration resolution
% $K_{\text{steps}}$, and the source distribution $p_0$.

\paragraph{Baselines.}
We compare CTMC inference against:
(i) a random policy,
(ii) direct sampling from the reference categorical policy $\pi(\cdot\mid s)$, and
(iii) the deterministic argmax policy induced by $\pi$.
These baselines isolate the effect of CTMC-based action generation from the
quality of the underlying policy.

\paragraph{CartPole.}
When initialized from target policy ($p_0=\pi$), CTMC sampling closely matches direct sampling
from the target policy across a wide range of $\alpha$ and $K_{\text{steps}}$.
When initialized from a fixed action, performance improves monotonically with
$\alpha$, reflecting stronger mixing toward the target distribution.

\begin{table}[h]
\centering
\begin{minipage}[t]{0.56\textwidth}
  \vspace{0pt}
  \centering
  \caption{
  Baselines on \texttt{CartPole-v1} (200 episodes).
  The solved fraction denotes the proportion of episodes achieving the task
  success threshold.
  }
  \label{tab:cartpole_baselines}
  \vspace{1mm}
  \small
  \setlength{\tabcolsep}{3pt}
  \resizebox{\linewidth}{!}{%
  \begin{tabular}{lcc}
  \toprule
  Policy & Return (mean $\pm$ std) & Solved fraction \\
  \midrule
  Random & $22.46 \pm 13.05$ & $0.00$ \\
  $\pi(\cdot\mid s)$ sampling & $400.25 \pm 100.94$ & $0.41$ \\
  Argmax$(\pi)$ & $500.00 \pm 0.00$ & $1.00$ \\
  \bottomrule
  \end{tabular}
  }
\end{minipage}
\hfill
\begin{minipage}[t]{0.39\textwidth}
  \vspace{0pt}
  \centering
  \caption{
  CTMC sampling on \texttt{CartPole-v1} (200 episodes).
  With $p_0=\pi$, CTMC closely matches direct sampling across rate scales, while
  fixed-action initialization benefits monotonically from increasing $\alpha$
  due to faster mixing.
  }
  \label{tab:cartpole_ctmc}
  \vspace{1mm}
  \small
  \setlength{\tabcolsep}{4pt}
  \resizebox{\linewidth}{!}{%
  \begin{tabular}{c|cc}
  \toprule
  Rate scale $\alpha$ & Init = $\pi$ & Init = fixed \\
  \midrule
  $1.0$  & $410.5$ & $42.1$ \\
  $2.0$  & $408.8$ & $209.9$ \\
  $5.0$  & $406.4$ & $402.5$ \\
  $10.0$ & $403.1$ & $394.4$ \\
  \bottomrule
  \end{tabular}
  }
\end{minipage}
\end{table}

% \begin{table}[h]
% \centering
% \caption{
% Baselines on \texttt{CartPole-v1} (200 episodes).
% The solved fraction denotes the proportion of episodes achieving the task success
% threshold.
% }
% \label{tab:cartpole_baselines}
% \begin{tabular}{lccc}
% \toprule
% Policy & Return (mean $\pm$ std) & Solved fraction \\
% \midrule
% Random & $22.46 \pm 13.05$ & $0.00$ \\
% $\pi(\cdot\mid s)$ sampling & $400.25 \pm 100.94$ & $0.41$ \\
% Argmax$(\pi)$ & $500.00 \pm 0.00$ & $1.00$ \\
% \bottomrule
% \end{tabular}
% \end{table}

% \begin{table}[h]
% \centering
% \begin{tabular}{c|cc}
% \toprule
% Rate scale $\alpha$ & Init = $\pi$ & Init = fixed \\
% \midrule
% $1.0$  & $410.5$ & $42.1$ \\
% $2.0$  & $408.8$ & $209.9$ \\
% $5.0$  & $406.4$ & $402.5$ \\
% $10.0$ & $403.1$ & $394.4$ \\
% \bottomrule
% \end{tabular}
% \caption{
% CTMC sampling on \texttt{CartPole-v1} (200 episodes).
% With $p_0=\pi$, CTMC closely matches direct sampling across rate scales, while
% fixed-action initialization benefits monotonically from increasing $\alpha$ due
% to faster mixing.
% }
% \label{tab:cartpole_ctmc}
% \end{table}

\paragraph{Acrobot.}
On Acrobot, CTMC sampling substantially outperforms random actions and closely
tracks the reference categorical policy.
Increasing $\alpha$ and/or $K_{\text{steps}}$ improves robustness to poor
initialization, consistent with faster mixing within the fixed simulation
horizon.

\begin{table}[h]
\centering
\begin{minipage}[t]{0.42\columnwidth}
  \centering
  \caption{
    CTMC sampling on \texttt{Acrobot-v1}.
    CTMC outperforms random actions and remains robust to poor initialization as
    the rate scale or number of steps increases. Reported CTMC results correspond
    to the best-performing configuration from the hyperparameter sweep.
  }
  \label{tab:acrobot_ctmc}
  \vspace{1mm}
  \small
  \setlength{\tabcolsep}{4pt}
  \renewcommand{\arraystretch}{1.1}
  \begin{tabular}{lcc}
    \toprule
    Policy & Mean return & Std \\
    \midrule
    Random & $-498.0$ & $19.9$ \\
    Argmax heuristic & $-172.6$ & $106.6$ \\
    Sample $\pi$ & $-156.9$ & $82.4$ \\
    CTMC (init=$\pi$) & $-147.16$ & $71.06$ \\
    CTMC (init=uniform) & $-164.69$ & $84.03$ \\
    \bottomrule
  \end{tabular}
\end{minipage}
\hfill
\begin{minipage}[t]{0.55\columnwidth}
  \centering
  \caption{
    CTMC ablations on \texttt{Acrobot-v1}.
    We sweep the rate scale $\alpha$ and number of Euler steps $K_{\text{steps}}$
    under two initializations.
    Higher rate scales and increased integration steps improve robustness to
    uniform initialization and yield performance comparable to initialization
    from $\pi$.
  }
  \label{tab:acrobot_ctmc_sweep}
  \vspace{1mm}
  \small
  \setlength{\tabcolsep}{3pt}
  \renewcommand{\arraystretch}{1.1}
  \begin{tabular}{cccc}
    \toprule
    $\alpha$ & $K_{\text{steps}}$ & init & Return (mean $\pm$ std) \\
    \midrule
    1.0 & 20 & $\pi$ & $-151.26 \pm 61.00$ \\
    1.0 & 20 & uniform & $-214.72 \pm 110.40$ \\
    1.0 & 50 & $\pi$ & $-165.85 \pm 92.80$ \\
    1.0 & 50 & uniform & $-214.95 \pm 106.44$ \\
    \midrule
    2.0 & 20 & $\pi$ & $-152.91 \pm 91.67$ \\
    2.0 & 20 & uniform & $-166.50 \pm 92.61$ \\
    2.0 & 50 & $\pi$ & $-161.34 \pm 94.17$ \\
    2.0 & 50 & uniform & $-181.06 \pm 97.78$ \\
    \midrule
    5.0 & 20 & $\pi$ & $\mathbf{-147.16 \pm 71.06}$ \\
    5.0 & 20 & uniform & $-164.69 \pm 84.03$ \\
    5.0 & 50 & $\pi$ & $-157.01 \pm 91.54$ \\
    5.0 & 50 & uniform & $-170.28 \pm 88.09$ \\
    \bottomrule
  \end{tabular}
\end{minipage}
\end{table}

Across both environments, the CTMC sampler exhibits predictable and interpretable
behavior: jump intensity controls convergence, initialization effects diminish
with stronger mixing, and Euler discretization remains numerically stable.
These experiments validate the correctness of the discrete CTMC construction
independently of offline RL training.

\subsection{CartPole: Toy two-objective preference conditioning}
\label{cartpole_morl_appendix}

We construct a simple two-objective variant of \texttt{CartPole} to check
preference conditioning in a fully discrete setting.
In addition to the task objective $R_1$ (episode length), we define a smoothness
objective $R_2$ that penalizes action switches.
The policy is conditioned on a continuous preference parameter
$\omega\in[0,1]$, and a single model is evaluated across preferences without
retraining.

Figure~\ref{fig:cartpole_morl} shows that varying $\omega$ induces a smooth
traversal of the Pareto front between task performance and smoothness.
This demonstrates that preference signals can be injected directly into the
endpoint distribution and respected by CTMC-based sampling, enabling zero-shot
preference generalization in discrete action spaces. The task return measures standard CartPole performance (episode length), while the smoothness return penalizes action switching; together they form a conflicting two-objective control problem used to verify that the preference-conditioned CTMC policy smoothly traverses the Pareto frontier as the preference parameter varies.

\begin{figure*}[h]
\centering
\begin{minipage}[t]{0.60\textwidth}
  \vspace{0pt}
  \centering
  \includegraphics[width=\linewidth]{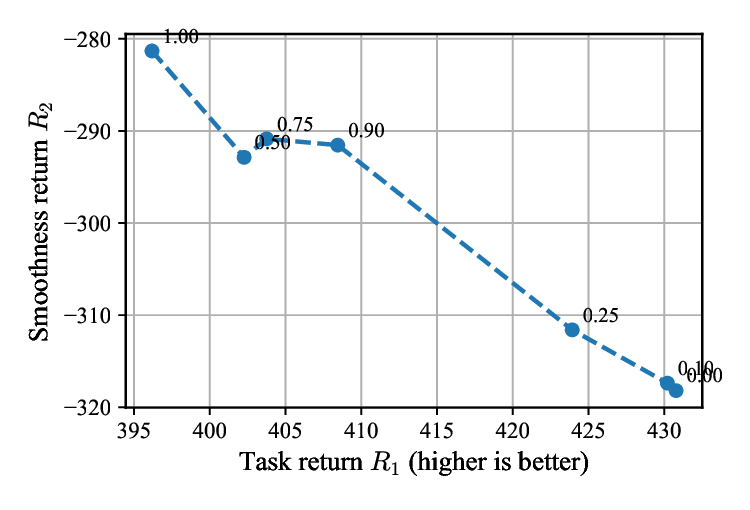}
  \caption{
    Toy two-objective \texttt{CartPole}.
    Sweeping the preference parameter $\omega$ induces a smooth traversal of the
    Pareto front without retraining. Points correspond to different $\omega$ values
    and are connected for visualization. This curve demonstrates that our method
    learns not only two extreme behaviors but also intermediate trade-offs.
  }
  \label{fig:cartpole_morl}
\end{minipage}
\hfill
\begin{minipage}[t]{0.37\textwidth}
  \vspace{0pt}
  \centering
  % \captionsetup{type=table}
  \caption{
    Toy preference sweep on \texttt{CartPole-v1} using the preference-conditioned
    CTMC policy.
    We report task return $R_1$, smoothness objective $R_2$ (negative number of action
    switches), scalarized return $R_\omega$, and the fraction of episodes achieving
    $R_1\ge 475$.
    All results use $\alpha=5$, $K_{\mathrm{steps}}=20$, and $k_{\pi}=20$ with 200 evaluation episodes.
  }
  \label{tab:cartpole_morl_sweep}
  \vspace{1mm}
  \scriptsize
  \setlength{\tabcolsep}{3.5pt}
  \renewcommand{\arraystretch}{1.5}
  \begin{tabular}{crrrr}
    \toprule
    $\omega$ & $R_1$ (task) & $R_2$ (smooth) & $R_\omega$ & Solved frac. \\
    \midrule
    0.00 & 430.78 & -318.19 & -318.19 & 0.55 \\
    0.10 & 430.21 & -317.38 & -242.62 & 0.52 \\
    0.25 & 423.93 & -311.60 & -127.72 & 0.54 \\
    0.50 & 402.27 & -292.87 & 54.70   & 0.43 \\
    0.75 & 403.76 & -290.86 & 230.10  & 0.43 \\
    0.90 & 408.45 & -291.54 & 338.46  & 0.50 \\
    1.00 & 396.19 & -281.32 & 396.19  & 0.41 \\
    \bottomrule
  \end{tabular}
\end{minipage}
\end{figure*}

% \begin{figure}[h]
% \centering
% \includegraphics[width=0.55\linewidth]{figures/pareto_frontier.eps}
% \caption{
% Toy two-objective \texttt{CartPole}.
% Sweeping the preference parameter $\omega$ induces a smooth traversal of the
% Pareto front without retraining. Points correspond to different $\omega$ values and are connected for visualization. This curve demonstrates that our method not only learns two extreme behaviors but it learns every possible trade-off in between."
% }
% \label{fig:cartpole_morl}
% \end{figure}

% \label{sec:cartpole_morl_appendix}
% \begin{table}[h]
% \centering
% \begin{tabular}{crrrr}
% \toprule
% $\omega$ & $R_1$ (task) & $R_2$ (smooth) & $R_\omega$ & Solved frac. \\
% \midrule
% 0.00 & 430.78 & -318.19 & -318.19 & 0.55 \\
% 0.10 & 430.21 & -317.38 & -242.62 & 0.52 \\
% 0.25 & 423.93 & -311.60 & -127.72 & 0.54 \\
% 0.50 & 402.27 & -292.87 & 54.70 & 0.43 \\
% 0.75 & 403.76 & -290.86 & 230.10 & 0.43 \\
% 0.90 & 408.45 & -291.54 & 338.46 & 0.50 \\
% 1.00 & 396.19 & -281.32 & 396.19 & 0.41 \\
% \bottomrule
% \end{tabular}
% \caption{
% Toy preference sweep on \texttt{CartPole-v1} using the preference-conditioned
% CTMC policy.
% We report task return $R_1$, smoothness objective $R_2$ (negative number of action
% switches), scalarized return $R_\omega$, and the fraction of episodes achieving
% $R_1\ge 475$.
% All results use $\alpha=5$, $K_{\mathrm{steps}}=20$, $k_{\pi}=20$, and
% $c_{\mathrm{sw}}=1$ with 200 evaluation episodes.
% }
% \label{tab:cartpole_morl_sweep}
% \end{table}

\subsection{Multi-Objective Fork: Qualitative Evidence of Discrete Multimodality}
\label{app:mofork_heatmaps}

To provide a self-contained qualitative visualization of discrete multimodality
and preference control, we include an additional toy gridworld, \texttt{Multi-Objective Fork}. The offline dataset contains two disjoint successful strategies: trajectories that
go left to \emph{Goal A} (high reward in objective~1) and trajectories that go right
to \emph{Goal B} (high reward in objective~2). Importantly, the ``average'' path
goes through a central \emph{trap} state that incurs a catastrophic penalty in both
objectives; no dataset trajectory enters the trap.
This setting isolates the key failure mode of naive action averaging in discrete
spaces, while directly testing whether our CTMC policy maintains multiple modes.

Figure~\ref{fig:mofork_heatmaps} visualizes the induced state occupancy of the learned
CTMC policy, where brightness reflects how frequently a state is visited during
sampling. Each panel corresponds to a different preference vector $\omega$:
left objective ($\omega=(1,0)$), balanced ($\omega=(0.5,0.5)$), and right objective
($\omega=(0,1)$).
Bright regions indicate states that the policy consistently traverses, while dark
regions indicate avoided states.
Notably, the central trap state remains unvisited even under balanced preferences,
demonstrating that the learned discrete flow preserves multiple modes rather than
collapsing to an invalid average behavior.

\begin{figure}[h]
  \centering
  \includegraphics[width=\linewidth]{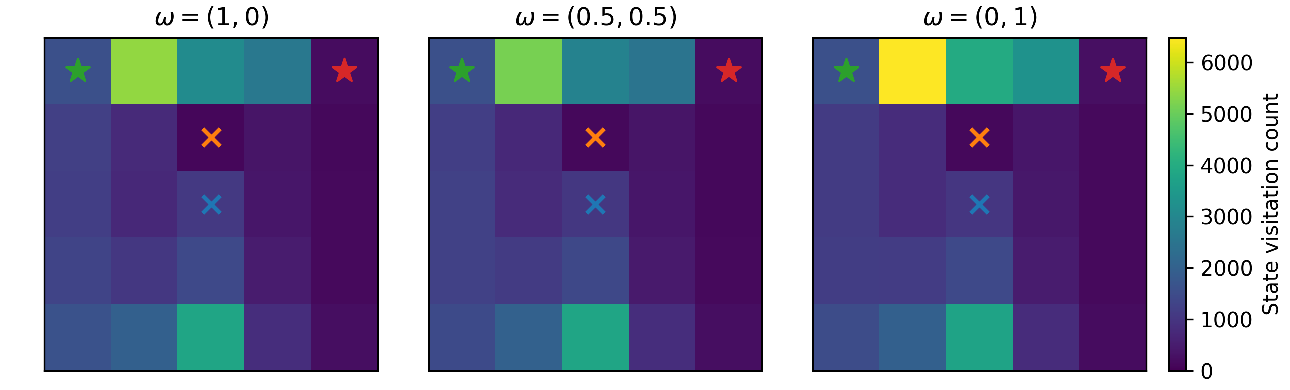}
  \caption{\textbf{Multi-Objective Fork: state visitation heatmaps under preference conditioning.}
  Each panel shows state visitation counts aggregated over rollouts from the learned
  CTMC policy under a different preference vector $\omega$:
  left objective ($\omega=(1,0)$), balanced ($\omega=(0.5,0.5)$), and right objective
  ($\omega=(0,1)$). The key qualitative signature is \emph{trap avoidance}: the trap
  state remains unvisited (dark) even under balanced preferences, while trajectories
  concentrate along the left or right corridor depending on $\omega$.
  This indicates that the learned discrete flow maintains \emph{multiple modes}
  instead of collapsing to an ``average'' (trap-entering) behavior, while still
  allowing controllable traversal between modes via $\omega$.}
  \label{fig:mofork_heatmaps}
\end{figure}

This Fork visualization complements the main benchmark results by offering a direct,
human-interpretable demonstration that (i) the learned discrete CTMC policy is
\emph{multimodal} and does not collapse into invalid interpolations, and (ii) the
preference signal $\omega$ meaningfully controls which mode is selected at test time.

\subsection{Preference-Conditioned Multi-objective Offline
Reinforcement Learning on Resource-Gathering}
\label{app:resource_gathering}

We evaluate the proposed discrete CTMC framework in a fully multi-objective,
preference-conditioned offline reinforcement learning setting using the
\texttt{Resource-Gathering} benchmark from MO-Gymnasium.
This environment features an intrinsically discrete control problem with three
conflicting objectives (e.g., collecting different resources while avoiding
unsafe regions), making it a natural testbed for assessing whether a single
learned model can adapt its behavior to different objective preferences at
test time.

\paragraph{Environment.}
In \texttt{Resource-Gathering}, an agent navigates a gridworld to collect
different types of resources.
Each episode yields a three-dimensional return vector consisting of
(i) collected gold,
(ii) collected diamonds, and
(iii) a safety or survival signal that penalizes hazardous behavior.
These objectives are inherently conflicting: aggressive resource collection
typically improves gold or diamond returns at the cost of safety.

\paragraph{Offline data and training.}
We construct an offline dataset using a mixture of heuristic behavior policies
(e.g., gold-seeking, diamond-seeking, safety-oriented, and random strategies),
ensuring broad coverage of the state--action space while inducing a nontrivial
support mismatch across objectives.
A single preference-conditioned CTMC policy is trained on this dataset.
At evaluation time, the policy is queried with a continuous preference vector
$\omega \in \Delta^{2}$ without retraining.

\paragraph{Coverage of achievable behaviors.}
Figure~\ref{fig:rg_pareto} compares the outcomes achieved by the learned CTMC
model against the behavior policy and a random baseline.
Each point corresponds to the average episodic outcome obtained under a
different preference vector.
Across preferences, the CTMC-based method consistently attains a wider range of
objective outcomes than the behavior policy, including combinations that achieve
higher resource collection while maintaining reasonable safety.
In contrast, the behavior policy remains confined to a narrower set of outcomes
reflecting the limitations of the offline data, and the random baseline performs
poorly across all objectives.

% \begin{figure}[h]
%   \centering
%   \includegraphics[width=0.55\columnwidth]{figures/rg_pareto_overlay.eps}
%   \caption{
%   Achieved objective outcomes on \texttt{Resource-Gathering} at the best validation
%   checkpoint.
%   Each point corresponds to a different preference vector $\omega$.
%   Colors indicate the achieved safety level.
%   The CTMC-based method reaches a broader range of outcomes than the behavior
%   policy and random baseline, including combinations with higher resource
%   collection.
%   }
%   \label{fig:rg_pareto}
% \end{figure}

\paragraph{Response to preference changes.}
To illustrate how the learned model reacts to changes in user preference,
Figure~\ref{fig:rg_pref_sweep} fixes the weight on safety and varies the relative
importance assigned to the two resource objectives.
As the preference is adjusted, the resulting behavior changes smoothly and
predictably: the agent collects more of the favored resource while reducing
emphasis on the other, and the scalarized return peaks near the intended tradeoff.
At extreme preferences, performance degrades in an interpretable way (e.g.,
overly aggressive behavior leads to early termination), indicating that the
model captures meaningful tradeoffs rather than arbitrary interpolation.

% \begin{figure}[h]
%   \centering
%   \includegraphics[width=0.55\columnwidth]{figures/rg_preference_sweep.eps}
%   \caption{
%   Effect of varying preferences on \texttt{Resource-Gathering} with fixed safety
%   weight.
%   As the relative importance of gold versus diamond changes, the learned CTMC
%   model adjusts its behavior smoothly, achieving outcomes aligned with the
%   specified preference.
%   Extreme preferences lead to predictable failure modes due to excessive risk-taking.
%   }
%   \label{fig:rg_pref_sweep}
% \end{figure}

\begin{figure}[h]
  \centering
  \begin{minipage}[t]{0.48\columnwidth}
    \centering
    \includegraphics[width=\linewidth]{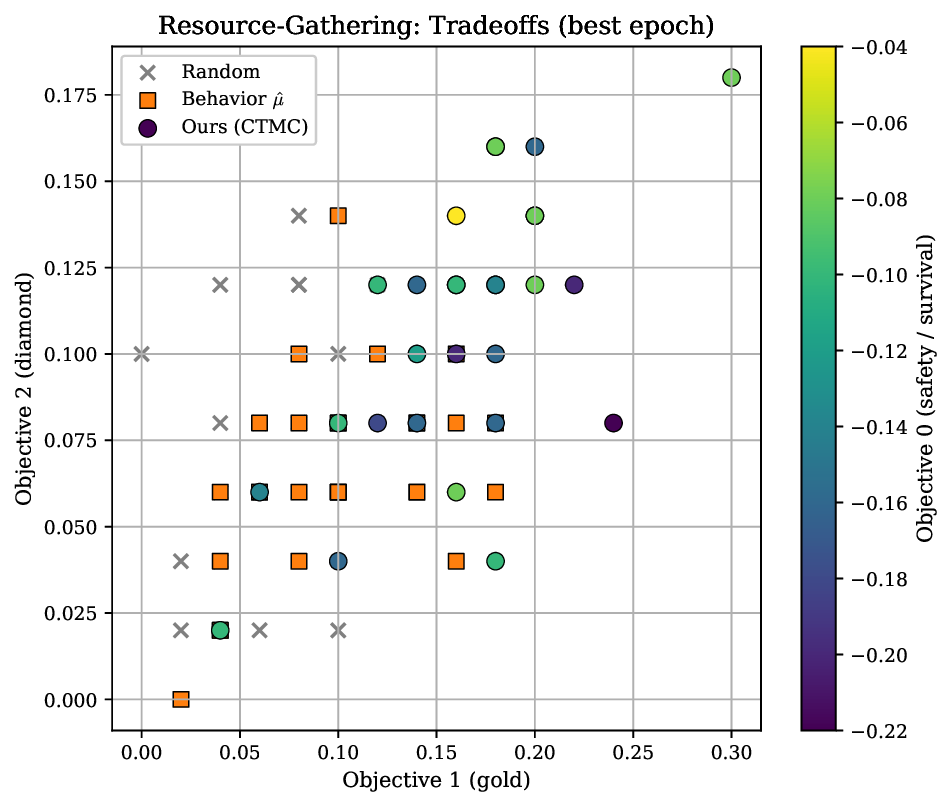}
    \captionof{figure}{
      Achieved objective outcomes on \texttt{Resource-Gathering} at the best
      validation checkpoint.
      Each point corresponds to a different preference vector $\omega$.
      Colors indicate the achieved safety level.
      The CTMC-based method reaches a broader range of outcomes than the behavior
      policy and random baseline, including combinations with higher resource
      collection.
    }
    \label{fig:rg_pareto}
  \end{minipage}
  \hfill
  \begin{minipage}[t]{0.48\columnwidth}
    \centering
    \includegraphics[width=\linewidth]{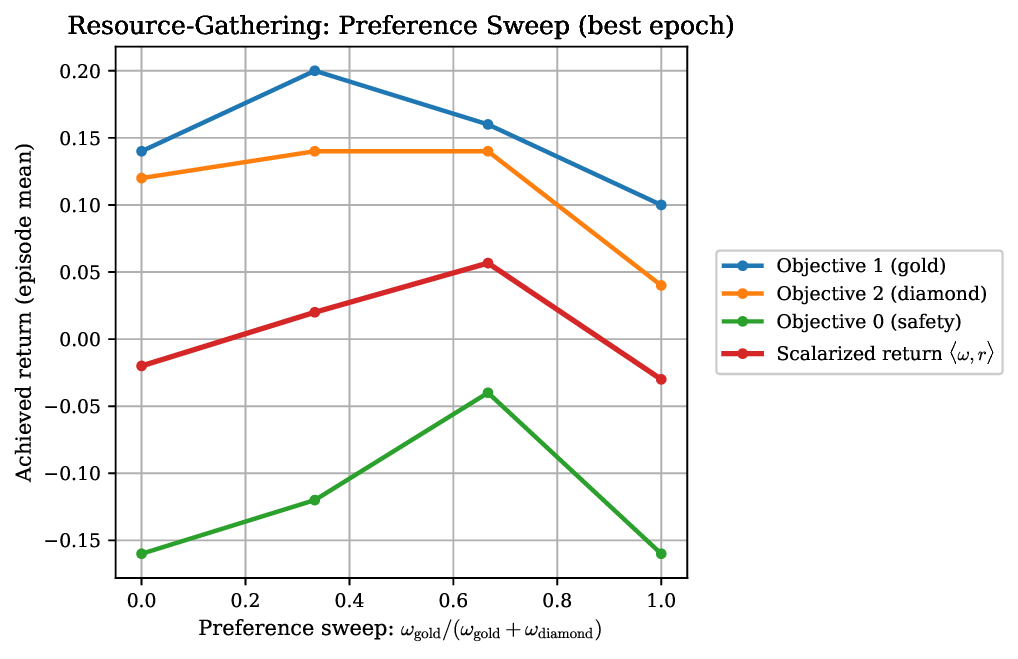}
    \captionof{figure}{
      Effect of varying preferences on \texttt{Resource-Gathering} with fixed
      safety weight.
      As the relative importance of gold versus diamond changes, the learned
      CTMC model adjusts its behavior smoothly, achieving outcomes aligned with
      the specified preference.
      Extreme preferences lead to predictable failure modes due to excessive
      risk-taking.
    }
    \label{fig:rg_pref_sweep}
  \end{minipage}
\end{figure}

These results show that the proposed discrete CTMC framework enables
effective preference-conditioned policy generation in multi-objective offline
reinforcement learning.

\subsection{Discussion on Hypervolume vs Coverage}
\label{app:hv_discussion}

Hypervolume (HV) is sensitive to extreme points on the Pareto frontier. Methods such as scalarized CQL, which train independent policies for each preference, may better optimize such extremes leading to slightly higher HV as we can see in Table~\ref{tab:morl_results}. However, this comes at the cost of learning multiple policies and potentially reduced coverage of intermediate trade-offs.

In contrast, QDFM learns a single preference-conditioned policy that models the entire Pareto front. This leads to improved diversity and coverage, as reflected in metrics such as the number of non-dominated solutions (ND) and spacing (SP), while maintaining competitive HV.

\subsection{Multi-Agent Reinforcement Learning: Matrix Game}
\label{app:marl}

To empirically validate the multi-agent extension in Sec.~\ref{sec:marl_extension}, we consider a two-agent coordination game with discrete joint actions and conflicting objectives.
Each episode draws a context state $s\in\{0,1\}$ and both agents select actions $a^1,a^2\in\{0,1,2\}$ (one-step horizon).
Rewards are two-dimensional, $\mathbf r=(r_1,r_2)$, where $r_1$ encourages coordinated choices while $r_2$ penalizes costly coordination, inducing a natural trade-off controlled by the preference $\omega$.

\paragraph{Offline dataset.}
We construct an offline dataset by rolling out a mixture of two coordinated behavior policies:
one prefers joint action $(0,0)$ and the other prefers $(1,1)$, producing diverse but structured joint behavior.
We train (i) independent behavior cloning (factorized per-agent BC), (ii) centralized behavior cloning (BC on joint actions), and (iii) our factorized CTMC policy with centralized multi-objective guidance.

\paragraph{Results.}
Fig.~\ref{fig:marl_pareto} shows that our method produces a broad, preference-controllable set of Pareto points, whereas BC baselines yield a collapsed front with weak dependence on $\omega$. Table~\ref{tab:marl_matrix} shows that our factorized CTMC policy with centralized multi-objective guidance achieves near-perfect coordination when preferences favor task reward, and smoothly trades off coordination for safety as preferences change. In contrast, both independent and centralized behavior cloning gives a collapsed trade-off and is insensitive to preferences. Our method achieves substantially higher hypervolume and spread with coordination being actively modulated by $\omega$ which shows near-perfect coordination when $\omega$ emphasizes task reward, and reduced coordination when $\omega$ emphasizes the safety objective, demonstrating that centralized preference-conditioned guidance can coordinate factorized CTMC dynamics.

% \noindent%
\begin{figure}[ht]
\centering
\includegraphics[width=0.55\linewidth]{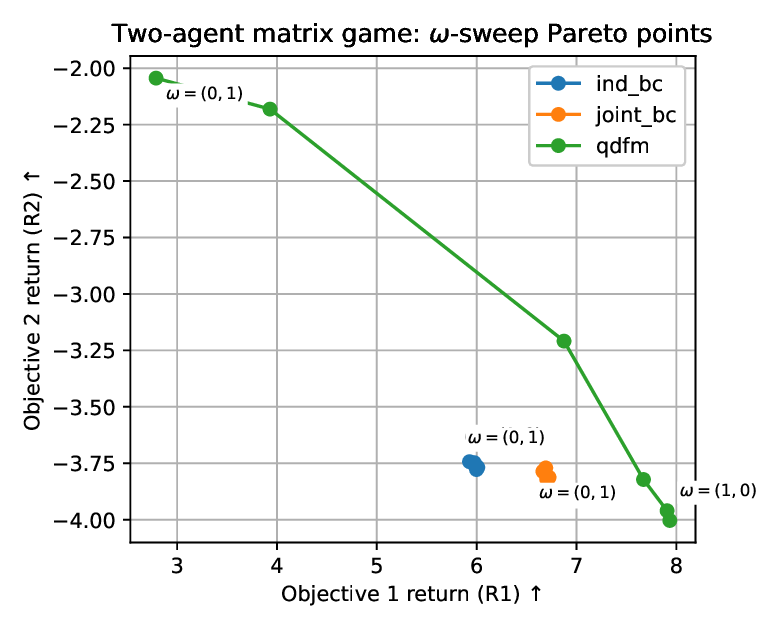}
\caption{
Two-agent matrix game (offline).
Mean Pareto points (point in objective space is Pareto if no other achievable point improves one objective without degrading another) obtained by sweeping the preference vector $\omega$,
averaged over three seeds, curve traces the trade-off between objective~1 (task reward) and
objective~2 (safety/cost).
Our factorized CTMC policy with centralized multi-objective guidance (QDFM)
produces a broad, preference-controllable Pareto front, whereas independent
and centralized behavior cloning yield collapsed fronts with weak dependence
on $\omega$ with the endpoints for $\omega=(1,0)$ and $\omega=(0,1)$
coinciding, reflecting the fact that these methods are insensitive to preference
conditioning.
}
\label{fig:marl_pareto}
\end{figure}

\begin{table}[ht]
\centering
\caption{In a two-agent offline matrix game we compare 2D hypervolume (higher is better), spread (higher indicates broader coverage here), and Coord which is the coordination rate at $\omega=(0,1)$ (lower indicates the policy deliberately
trades coordination for safety).}
\label{tab:marl_matrix}
\resizebox{\columnwidth}{!}{%
\begin{tabular}{lccc}
\toprule
Method & HV $\uparrow$ & Spread $\uparrow$ & Coord@$\omega{=}(0,1)$ \\
\midrule
Independent BC & 30.145 $\pm$ 0.704 & 0.043 $\pm$ 0.041 & 0.748 $\pm$ 0.025 \\
Centralized BC & 32.786 $\pm$ 0.284 & 0.039 $\pm$ 0.015 & 0.824 $\pm$ 0.012 \\
QDFM (ours)    & \textbf{47.667 $\pm$ 0.529} & \textbf{0.616 $\pm$ 0.072} & 0.283 $\pm$ 0.028 \\
\bottomrule
\end{tabular}%
}
\end{table}

\subsection{SMAC Dataset Details}
\label{app:smac_datasets}

The 2s3z dataset is medium-replay (877 episodes, avg return 7.03), and the 3m dataset is medium (4000 episodes, avg return 10.06).

For 5m\_vs\_6m, we generated expert, medium, medium-replay, and poor datasets (1000 episodes each), with average returns of 18.29, 11.83, 6.84, and 1.41 respectively.

Following prior work~\citep{shao2023counterfactual}, datasets are generated using QMIX policies at different training stages to obtain varying quality levels.

\subsection{SMAC Win Rates}
\label{app:smac_winrates}
\begin{table}[ht]
\centering
\caption{Win rate comparison on SMAC benchmarks.}
\small
\setlength{\tabcolsep}{4pt}
\begin{tabular}{llcccccc}
\toprule
Map & Dataset & BC & MACQL & CFCQL & OMAR & OMIGA & QDFM \\
\midrule

\multirow{1}{*}{2s3z}
& medium-replay 
& 0.57 $\pm$ 0.03 
& 0.42 $\pm$ 0.10 
& 0.52 $\pm$ 0.05 
& 0.45 $\pm$ 0.10 
& 0.48 $\pm$ 0.12 
& \textbf{0.59 $\pm$ 0.04} \\

\midrule

\multirow{1}{*}{3m}
& medium 
& 0.18 $\pm$ 0.02 
& 0.31 $\pm$ 0.09 
& 0.31 $\pm$ 0.03 
& 0.13 $\pm$ 0.00 
& \textbf{0.33 $\pm$ 0.03} 
& 0.29 $\pm$ 0.08 \\

\midrule

\multirow{2}{*}{5m\_vs\_6m}
& expert 
& 0.43 $\pm$ 0.03 
& 0.41 $\pm$ 0.17 
& \textbf{0.51 $\pm$ 0.07} 
& 0.29 $\pm$ 0.06 
& 0.49 $\pm$ 0.15 
& 0.48 $\pm$ 0.09 \\

& medium 
& 0.17 $\pm$ 0.01 
& 0.16 $\pm$ 0.04 
& \textbf{0.20 $\pm$ 0.12} 
& 0.17 $\pm$ 0.15 
& 0.19 $\pm$ 0.08 
& 0.19 $\pm$ 0.06 \\

\bottomrule
\end{tabular}
\end{table}

\subsection{Multi-Goal Gridworld: Full Analysis}
\label{app:multigoal}

\paragraph{Environment.}
An $11 \times 11$ gridworld with start position $(5,0)$, a trap column at $x{=}5$
for $y \in \{3,\dots,9\}$, and $K$ goal cells on the top row ($y{=}10$).
Actions are \{up, down, left, right\}. Reaching a goal gives $+10$ reward,
entering the trap gives $-10$ and ends the episode. The offline dataset
contains 250 expert trajectories per goal, balanced across all $K$ modes.
We train each method once and evaluate over 200 episodes with 5 seeds.

\paragraph{Results.}
Figure~\ref{fig:multigoal_app} and Table~\ref{tab:multigoal} show results
across $K \in \{2,3,4,5\}$. All baselines achieve comparable return to QDFM,
confirming that our method does not sacrifice performance. However, every
baseline reaches exactly one out of $K$ goals (mode coverage $= 1/K$)
because they select actions via argmax, which is inherently unimodal.
QDFM consistently recovers nearly all modes, and the gap grows as $K$
increases. Trap entry rates remain near zero for all methods. Notably, BCQ and GreedyQ frequently enter the trap because their Q-values for out-of-distribution actions (going up at the branching state) are poorly estimated without conservative penalties, further illustrating the risks of non-generative approaches in multimodal settings.

\begin{figure}[ht]
\centering
\includegraphics[width=\linewidth]{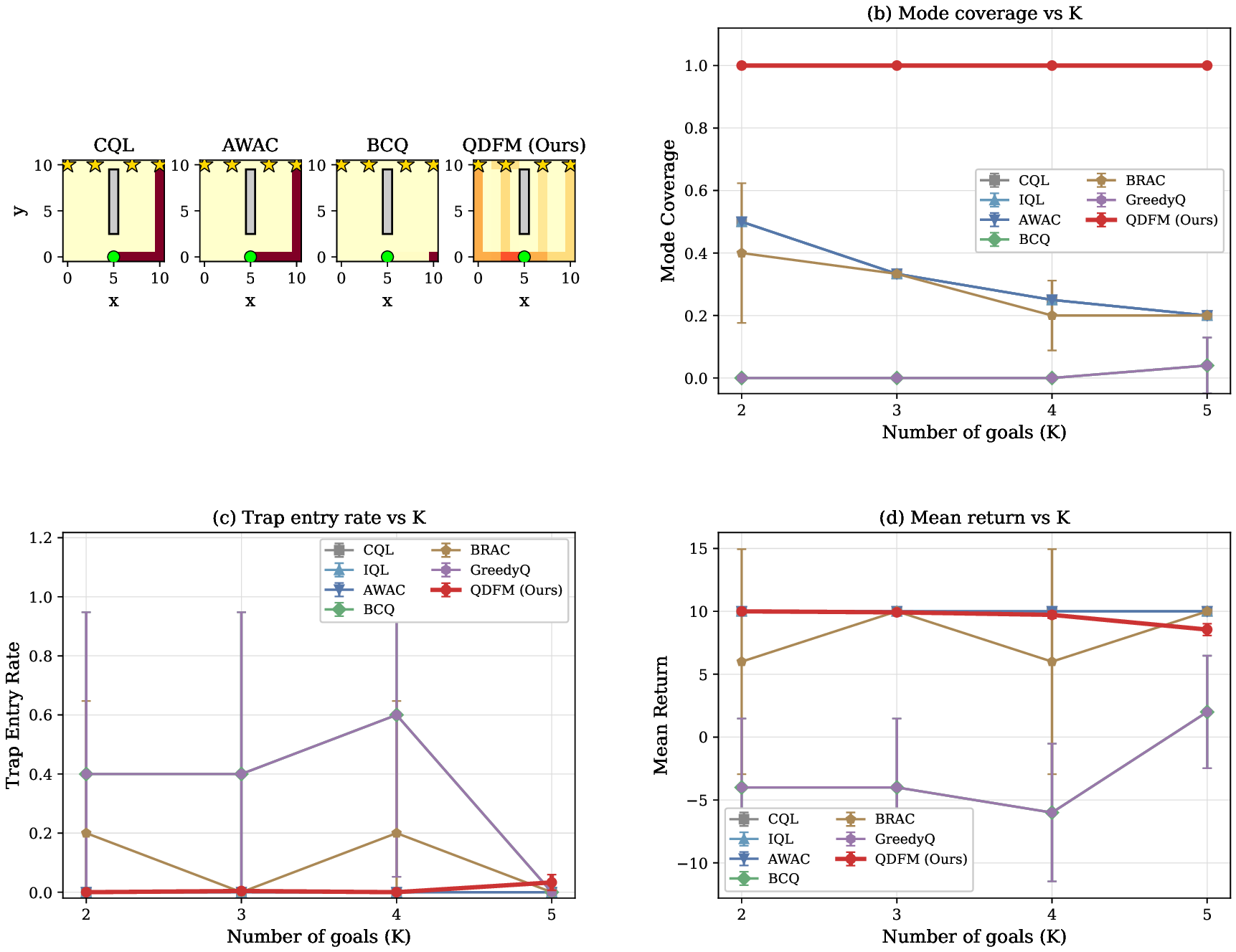}
\caption{Multi-goal gridworld analysis. (a)~State visitation heatmaps at $K{=}4$.
(b)~Mode coverage vs.\ number of goals. (c)~Trap entry rate vs.\ number of goals.
(d)~Mean return vs.\ number of goals. All values are mean $\pm$ std over 5 seeds.}
\label{fig:multigoal_app}
\end{figure}

\begin{table}[ht]
\centering
\caption{Multi-goal gridworld results. Mean $\pm$ std over 5 seeds, 200 eval episodes each.
Mode coverage is the fraction of distinct goals reached by a single trained policy.}
\label{tab:multigoal}
\small
\setlength{\tabcolsep}{4pt}
\begin{tabular}{lcccccccc}
\toprule
 & \multicolumn{2}{c}{$K=2$} & \multicolumn{2}{c}{$K=3$}
 & \multicolumn{2}{c}{$K=4$} & \multicolumn{2}{c}{$K=5$} \\
\cmidrule(lr){2-3}\cmidrule(lr){4-5}\cmidrule(lr){6-7}\cmidrule(lr){8-9}
Method & Return & Mode Cov & Return & Mode Cov & Return & Mode Cov & Return & Mode Cov \\
\midrule
CQL     & 10.00 & 0.50 & 10.00 & 0.33 & 10.00 & 0.25 & 10.00 & 0.20 \\
IQL     & 10.00 & 0.50 & 10.00 & 0.33 & 10.00 & 0.25 & 10.00 & 0.20 \\
AWAC    & 10.00 & 0.50 & 10.00 & 0.33 & 10.00 & 0.25 & 10.00 & 0.20 \\
BCQ     & $-$4.00{\scriptsize$\pm$5.5} & 0.00 & $-$4.00{\scriptsize$\pm$5.5} & 0.00 & $-$6.00{\scriptsize$\pm$5.5} & 0.00 & 2.00{\scriptsize$\pm$4.5} & 0.04 \\
BRAC    & 6.00{\scriptsize$\pm$8.9} & 0.40 & 10.00 & 0.33 & 6.00{\scriptsize$\pm$8.9} & 0.20 & 10.00 & 0.20 \\
GreedyQ & $-$4.00{\scriptsize$\pm$5.5} & 0.00 & $-$4.00{\scriptsize$\pm$5.5} & 0.00 & $-$6.00{\scriptsize$\pm$5.5} & 0.00 & 2.00{\scriptsize$\pm$4.5} & 0.04 \\
\midrule
QDFM \textbf{(Ours)} & \textbf{10.00} & \textbf{1.00} & \textbf{9.92}{\scriptsize$\pm$0.1} & \textbf{1.00} & \textbf{9.72}{\scriptsize$\pm$0.3} & \textbf{1.00} & \textbf{8.55}{\scriptsize$\pm$0.5} & \textbf{1.00} \\
\bottomrule
\end{tabular}
\end{table}

\paragraph{Practical relevance.}
Recovering multiple modes from offline data matters whenever a single
strategy is insufficient. In \emph{multi-route navigation}, an autonomous
vehicle trained on historical driving logs should retain awareness of
alternative routes rather than committing to one. For example, if the preferred route is
blocked, a multimodal policy can immediately switch to a known alternative
without retraining. In \emph{personalized treatment planning}, clinical
datasets often contain multiple successful protocols for different patient
subgroups. In such cases, a mode-collapsing policy serves only one subgroup well, while a
multimodal policy can generate diverse treatment options for downstream
selection by a clinician.

\section{Experimental Protocol and Reproducibility}
\label{app:exp_protocol}

This section gives the experimental details needed to reproduce the results in
Sec.~\ref{sec:experiments}.

\subsection{Benchmarks}

We evaluate QDFM on four groups of tasks. The first is discretized MuJoCo from
Minari, where continuous actions are converted into a finite set and all methods
use the same discretization. The main comparison in
Sec.~\ref{sec:mujoco_discrete_bench} uses $K_{\mathrm{act}}=256$, following the
ablation in Appendix~\ref{app:action_divisions}. The second is intrinsically
discrete multi-objective control, including Deep Sea Treasure and Resource
Gathering (Appendix~\ref{app:resource_gathering}). These tasks test whether one
preference-conditioned policy can adapt to different objective weights at test
time. The third is offline multi-agent RL on SMAC
(Appendix~\ref{app:smac_datasets}), where the goal is to test whether the
factorized CTMC scales to combinatorial joint action spaces. The fourth is the
multi-goal gridworld diagnostic (Appendix~\ref{app:multigoal}), which isolates
the multimodal decision-making advantage of generative policies over
deterministic baselines.

\subsection{Baselines}

For discretized MuJoCo we compare against AWAC, AWBC, BCQ, CQL, GreedyQ, BRAC,
and BoltzQ using the same discrete action set. BoltzQ shares the same critic as
QDFM and applies categorical Boltzmann action selection, which isolates the
benefit of the CTMC generative policy. For multi-objective discrete tasks we
compare against scalarized CQL and MODULI. For SMAC we compare against BC,
MACQL, CFCQL, OMAR, and OMIGA under the same dataset setting. For the
multi-goal gridworld we compare against CQL, IQL, AWAC, BCQ, BRAC, and GreedyQ.

\subsection{Training Setup}

The rate model $u_{\theta,t}$ is an MLP with two hidden layers of width 256, as
described in Appendix~\ref{ap:imp_details}. QDFM uses the three-phase training
procedure in Appendix~\ref{app.full_algo}. Phase 1 learns an unweighted discrete
flow model, Phase 2 learns the critic, and Phase 3 performs Q-weighted policy
improvement. For the MuJoCo comparison all methods use batch size 256. Baselines
are trained for 1M gradient steps. QDFM uses
$(K_1, K_2, K_3) = (150\mathrm{k}, 500\mathrm{k}, 350\mathrm{k})$, support
size $M=64$, guidance scale $\hat{\beta}=20$, and CTMC step size $h=0.05$. For
the multi-goal gridworld we use $\hat{\beta}=5$ (adjusted for the smaller
Q-value scale), $M=16$, and train baselines for 30k steps and QDFM for
$(K_1, K_2, K_3) = (3\mathrm{k}, 20\mathrm{k}, 10\mathrm{k})$. All models are
optimized with Adam and the same optimizer settings are kept fixed across seeds
for each method.

\subsection{Evaluation Setup}

For MuJoCo, each trained policy is evaluated in the corresponding environment
and returns are normalized following the standard D4RL protocol. The main table
reports mean and standard deviation over 5 seeds. For CTMC sampler diagnostics
we evaluate 100 episodes per seed when reporting sensitivity to the rate scale
and Euler steps. For the multi-goal gridworld in
Appendix~\ref{app:multigoal}, each method is evaluated over 200 episodes with 5
seeds. For SMAC, returns and win rates are averaged over 5 seeds.

\subsection{Randomness and Seeds}

Each seed controls model initialization, minibatch sampling, endpoint sampling,
preference sampling, CTMC sampling, and environment evaluation randomness. For
benchmarks with fixed offline data the dataset is shared across methods and only
the training and evaluation randomness changes across seeds. For generated
datasets such as the multi-goal gridworld and selected SMAC splits, the data
generation procedure is deterministic and shared across all methods and seeds.

\subsection{Statistical Reporting}

All tables and plots report mean $\pm$ sample standard deviation unless stated
otherwise. For a metric value $x_i$ from seed $i$ we compute
\[
\bar{x} = \frac{1}{n}\sum_{i=1}^{n} x_i,
\qquad
s = \sqrt{\frac{1}{n-1}\sum_{i=1}^{n}(x_i - \bar{x})^2}.
\]
We report the sample standard deviation $s$, not the standard error of the mean.
We do not report formal $p$-values because several experiments use a small number
of seeds and we do not assume normally distributed returns. We interpret
overlapping error bars conservatively and focus on consistent trends across
tasks, baselines, and ablations.

\subsection{Compute Environment}

Experiments were run on a Slurm-managed GPU cluster. Each run used one GPU
unless otherwise stated. The main MuJoCo and runtime experiments used NVIDIA
GPUs with 8 to 16 CPU cores and 16GB to 64GB memory per job. The multi-goal
gridworld and Resource Gathering experiments can run on a single GPU in under
one hour. SMAC experiments used one GPU together with CPU workers for the
StarCraft II environment process.

\begin{table}[h]
\centering
\caption{Compute resources for the reported experiments.}
\label{tab:compute_resources}
\small
\setlength{\tabcolsep}{4pt}
\begin{tabular}{p{0.25\linewidth}p{0.22\linewidth}p{0.15\linewidth}p{0.28\linewidth}}
\toprule
Experiment group & GPU type & Memory & Typical run time per seed \\
\midrule
Discretized MuJoCo & A100 80GB class & 32 to 64GB & 0.6 to 2.1 hours \\
Multi-objective gridworlds & CPU or one GPU & 16 to 32GB & Under one hour \\
Resource Gathering & CPU or one GPU & 16 to 32GB & One to a few hours \\
Multi-goal gridworld & One GPU & 8 to 16GB & Under 10 minutes \\
SMAC & One GPU with CPU workers & 32 to 64GB & Map dependent \\
\bottomrule
\end{tabular}
\end{table}

\subsection{Code and Data}

The experiments use public benchmark environments and offline datasets when
available. The anonymized code submitted as supplementary zip file includes scripts for data loading, action
discretization, QDFM training, baseline training, evaluation, and plotting. For
generated datasets including the multi-goal gridworld and SMAC splits the
file includes the generation scripts and the dataset statistics reported
in the corresponding appendix sections.

\section{Broader Impacts}
\label{app:broader_impacts}

This work introduces a general purpose offline RL algorithm and does not target
a specific deployed application. The ability to recover multimodal policies from
offline data could benefit domains such as personalized medicine and autonomous
navigation, where maintaining diverse strategies improves robustness and
adaptability. As with any policy learning method trained on historical data,
there is a risk that biases present in the offline dataset are reproduced by the
learned policy. Practitioners should audit offline datasets for
representativeness before deployment. We do not foresee risks specific to our
method beyond those common to offline RL and generative modeling research.

\clearpage

\section{Complete Training procedure}
\label{app.full_algo}

For completeness, we provide pseudocode for the full Q-weighted conditional
DFM training pipeline used in our experiments.

% \vspace{-1pt}

\begin{algorithm}[H]
\footnotesize
\caption{Q-weighted discrete flow matching for offline RL}
\label{alg:discrete_algo}
\begin{algorithmic}[1]
\STATE {\bfseries Input:} Rate network $u_{\theta,t}(a',a\mid s,\omega)$, guidance scale $\hat{\beta}$, batch size $B$
\STATE {\bfseries Input:} Conditional target rate $u_t(\cdot,\cdot\mid Z)$ and conditional path $p_{t\mid Z}$ (Eq.~\ref{eq:dfm-mixture-path})
\STATE {\bfseries Input:} Offline dataset $\mathcal{D}=\{(s,a,\mathbf r,s')\}$, learned behavior policy $\hat\mu(a\mid s)$ trained on $\mathcal D$\\
\STATE {\bfseries Input:} Base distribution $p_0$ over $\mathcal A$, Epochs $K_1$ (DFM warm-up), $K_2$ (critic warm-up), $K_3$ (policy improvement), support size $M$, renew freq. $K_{\mathrm{renew}}$

\vspace{2pt}
\STATE \textit{Phase 1: flow matching model warm-up}
\FOR{$k=1$ {\bfseries to} $K_1$}
  \FOR{batch $\{(s_i,a_i)\}_{i=1}^B \sim \mathcal{D}$}
    \STATE Sample $\omega_i\sim Uniform$, sample $A_{i,0}\sim p_0$,\; sample $A_{i,1}\sim \hat\mu(\cdot\mid s_i)$
    \STATE Sample $t_i\sim \mathrm{Unif}[0,1]$ and $A_{i,t} \sim p_{t_i\mid Z_i}$ where $Z_i=(s_i,\omega_i,A_{i,0},A_{i,1})$
    \STATE Minimize 
    \[
      \mathcal{L}_{\mathrm{DFM}}(\theta)
      =
      \frac{1}{B}\sum_{i=1}^B \big\|\!\Big(
      u_{t_i}(\cdot, A_{i,t}\mid Z_i) - u_{\theta,t_i}(\cdot, A_{i,t}\mid s_i,\omega_i)
      \Big)\big\|_2^2.
    \]
  \ENDFOR
\ENDFOR

\vspace{2pt}
\STATE \textit{Phase 2: critic learning (Boltzmann backup on in-support actions)}
\FOR{$k=1$ {\bfseries to} $K_2$}
  \FOR{batch $\{(s_i,a_i,\mathbf r_i,s'_i)\}_{i=1}^B \sim \mathcal{D}$}
    \STATE Sample $\omega_i\sim Uniform$. Define $Q_{\omega_i}(s,a):=\langle \omega_i,\mathbf Q_\psi(s,a)\rangle$.
    \IF{$k \bmod K_{\mathrm{renew}} = 1$}
      \STATE Sample support $\{a'_{ij}\}_{j=1}^M \sim \hat\mu(\cdot\mid s'_i)$
    \ENDIF
    \STATE Compute scalar Q-values: $q'_{ij} \gets Q_{\omega_i}(s'_i, a'_{ij})$ for $j=1\dots M$.
    \STATE $V_{\omega_i}(s'_i) \gets \frac{\sum_{j=1}^M \exp(\hat{\beta} q'_{ij})\,q'_{ij}}{\sum_{j=1}^M \exp(\hat{\beta} q'_{ij})}$
    \STATE $y_i \gets \langle \omega_i,\mathbf r_i\rangle + \gamma V_{\omega_i}(s'_i)$
    \STATE Update $\psi$ by minimizing $\mathcal{L}_Q(\psi)=\frac{1}{B}\sum_{i=1}^B (Q_{\omega_i}(s_i,a_i)-y_i)^2$
  \ENDFOR
\ENDFOR

\vspace{2pt}
\STATE \textit{Phase 3: Q-weighted iterative policy improvement (Eq.~\ref{eq:qcdfm} with in-support approximation)}
\FOR{$k=1$ {\bfseries to} $K_3$}
  \FOR{each batch $\{(s_i,a_i)\}_{i=1}^B \sim \mathcal{D}$}
    \STATE Sample $\omega_i\sim Uniform$, set $A_{i,0} \gets a_i$
    \IF{$k \bmod K_{\mathrm{renew}} = 1$}
      \STATE Sample endpoints $\{A_{i,j,1}\}_{j=1}^M$ using the rate model
$u_{\theta,t}(\cdot,\cdot\mid s_i,\omega_i)$ by running the CTMC inference procedure
(Algorithm~\ref{alg:ctmc_inference}).
    \ENDIF
    \STATE Compute scalar Q-values: $ q_{ij} \gets Q_{\omega_i}(s_i,A_{i,j,1})
      =
      \langle \omega_i, \mathbf{Q}_\psi(s_i,A_{i,j,1}) \rangle$ for $j=1\dots M$.
    \STATE $w(s_i,\omega_i,A_{i,j,1}) \gets \frac{\exp(\hat{\beta} q_{ij})}{\sum_{\ell=1}^M \exp(\hat{\beta} q_{i\ell})}$
    \STATE Sample $t_{i,j}\sim \mathrm{Unif}[0,1]$ and $A_{i,j,t}\sim p_{t_{i,j}\mid Z_{ij}}$ where $Z_{ij}=(s_i,\omega_i,A_{i,0},A_{i,j,1})$
    \STATE Update $\theta$ by minimizing
    \[
      \mathcal{L}_{\mathrm{QDFM}}(\theta)
      =
      \frac{1}{B}\sum_{i=1}^B\sum_{j=1}^M
      w(s_i,\omega_i,A_{i,j,1})\,
     \big\| \Big(
      u_{t_{i,j}}(\cdot,A_{i,j,t}\mid Z_{ij}) - u_{\theta,t_{i,j}}(\cdot,A_{i,j,t}\mid s_i,\omega_i)
      \Big)\big\|_2^2.
    \]
  \ENDFOR
\ENDFOR

\STATE {\bfseries Output:} terminal CTMC distribution induced by $u_{\theta,t}$.
\end{algorithmic}
\end{algorithm}

\end{document}